\theoremstyle{plain}
\newtheorem{theorem}{Theorem}[section]
\newtheorem{lemma}[theorem]{Lemma}
\newtheorem{corollary}[theorem]{Corollary}
\theoremstyle{definition}
\newtheorem{definition}[theorem]{Definition}
\newtheorem{assumption}[theorem]{Assumption}
\theoremstyle{remark}
\newtheorem{claim}{Claim}[section]
\DeclareMathOperator{\E}{\mathbb{E}}
\DeclareMathOperator{\uH}{\mathcal{H}}
\DeclareMathOperator{\U}{\mathcal{U}}
\DeclareMathOperator{\I}{\mathcal{I}}
\DeclareMathOperator{\F}{\mathcal{F}}
\DeclareMathOperator{\G}{\mathcal{G}}
\DeclareMathOperator{\uL}{\mathcal{L}}
\DeclareMathOperator{\J}{\mathcal{J}}
\DeclareMathOperator{\T}{\mathcal{T}}
\DeclareMathOperator{\A}{\mathcal{A}}
\DeclareMathOperator{\lc}{\textbf{c}}
\DeclareMathOperator{\ls}{\textbf{s}}
\let\Pr\relax\DeclareMathOperator{\Pr}{\mathbb{P}}
\DeclareMathOperator*{\argmin}{arg\,min}
\title{InfoNCE Loss Provably Learns Cluster-Preserving Representations}
\author{Advait Parulekar\thanks{Department of Electrical and Computer Engineering, 
The University of Texas at Austin, Austin, TX,  USA. \qquad\qquad\{advaitp@utexas.edu, liamc@utexas.edu, mokhtari@austin.utexas.edu, sanjay.shakkottai@utexas.edu\}.},\quad Liam Collins$^*$, \quad Karthikeyan Shanmugam\thanks{Google Research India, karthikeyanvs@google.com},\\  Aryan Mokhtari$^*$, \quad Sanjay Shakkottai$^*$}
\date{}
\begin{document}

\maketitle

\begin{abstract}
The goal of contrasting learning is to learn a representation that preserves underlying clusters by keeping  samples with similar content, e.g. the ``dogness'' of a dog, close to each other in the space generated by the representation. A common and successful approach for tackling this unsupervised learning problem is minimizing the InfoNCE loss associated with the training samples, where each sample is associated with their augmentations (positive samples such as rotation, crop) and a batch of negative samples (unrelated samples). To the best of our knowledge, it was unanswered if the representation learned by minimizing the InfoNCE loss preserves the underlying data clusters, as it only promotes learning a representation that is faithful to augmentations, i.e., an image and its augmentations have the same representation. Our main result is to show that the representation learned by InfoNCE with a finite number of negative samples is also consistent with respect to {\em clusters} in the data, under the condition that the augmentation sets within clusters may be non-overlapping but are close and intertwined,  relative to the complexity of the learning function class.
\end{abstract}

% % \begin{keywords}%
% %   Contrastive learning, Representation learning, Self-supervised learning
% % \end{keywords}
% % \newpage

\section{Introduction}

Representations pretrained on partially or completely unlabeled data are becoming ubiquitous in machine learning applications \citep{DBLP:journals/corr/abs-1802-05365,radford2021learning}, in large part due to the availability of large unlabeled datasets and significant computing power offline, and the effectiveness of self-supervised representation learning algorithms, especially contrastive learning (CL). CL aims to learn representations that treat natural images similarly to their augmentations, while maximizing the average distance between random pairs of images. In recent years CL has demonstrated numerous successes in pretraining representations with unlabeled data that learn meaningful relationships between data points that generalize well to  downstream tasks in computer vision \citep{hjelm2018learning,oord2018representation,bachman2019learning,caron2020unsupervised,chen2020simple,chen2020big,he2020momentum,henaff2020data,li2020prototypical,misra2020self,tian2020contrastive,tian2020makes} and natural language processing \citep{brown2020language,gao2021simcse,su2021tacl,radford2019language}. 

Despite its empirical success, it is not well-understood how CL learns meaningful relationships between data points. Since data are unlabeled, the only immediate structure in datasets leveraged by CL are the sets of images and their augmentations. Without further assumptions, this structure is insufficient to learn relationships between images across augmentation sets.
To circumvent this issue there are two approaches. The first is to assume that augmentation sets of semantically similar natural images overlap, i.e. for two images of cats, some of the augmentations of each image are equivalent \citep{arora2019theoretical,haochen2021provable,haochen2022beyond,shen2022connect,wang2022chaos}. However, this assumption is unlikely to hold in practice, as pointed out by recent work \citep{saunshi2022understanding}. The second approach is to consider inductive biases of the representation function class and/or optimization algorithm, and use these to argue that only certain types of representations (that capture semantic relationships) can be learned.

Prior studies have initiated the study of how inductive biases of the representation class can lead to meaningful representations in CL \citep{saunshi2022understanding,haochen2022theoretical}, but their analysis is for the spectral contrastive loss, which is not used in practice. Instead, the vast majority of CL approaches, including the widely popular SimCLR framework \citep{chen2020simple}, optimize a loss function based on the InfoNCE loss \citep{gutmann2010noise,oord2018representation}. A variety of works have studied properties of the InfoNCE loss, but due to its unwieldy log-sum structure have made restrictive assumptions, such as having infinite  \citep{wang2020understanding,robinson2020contrastive,von2021self} or only a single \citep{tosh2021contrastive,huang2021towards} negative sample(s). 

%\subsection{Main Contributions}
%\label{sec:contrib}
\vspace{0.05in}
\noindent \textbf{Main Contributions.} {\color{black} We are given a collection of clusters of natural images, with each image associated with  augmentations (positive samples such as `rotation') and a finite set of negative samples (unrelated images). Using the InfoNCE loss, our goal is to learn a $d-$dimensional representation $g \in \G$, where $g = (f_1, f_2, \ldots f_d)$ and $\{f_i\}$ are binary functions mapping images to $\{-1, 1\}$ (thus $g$ maps images on the {\color{black}  hypercube} $\mathcal{H}_d = \{-1, 1\}^d$).
% , such that when this is composed with a head, can solve all downstream binary classification tasks of interest.   
%
Our setting is one where the function class has bounded expressivity with respect to the augmentation sets, meaning that the augmentation sets within clusters are intertwined, and {hard} to separate from the rest of the cluster using functions in $\mathcal{F}$. }

% is a property both of the (Boolean) function class and the augmentations. It roughly says that 
% if a function splits two natural images in a cluster, then it also splits some image in that cluster from atleast one of its augmentations.

%% (map from images to a vertex on the Rademacher hypercube $\mathcal{H}_d = \{-1, 1\}^d$)

% that inherit this inductive bias, namely, that the function class clusters the images consistent with these downstream tasks. Formally, a representation is a mapping from images to a vertex on the Rademacher hypercube $\mathcal{H}_d = \{-1, 1\}^d$.

\vspace{0.05in}
\noindent \textbf{(Realizable Setting)} Suppose there exists a representation $g^* \in G$ that is: {\em (a)} {cluster preserving, and {\em (b)} different clusters of images are uniformly mapped over distinct vertices on the hypercube
% and furthermore the induced distribution is uniform 
(qualitatively, class-balance in the image dataset). We show that with {\em any finite number of negative samples}, the representation learned by the InfoNCE loss is cluster-preserving and uniform. Furthermore, this learned representation when composed with a two-layer ReLU head, achieves zero downstream error on any cluster-preserving binary classification task. Our proof hinges on a novel Markov Chain construction showing that the InfoNCE loss of any non-uniform representation can be improved by ``blurring'' the representation through the Markov Chain transitions. Conversely, we show that solutions to the InfoNCE loss optimized over an arbitrarily powerful representation class $\mathcal{G}_\star$  cannot have meaningful downstream performance guarantees on such tasks.

%\vspace{0.05in}
%\noindent \textbf{(Agnostic Setting)} In the non-realizable setting with arbitrary numbers and sizes of clusters, we prove  that representations in $\mathcal{G}$ that are close to uniform under a looser notion of closeness must also be clean if they are minimizers of the InfoNCE  loss on $\mathcal{G}$.  Our proof exploits the geometry induced by the inverse maps from the hypercube to the space of images by the pair of representations -- the non-clean map, and a perturbed (potentially cleaner) map. These together induce a partition over the space of images, leading to multiple cases. In each of these cases, we show that the improvement in the InfoNCE loss due to positive samples outweighs the effects of negative samples while moving from the non-clean to the cleaner representation.

%Our proof leverages a novel partitioning of the image space  that may be of independent interest for future analysis of the InfoNCE loss.

\vspace{0.05in}
\noindent \textbf{(Agnostic Setting)} In the agnostic (non-realizable) case, through sensitivity analysis, we show that for any close-to-uniform and non-cluster-preserving representation, there exists a  representation that preserves one additional cluster and thus improves the InfoNCE loss. Our proof uses a novel partitioning of the image space that is of independent interest for future analysis of the InfoNCE loss.

\subsection{Related Work}

% \textbf{Understanding contrastive learning.} 
% Contrastive learning encompasses a large class of self-supervised methods that aim to learn expressive representations by maximizing similarity between different views of the same data point and minimizing the average similarity between all pairs of points \cite{wei2021pretrained}. The vast majority of these works optimize InfoNCE-based losses \cite{gutmann2010noise,oord2018representation}, including the widely popular SimCLR framework \cite{chen2020simple}.
% , which differs from the InfoNCE only in its sampling method for negative samples, and in its lack of inclusion of the positive similarity in the uniformity loss.
Several works have aimed to explain the success of contrastive learning in recent years. 
\cite{wang2021understanding} and \cite{wang2020understanding} showed empirically that CL encourages aligned and uniform representations, and improving alignment and uniformity improves downstream performance. The work by \citet{chen2021intriguing} generalizes the InfoNCE loss to a larger family of losses with alignment and uniformity terms weighted according to a hyperparameter. %and shows downstream performancecan be improved by upweighting the positive loss representations and \cite{chen2021intriguing}. % and theoretical approaches. %, our understanding
%grasp of its behavior 
Early theoretical studies attributed the success of CL to its proclivity to maximize the mutual information between augmentations of the same image \citep{bachman2019learning}, but later work cast doubt on this viewpoint by showing that optimizing a tighter bound on the mutual information leads to worse performance \citep{mcallester2020formal,tschannen2019mutual}.
% Many works have studied CL from a theoretical perspective in recent years \cite{tosh2021contrastive,tosh2021contrastivee,wang2020understanding,zimmermann2021contrastive,wen2021toward,chen2021intriguing,haochen2021provable,saunshi2022understanding}, among which a popular approach has been 
% \cite{arora2019theoretical} tried to explain the success of CL by its ability to recover latent variables, but their analysis suggests an upper bound on the number of negative samples which does not hold in practice \cite{nozawa2021understanding}. 
% \citet{tosh2021contrastive} and \citet{tosh2021contrastivee} theorized that conditional independence and redundancy can explain the success of CL with one negative sample per batch.
\citet{wang2020understanding} further {showed} that solutions to the InfoNCE loss are aligned and uniform in the limit of infinite negative samples per batch.

A variety of works have studied CL's ability to recover meaningful clusters or latent variables in the data \citep{arora2019theoretical,tosh2021contrastive,zimmermann2021contrastive,ash2021investigating,nozawa2021understanding,haochen2021provable,shen2022connect,haochen2022beyond,haochen2022theoretical,wang2022chaos,awasthi2022more,bao2022surrogate}. However, the majority of these works consider arbitrary function classes, which requires  strong assumptions on the connectedness of augmentation sets within each cluster, such as assuming positive pairs are conditionally independent given their cluster identity, in order to give downstream guarantees \citep{saunshi2022understanding}.
The work by \citet{haochen2022theoretical} is the most related work to ours, as they study function classes that induce a similar bias towards preserving clusters as ours without any assumption on the connectedness of augmentation sets. 
However, their study is focused on 
minimizing a spectral contrastive loss which serves as a surrogate for the more practically used InfoNCE loss. While studying spectral contrastive loss is enlightening and provides some intuition, it cannot be extended to the InfoNCE loss because of two major reasons: % {\color{green} First, the }
{\color{black} First, the loss function fails to highlight the role of finite batches of negative samples, which is a well-studied and key component of the InfoNCE loss \citep{awasthi2022more,bao2022surrogate,ash2021investigating,nozawa2021understanding}.  Second, their analysis does not translate to our setting because the key difficulty in our proof is to show that negative samples promote uniformity; this aspect directly follows with the spectral loss due to the covariance regularizer.}

Additional theoretical works have studied the feature learning process of CL with (stochastic) gradient descent on linear \citep{tian2022deep,ji2021power} and two-layer ReLU neural networks \citep{wen2021toward,tian2022understanding}, properties augmentations must satisfy in order for CL to be successful \citep{tian2020makes}, the role of the projection head in CL \citep{wen2022mechanism,gupta2022understanding}, and the behavior of contrastive losses in (semi-)supervised settings \citep{khosla2020supervised,zheng2021weakly,chen2022perfectly}. Several other works analyze non-contrastive self-supervised learning methods \citep{wei2020theoretical,balestriero2022contrastive,garrido2022duality,lee2021predicting}.

\section{Problem Formulation} \label{section:formulation}

Our learning task consists of {\em (i)} a pretraining phase -- wherein we are not provided supervised labels but rather only \textit{associations} between images and {\em (ii)} a supervised learning phase in which we are provided (a few) labeled data points, labeled according to some specific downstream task. 
During the pre-training phase, we do not know what the downstream task is. However, we are provided \textit{augmentations} of the raw data points that the learner knows should be classified the same way as the raw data for \textit{any} downstream task. In a sense, the augmentations can be seen as modifying the data in a way that leaves the information contained in the data invariant with respect to the downstream tasks.
% {\color{blue} Modulo sounds a bit colloquial}. 
Ideally, we aim to learn a representation that is invariant to such augmentations so that downstream learning can be statistically efficient. 
For interpretability, we will work in the setting of ``images".

\textbf{Images and augmentations.} 
%To formalize the notion 
The images consist of features that are either important for classification or which function only as irrelevant details. Inspired by \citep{von2021self}, we consider an image generation model that consists of {\em (i)} content variables denoted by $\lc$ which capture innate qualities of the images (e.g., the `catness' of a cat), and {\em (ii)} style variables denoted by $\ls$ which capture the appearance of the image (e.g., `rotation' and `crop' for creating augmentations to an image; `long tail' and `furry' for different natural images of dogs). More precisely, each image $x$ is generated according to $x= \I(\lc,\ls)$, where $\I(.,.)$ is a mapping from the space of content and style variables to the space of images. We assume that the natural images are generated such that their content variables $\lc$ belong to the set  ${C}$ and their style variables $\ls$ belong to the set ${S}_\circ$.

We further consider that there is a set of augmentations $\Lambda$, which is a set of functions mapping natural images to augmented images. 
An augmented image of an image $x$ is denoted $\mathcal{A}(x)$, where $\mathcal{A}\in \Lambda$. We assume that the augmented image preserves the content of the original image, while its style may differ from the original image. More precisely, if the original image is given by $x= \I(\lc,\ls)$, then its augmented image $\mathcal{A}(x)$ satisfies the following property:  $\A(x)= \A(\I(\lc,\ls))= \I(\lc,\ls^+)$ for some  $\ls^+\in {S}$, where the set ${S}$ contains ${S}_\circ$. So the augmented images have possibly different style variables \textit{but the same content variables} as the natural images.
Further, the set of augmented images of the image $x = \I(\lc,\ls)$ is called  its \textit{augmentation set} and is defined as $A(x) := A(\I(\lc, \ls)):=\{ \A(\I(\lc, \ls))| \A\in \Lambda\}$, with all images having equal-sized augmentation sets for simplicity. We typically refer to an image $\I(\lc, \ls)$ as $x$ and its augmentation $\I(\lc, \ls^+)\sim A(x)$ as $x^+$, where, for all sets of images $B$, $\sim B$ denotes a random sample drawn uniformly from the set $B$. 
% {Considering the structure of the augmented images, their content variables $\lc$ belong to the set  $\mathcal{C}$, the same as the original images, while their style variables $\ls$ belong to the set $\mathcal{S}$ which contains the set $\mathcal{S}_\circ$, i.e.,  $\mathcal{S}_\circ\subseteq \mathcal{S}$.}
We let ${D}$ denote the set of all images and their augmentations and  ${D}_\circ\subset{D}$ denote the set of all natural images. 

\textbf{Clusters.} A collection of images (natural and augmented) forms a cluster if their content variables are the same; thus, $x = \I(\lc, \ls)$ and $x' =  \I(\lc, \ls')$ belong to the same cluster. We denote the cluster of images with content variable $\lc$ by $\Gamma_{\lc}$, and the natural images within cluster $\Gamma_{\mathbf{c}}$ by $\Gamma_{\mathbf{c},\circ}\coloneqq \Gamma_{\mathbf{c}} \cap D_\circ$. As an example, suppose that the content $\lc$ captures the `dogness' of an image. Then, different images of dogs would have the same content, but have different style variables (e.g., furry, skinny, long ears). Recall that the augmentations of an image also share the same content, but the style might be chosen from a different set (e.g., rotation, cropping, blur). {\color{black} In other words, a single cluster is a union of many augmentation sets since not all style variations within a cluster are captured by augmentations.}

{\color{black} \textbf{Representations and heads.} We consider a function class $\F$ of binary functions, $f(x') \in \{-1, +1\}$, where $x'$ is either an image $x$ or its augmentation $x^+$. This is a function class with bounded expressivity (e.g., a class of functions that can be expressed as the thresholded output of a neuron from a neural network with bounded width and depth). We search over $d$-dimensional representations, denoted by $\G$, such that each coordinate of the representation is an element of $\F$, i.e., $g = (f_1, f_2, \ldots, f_d)$,.  Thus a representation $g \in \G := \F^d$ is simply a concatenation of $d$ binary classifiers, mapping an image $x$ to the vertex of the Rademacher hypercube\footnote{Representations in CL often map to the unit hypersphere \citep{wang2020understanding}. Here, we consider a discretized  version of this output space for two reasons: (1) it allows us to construct a naturally restricted representation function class by extending natural properties of binary classifiers, and (2) it provides a tractable setting for us to show the first results that InfoNCE prefers cluster-preserving and uniform representations with finite samples, as it is still an open problem to determine uniform arrangements of finite points on the unit hypersphere \citep{thomson1904xxiv}.
% (1) it allows for a clear notion of uniformity with finite samples, and (2) provides a tractable setting for us to show the first results that InfoNCE prefers uniform and clean representations with finite negative samples.
}
% Formally, $\G$ is taken to be the $d$-ary Cartesian power of $\F$, 
% i.e., $\G = \F^d.$ 
%Thus, a specific representation $g \in \G$ is of the form  $g = (f_1, f_2, \ldots, f_d)$, which takes input $x'$ as above and whose output is a vertex on the $d-$hypercube
$\mathcal{H}_d = \{-1, 1\}^d$.
% (we misuse the terms `Boolean' and `bit' to refer to the binary set $\{-1, +1\}$). 
Note that each $g \in \G$ denotes only the representation (e.g., the body of a neural network). For downstream tasks, a full  classifier is formed by composing $g$ with a {\em head} $\omega\in \J$ for some class $\J$ of heads (e.g., the final classification layer of a neural network).}

\textbf{Goal of pretraining.}
Ultimately, we aim to find a representation that allows for easily solving  tasks from a set of possible downstream binary classification tasks $h \in \mathcal{T}$, where each task $h$ maps an image to a binary label $\{-1, 1\}$.
These tasks are assumed to be faithful to the clusters, meaning that for any pair of images $x, x'$ belonging to the same cluster, $h(x) = h(x')$.

% $\F_c$ as possible. 
Note that during pretraining, the learner does not have any knowledge about which task will be assigned among the solvable ones. After pretraining, the learner fixes the representation but can learn a task-specific head when it encounters a downstream task.
We define the error a representation $g$ on the downstream task $h \in \mathcal{T}$ with respect to the class $\mathcal{J} \subseteq \{\omega: \mathbb{R}^d \rightarrow \mathbb{R}\}$ of allowed heads as
% The error of  $g$ on is 
% with $n$ samples allowed for learning the head is:
   \begin{align}
    \uL_{h,\mathcal{J}}(g) \coloneqq \inf_{\omega \sim \mathcal{J}}\mathbb{P}_{x \sim {\mathcal{D}}}[ \omega \circ g(x) \neq h(x) ]. %_{S,y} 
\end{align}
% where 
% % \begin{align}
% %     \omega_{S,y} \coloneqq \min_{\omega \in \J} \frac{1}{n}\sum_{x \in S} 1{\{\omega \circ g(x) \neq y(x)\}},
% % \end{align}
% i.e. $\omega_{S,y}$ minimizes the empirical loss of $\omega \circ g$ on $n$ labeled samples from the downstream task.
The error of $g$ on a family of downstream tasks $\mathcal{T}\subseteq \mathcal{F}_c$ is the worst case  error among tasks in $\mathcal{T}$:
\begin{align}
    \uL_{\T,\J}(g) \coloneqq \sup_{h \in \mathcal{T}} \uL_{h,\mathcal{J}}(g).
\end{align}
% Our main assumption describes a specific relationship between $\T$ and $\F$ and formalizes the notion of an inductive bias in $\F$ with respect to $\T$: For any function $f\in \F$ satisfying $f(\I(\lc, \ls)) \ne f(\I(\lc', \ls'))$ for $\lc\sim \lc'$, there exists $\lc^{(1)}, \ls^{(1)}$ and $\ls^{(2)}$ satisfying $f(\I(\lc^{(1)}, \ls^{(1)})) \ne f(\I(\lc^{(1)}, \ls^{(2)}))$. In other words, any function in the supervised function class $f\in \F$ that splits a cluster must also split an augmentation set within the cluster.
To summarize, for a task that is realizable \textit{with supervision} using function class $\F$, we would like to learn a representation entirely from unlabelled data such that the task on the embedded images is still realizable for $\J$.
The overall motivation is that learning $\omega\in \J$ can generally require fewer \textit{labeled} samples than learning the joint model $\omega\circ g$.

\subsection{InfoNCE loss}

We denote $\mathbb{E}_{x,x^+} \coloneqq \mathbb{E}_{x \sim {D}_{\circ},x^+\sim A(x)}$ and $\mathbb{E}_{x,x^+, \{x^-_i\}_{\ell}} \coloneqq \mathbb{E}_{x \sim {{D}_{\circ}},x^+\sim A(x), \{x^-_i\}_{\ell} \sim {D}_{\circ}^\ell}$ for simplicity.  The InfoNCE loss we consider is given by\footnote{{
   For ease of exposition
we consider the case wherein negative samples are drawn from the set of {natural images, as in \citep{wen2021toward}.} Although this may not hold in practice, it greatly simplifies the presentation of our results.}}
\begin{align}\label{eq:InfoNCE} %_{\text{SimCLR}}
    &\uL({g}) = -\underbrace{\beta \mathbb{E}_{x,x^+} [g(x)^\top g(x^+) ]}_{\text{alignment}} \;  + \; \underbrace{\mathbb{E}_{x,x^+, \{x^-_i\}_{\ell}}\left[ \log\bigg({ e^{\beta g(x)^\top g(x^+)}\!+\!\sum_{i=1}^\ell e^{\beta g(x)^\top g(x^-_{i})}  } \bigg)\right]}_{\text{uniformity}}
\end{align}
Following \citet{wang2020understanding}, we refer to the first term as the \textit{alignment} term, or the \textit{positive} term, and we refer to the second term as the \textit{uniformity} term or the \textit{negative} term.
By minimizing the first term, we are maximizing the alignment between the representation of an image and its augmentation, and by minimizing the second term we are enforcing the representation of different images to be as different as possible.

{\color{black}{The above formulation suggests that the representation learned by minimizing the above loss forces images and their augmentations to have a similar representation. What we show in the following sections is a stronger result which guarantees by minimizing the InfoNCE loss, all images that belong to the same cluster (share the same content) will have a similar representation. 
}}
\section{Bounded Function Class}

The goal of contrastive learning is to learn a representation from unlabeled samples that is useful for downstream tasks. Recall that the representations we consider map images to vertices on the Rademacher hypercube $\mathcal{H}_d$. A ``good'' representation should map images from the same cluster to the same vertex, and images from distinct clusters to distinct vertices.

% We consider a class of representations consisting of concatenated binary classifiers, with the classification boundaries of two example classifiers shown above ($f_1$ and $f_2$). The classifiers have limited complexity, which respects an clustering of images and their augmentations. Specifically, clusters are sets of images such that are not split by any ``clean'' classifier that does not pass through any augmentation set,  e.g. $f_1$ in the figure. Thus, if a classifier intersects a cluster, then it must also intersect at least one augmentation set, e.g. as the non-clean classifier $f_2$ intersects the blue and green augmentation sets in the Dogs cluster. Our motivating assumption is that the underlying clusters are related to downstream tasks.

Intuitively, this seems possible if images having the same content (i.e., from the same cluster) along with their set of augmentations are ``close and intertwined'' (see Figure~\ref{fig:spiky}), such that any function $f \in \F$ cannot split the cluster without also splitting an image from its augmentation.
Note that we do {\em not} need connected clusters with overlapping augmentations (meaning two images have the same augmentation, which is an unrealistic assumption); merely that the cluster has a complex geometry relative to the function class. 
%We formalize this through `cleanness' of representations.

%% The representation $g \in G$ should 

\subsection{Complexity of $\mathcal{F}$ Relative to Augmentations}
\label{sec:cluster-bias}
\begin{figure}
  \begin{minipage}[c]{0.45\textwidth}
    \vspace{-4mm}
    \includegraphics[height=2.2in,width=2.5in]{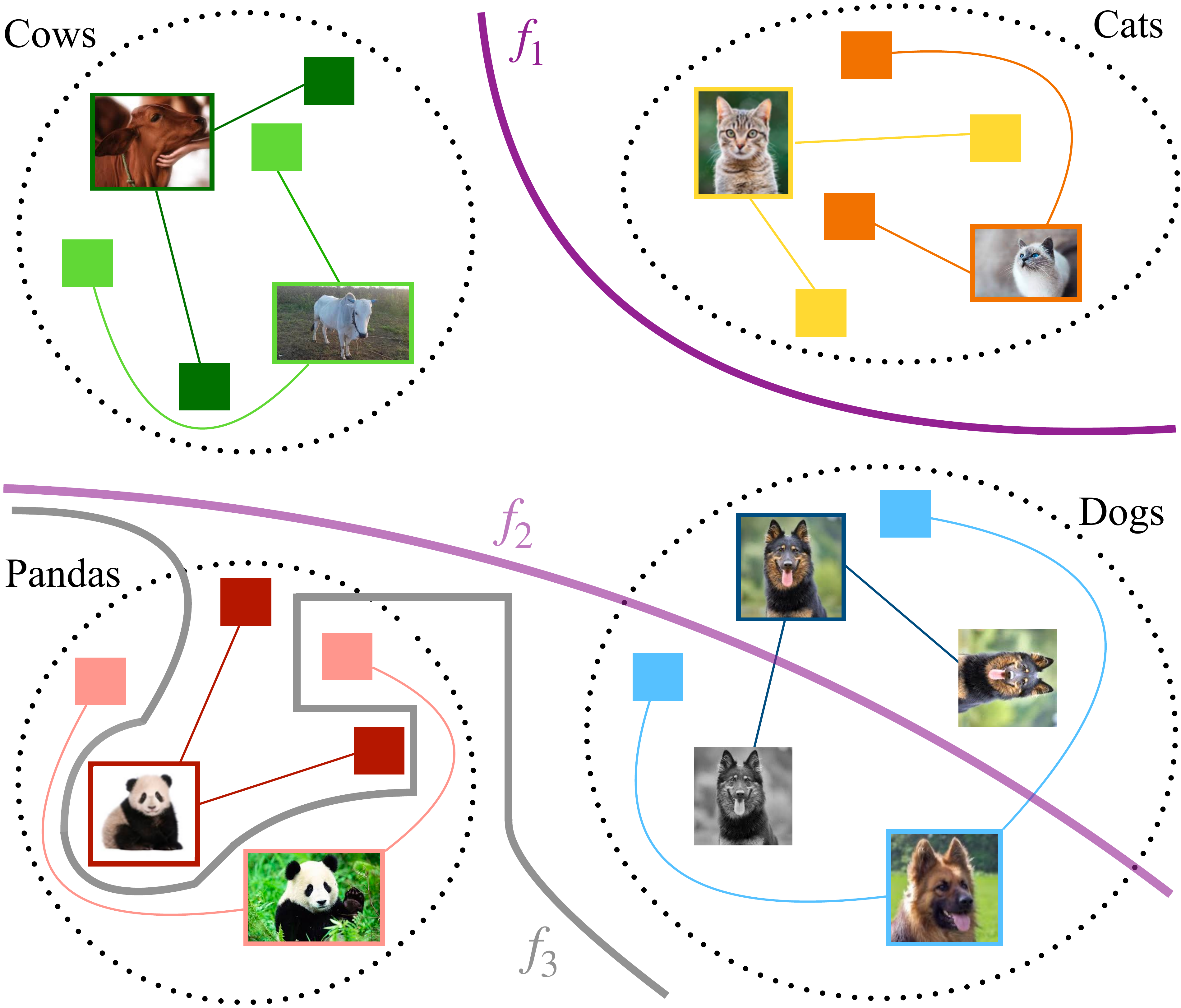}
  \end{minipage}\hfill %\hspace{-0.5in}
  \begin{minipage}[c]{0.55\textwidth}
\caption{We illustrate a setting with four clusters, and two augmentation sets within each cluster indicated by linked rectangles in distinct colors. The clean function $f_1$ does not split any augmentation or cluster (meaning, it maps all images from the same augmentation set alike, and likewise for clusters). $f_2$ splits the cluster of dogs, and in accordance with Assumption~\ref{assump:spikyaug}, also splits augmentation sets within that cluster (and is therefore non-clean).
% augmentation of one of the natural dog images, and also split the cluster of dogs (two natural dog images are split). 
Finally, $f_3$ violates Assumption~\ref{assump:spikyaug}, because it does not split any augmentation of pandas, yet it splits the pandas cluster.
Images are from the Animals V2 dataset: \cite{deepnets_2022}.\label{fig:spiky}}
\end{minipage}
\end{figure}
%\end{comment}

We formalize the notion of bounded expressivity of $\F$  relative to the geometry of clusters.
% through an assumption on the spikyness of ``clusters''.
% in the form of clusters.
% We now show that the classifier class $\mathcal{F}$ induces a clustering of images from possibly different augmentation sets, as discussed in the Introduction {\color{blue} [KS: This sentence does not read right. We want to say about an inductive bias assumption not a result we establish]}. 
We use this assumption to show in Section \ref{sec:realizable} that solutions to the InfoNCE loss optimized over $\mathcal{G}$ satisfy useful uniformity and alignment properties that lead to downstream performance guarantees on tasks that adhere to the clusters. 
%
% \textbf{Inductive bias in the form of clusters.} 
% We show that the representation class $\G$ inherits a natural inductive bias of the classifier class $\mathcal{F}$ in our setting with augmentations rather than labels. 
% The inductive bias of $\mathcal{F}$ consists of the data clusters that can result from classifiers in $\mathcal{F}$ that are ``clean'' with respect to augmentation sets.
% as illustrated in Figure \ref{fig:clean}. 
Formally, the function class $\F$ and the augmentations $\Lambda$ define a set of \textit{clean} functions $\F_c \subseteq \mathcal{F}$ that separate the data in a way that respects the augmentations. % 

\begin{definition}[Clean Function]
% \begin{align} 
%% f\in \F_c\iff f(\I(\lc, \ls)) = f(\I(\lc, \ls'))~\forall  %\ls, \ls'\in S. 
$f\in \F_c \ \ {\rm{is \ clean}} \ \iff f(x) = f(\mathcal{A}(x)))~\forall x \in  D_\circ,  \mathcal{A} \in \Lambda$. % \ls, \ls'\in S. 
\label{eq:defclean}
% \end{align}
\end{definition}
In other words, the binary function $f$ is clean if it does not separate any image and its augmentations from each other. 
% {\color{red} Note that $\mathcal{F}$ induces a bias of contrastive learning solutions  towards the set of clean classifiers $\mathcal{F}_c$, since contrastive learning aims to preserve alignment of augmentation sets.} 
Our main assumption is that if a classifier in $\mathcal{F}$ splits a cluster, then it is not clean.
% the clean classifiers in $\mathcal{F}_c$ are consistent with the underlying clusters in the data due to intertwined augmentations.
\begin{assumption}[Intertwined Augmentations] \label{assump:spikyaug}
% $f \in \mathcal{F}_c \iff f(x)= f(x') \; \forall \; x,x'\in \Gamma_{\mathbf{c}}\; \forall \; \mathbf{c} \in C$.
For all $f \in \mathcal{F}$, if $f(x)\neq f(x')$ for some $x,x'\in \Gamma_{\mathbf{c}}$, then $f(x'')\neq f(\mathcal{A}(x''))$ for some $x'' \in \Gamma_{\mathbf{c}} \cap D_{\circ}$, where $\mathcal{A}(x'') \in {A}(x'')$.
\end{assumption}
Note that if a classifier does not split any cluster, then it must be clean, since augmentation sets are contained within clusters. Thus, 
Assumption \ref{assump:spikyaug} implies that
% states that if 
% $f\notin \mathcal{F}_c$ splits natural images in a cluster then it must also split an augmentation set for some natural image in  that cluster. 
% In other words,  
$f \in \mathcal{F}_c$ if and only if $f$ labels all images with the same content (belonging to the same cluster) alike, in other words it is \textit{cluster-preserving}. 
This assumption holds if the augmentation sets within clusters are close and intertwined (they cannot be easily split from the rest of the cluster), relative to the complexity of $\mathcal{F}$. Importantly, the augmentation sets need not overlap, meaning a single image need not be an augmentation to multiple natural images, consistent with practice \citep{saunshi2022understanding}.
As prior works have pointed out \citep{saunshi2022understanding,haochen2022theoretical}, Assumption \ref{assump:spikyaug} or variants on the bounded complexity of the function class are necessary for the success of CL in the realistic setting in which the augmentation sets do not overlap.

However, while some condition like Assumption \ref{assump:spikyaug} is necessary, it is not clear if this suffices to show that CL learns useful representations. Consider the example in Figure \ref{fig:spiky}. 
% There are images of cats, dogs, pandas and cows, and the augmentation sets are sufficiently spiky relative to the function class  $\mathcal{F}$ such that any classifier in $\mathcal{F}$, e.g. $f_2$, that splits a cluster must also split an augmentation set within that cluster. 
% Note that $\bar{f}$ does not belong to $\mathcal{F}$ according to Assumption \ref{assump:spikyaug}.  Still, 
It may be the case, for instance, that CL on $\mathcal{F}$ does not learn the cluster-preserving classifiers, as in addition to trying to maximize the similarity between  images and their augmentations, CL also tries to minimize the similarity between negative pairs of images.  Thus, it may choose a non-cluster-preserving classifier such as $f_2$ in an effort to minimize similarity of negative pairs. This would lead to poor downstream generalization on tasks involving classifying dogs, since $f_2$ separates images of dogs. It thus becomes critical to quantify the extent to which non-cluster-preserving classifiers must intersect augmentation sets such that CL will not learn them, as we do in Section \ref{section:agnostic}. Before this, we must show that even if CL learns a representation consisting of cluster-preserving classifiers, this representation generalizes well, which may not happen if it maps two or more clusters to the same vertex.
For instance, if CL simply learned $d$ copies of the cluster-preserving classifier $f_1$ in Figure \ref{fig:spiky}, this representation would not be able to distinguish cows from pandas from dogs on downstream tasks. We thus desire representations to be {\em  both} cluster-preserving and uniform  such that their mapping is a bijection from clusters to vertices. Next, we show that when a cluster-preserving and uniform representation is realizable, CL with the InfoNCE loss learns it, even with finite negative samples per batch.

\section{Results for the Realizable Setting}
\label{sec:realizable}

% \section{SimCLR Leads to Clean Representations}
Our first result shows that when the dataset ${D}$ and representation class $\mathcal{G}$ allow for mapping the data \textit{uniformly} on the hypercube in a \textit{cluster-preserving} manner, then the representation learned by minimizing the InfoNCE loss over $\mathcal{G}$ results in such a mapping. We first formally define the terms \textit{uniform} and \textit{cluster-preserving} below.
 %We show that pretraining using augmentations allows us to learn a representation such that each coordinate respects the augmentations.
% {\color{red} (that is, which is like the green classifier above)}. 
%We refer to such classifiers, and such representations, as \textit{clean}. 

\begin{definition}[Cluster-Preserving]
A {\em cluster-preserving} representation $g \in \mathcal{G}$ is one that for all  {$\mathbf{c} \in C$ } and all $x,x'\in \Gamma_{\mathbf{c}}$, $g(x)= g(x')$.
% A representation $g\in \mathcal{G}$ is {\em clean} iff $g(x) = g(x')$ for all $x\equiv x'$. 
% A classifier $f\in \F$ is {\em clean} iff $f(x) = f(x')$ for all $x\sim x'$. 
\end{definition}

% A \textit{uniform} representation is one in which the push-forward measure onto the hypercube is uniform.
\begin{definition}[Uniform]
A {\em uniform} representation $g\in \mathcal{G}$ satisfies $\Pr_{x\sim {D_\circ}}[g(x)=v]=2^{-d}$ 
for all $v\in \mathcal{H}_d$.
\end{definition}
Next, our results in this section assume a cluster-preserving and uniform representation exists in $\mathcal{G}$.
\begin{assumption}[Realizability]\label{assump:realizability}
    There exists a $g \in \mathcal{G}$ that is both cluster-preserving and uniform.
\end{assumption}
In order for there to exist a representation that is both cluster-preserving and uniform, there must be an integral multiple of $2^d$ clusters in the dataset and they must be balanced.
Before stating our main result, we must prove a key lemma that shows that among all ``clean'' representations, those that minimize the InfoNCE loss are uniform. We define $\mathcal{G}_c \subseteq \mathcal{G}$ as the set of clean representations in $\mathcal{G}$ consisting of $d$ concatenated clean classifiers from $\mathcal{F}_c$. 
% In practice, training on class-unbalanced datasets tends to reduce the performance of models optimized via InfoNCE \citep{assran2022hidden}. 

% {\color{red} We denote $m := m(\mathcal{F}_c, {D})$ as the number of distinct clusters in the dataset.}

% \begin{definition}[Clean]
% A {\em clean} representation $g \in \mathcal{G}_c$ is one that is a concatenation of $d$ clean functions from $\mathcal{F}_c$.
% % A representation $g\in \mathcal{G}$ is {\em clean} iff $g(x) = g(x')$ for all $x\equiv x'$. 
% % A classifier $f\in \F$ is {\em clean} iff $f(x) = f(x')$ for all $x\sim x'$. 
% The set of clean representations is denoted $\G_c$.
% \end{definition}

\begin{lemma}\label{thm:uniform}
     If Assumptions \ref{assump:spikyaug} and \ref{assump:realizability} hold, $\beta > c \log d$ for an absolute constant $c$, $d>3$, and $\ell \geq 1$, then $g^*\in \underset{g\in \mathcal{G}_c}{\arg\min}  \ {\mathcal{L}}(g) $ if and only if $g^*$ is uniform.
\end{lemma}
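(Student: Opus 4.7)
\textbf{Reduction to distributions on $\mathcal{H}_d$.} For any $g\in\mathcal{G}_c$, the contrapositive of Assumption~\ref{assump:spikyaug} says that each clean coordinate of $g$ is cluster-preserving, so $g(x)=g(x^+)$ almost surely; hence the alignment term in~\eqref{eq:InfoNCE} equals the constant $-\beta d$. Letting $p$ denote the pushforward of $x\sim D_\circ$ through $g$ (a probability distribution on $\mathcal{H}_d$), we may write $\mathcal{L}(g)=-\beta d+\Phi(p)$, where
\begin{equation*}
\Phi(p):=\mathbb{E}_{v\sim p,\;v_1,\ldots,v_\ell\sim p}\!\left[\log\!\left(e^{\beta d}+\sum_{i=1}^\ell e^{\beta\langle v,v_i\rangle}\right)\right]
\end{equation*}
with $v,v_1,\ldots,v_\ell$ drawn i.i.d.\ from $p$. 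By Assumption~\ref{assump:realizability}, the uniform distribution $u$ on $\mathcal{H}_d$ is attained by some clean representation, so it suffices to prove that $\Phi$ is \emph{uniquely} minimized at $u$ among all probability distributions on $\mathcal{H}_d$.

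\textbf{Markov-chain ``blurring''.} The functional $\Phi$ is invariant under the hyperoctahedral group $B_d=S_d\ltimes\{\pm 1\}^d$ acting on $\mathcal{H}_d$ by coordinate permutations and sign flips, since $\langle\cdot,\cdot\rangle$ is preserved by $B_d$. Exploiting this, I will introduce a $B_d$-equivariant lazy Markov chain $P$ on $\mathcal{H}_d$ with uniform stationary distribution $u$---for instance, the chain that with probability $\epsilon\in(0,1/2)$ flips a uniformly chosen coordinate of the current state and otherwise stays put. One step of $P$ sends $p$ to $pP=(1-\epsilon)p+(\epsilon/d)\sum_{i=1}^d\tau_i p$, where $\tau_i$ is the sign flip on coordinate $i$. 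Since $\Phi(\tau_i p)=\Phi(p)$ by $B_d$-invariance, a convexity-type inequality $\Phi(pP)\le\Phi(p)$ (strict whenever $p\ne u$) will yield, on iterating the chain, $\Phi(p)\ge\lim_{t\to\infty}\Phi(pP^t)=\Phi(u)$ with equality iff $p=u$, which is exactly the reduced statement.

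\textbf{The main step and expected difficulty.} The crux is therefore the one-step inequality $\Phi(pP)\le\Phi(p)$, with strict inequality away from $u$. In the case $\ell=1$ this reduces to showing that the kernel $A_{v,u}=\log(e^{\beta d}+e^{\beta\langle v,u\rangle})$ defines a strictly positive-definite quadratic form on the zero-sum subspace of the simplex; being $B_d$-invariant, $A$ diagonalizes in the Walsh (Fourier) basis on $\mathcal{H}_d$, and I expect the condition $\beta>c\log d$ to be precisely what guarantees every non-trivial Fourier coefficient is strictly positive (for $\beta$ too small, high-frequency coefficients can change sign, and $d>3$ should rule out the low-dimensional anomalies). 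For general $\ell\ge 1$, $\Phi$ is a polynomial of degree $\ell+1$ in $p$, and the analogous inequality can be derived either by expanding $\Phi$ in the Walsh basis and verifying non-negativity coefficient by coefficient, or by bounding $\Phi(pP)-\Phi(p)$ directly using convexity of log-sum-exp together with the $B_d$-symmetry of the single-flip average. I anticipate the main technical obstacle to be the quantitative spectral lower bound at the threshold $\beta\asymp\log d$---ensuring the relevant kernel stays strictly positive definite uniformly in $d$---rather than the Markov-chain machinery itself, which is fairly robust to the precise choice of chain. A backup plan, if the direct spectral argument is awkward for $\ell>1$, is to establish strict Schur-convexity of $\Phi$ under the $B_d$-orbit partial order and use it in place of global convexity.
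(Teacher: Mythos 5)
Your skeleton is exactly the paper's: reduce over $\mathcal{G}_c$ to a functional of the pushforward distribution on $\mathcal{H}_d$ (the alignment term being the constant $-\beta d$ for clean representations), introduce a lazy single-bit-flip Markov chain with uniform stationary distribution, and argue that one step of "blurring" strictly decreases the objective unless the distribution is already uniform, so that iterating and using realizability identifies the minimizers with the uniform representations. That part of your plan is sound and matches the paper.

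The genuine gap is that the one-step strict decrease $\Phi(pP)<\Phi(p)$ for $p\neq u$ --- which is the entire technical content of the lemma and the only place where $\beta>c\log d$, $d>3$ enter --- is conjectured rather than proved. For $\ell=1$ you reduce it to strict positivity of the nontrivial Walsh coefficients of the kernel $\log\bigl(e^{\beta d}+e^{\beta\langle v,u\rangle}\bigr)$, but you only "expect" this to hold at the threshold $\beta\asymp\log d$; nothing is verified. For general $\ell\geq 1$ the objective is a degree-$(\ell+1)$ polynomial in $p$, and your proposed substitutes (coefficient-by-coefficient Walsh positivity, "convexity of log-sum-exp plus $B_d$-symmetry", or Schur-convexity as a backup) are not carried out and are not obviously workable: in particular, Jensen applied to the concave $\log$ under averaging of the samples tends to produce the inequality in the wrong direction, which is why the paper calls the monotonicity "surprising" rather than a soft convexity fact. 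The paper closes this gap quite differently in its details: it telescopes over the $\ell$ negative samples, conditions on the remaining terms (a constant $C_{-i}$ inside the $\log$), passes to the induced Markov chain on Hamming distances with an explicitly computed transition kernel, compares against the uniform case (where the one-step change is zero by stationarity), and then proves two quantitative lemmas --- that $\Pr_{\tilde{\mathcal{D}}_h}[h=0]-\Pr_{\tilde{\mathcal{U}}_h}[h=0]>0$ with $\Pr_{\tilde{\mathcal{D}}_h}[h=k]-\Pr_{\tilde{\mathcal{U}}_h}[h=k]<\binom{d}{k}\bigl(\Pr_{\tilde{\mathcal{D}}_h}[h=0]-\Pr_{\tilde{\mathcal{U}}_h}[h=0]\bigr)$, and that $\sum_{k\geq 0}\binom{d}{k}\Delta_C(k)<0$ --- the latter being precisely where $\beta>c\log d$ and $d>3$ are used. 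Until you supply an argument of comparable force (either the spectral bound you anticipate, extended to handle all $\ell$, or something like the paper's conditioning-plus-Hamming-chain computation), the proof is a plan rather than a proof.
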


% {\color{red}
% % At first glance, Lemma \ref{thm:uniform} is not surprising. However, upon further inspection the statement is not as obvious as it may initially seem. 
% Perhaps surprisingly, subtle nuances of the InfoNCE loss are required to prove Lemma \ref{thm:uniform}.
% For example, we can show that when the xx term is removed from the log sum exp in the loss, and $d=3$, minimizer is not uniform... evidence for why this statement is not obvious
% % Consider for example the distribution that maps clusters to the embedded 
% }

\begin{proof}[Proof sketch]
% of Theorem \ref{thm:uniform}]
% {\color{gray}
% By regrouping the terms in \eqref{eq:InfoNCE}, the InfoNCE loss can be written as
% %%%
% \begin{equation}\label{sth}
% {{\mathcal{L}}}(g) =\underset{x,x^+,\{x_i^-\}_{\ell}}{\mathbb{E}} \bigg[  \log\bigg(1+ \sum_{i= 1}^\ell e^{\beta g(x)^\top \left(g(x_{i}^-) - g(x^+)\right)} \bigg)\bigg].
% \end{equation}
% }
Since the optimization problem is over representations composed of clean functions, we know that for all $g\in \mathcal{G}_c$, the term $g(x)^\top g(x^+)$ in the InfoNCE loss is exactly equal to $ d$. Hence, 
% the optimizing $\mathcal{L}(g)$ over $\mathcal{G}_c$ simplifies to minimizing
by regrouping the terms in \eqref{eq:InfoNCE}, the optimization problem simplifies to:
\begin{equation}\label{tilde_loss}
\min_{g\in\mathcal{G}_c}\mathcal{L}(g) = \min_{g\in\mathcal{G}_c}\bigg\{\hat{\mathcal{L}}(g)   \coloneqq \underset{x,x^+,\{x_i^-\}_{\ell}}{\mathbb{E}} \bigg[  \log\bigg(1+ \sum_{i= 1}^\ell e^{\beta g(x)^\top g(x_{i}^-) - \beta d } \bigg)\bigg]\bigg\}
\end{equation}
% over $\mathcal{G}_c$.
By Assumption~\ref{assump:realizability}, at least one uniform representation belongs to the set $\mathcal{G}_c$. We show that it minimizes the loss $\hat{\mathcal{L}}(g)$. To do so, 
we observe that we can think of minimizing $\hat{\mathcal{L}}(g)$ as an optimization with respect to distributions over the hypercube induced by $g$. To better understand this connection, consider the random variable $g(x)$ for $x \sim D_\circ$.
% , where ${D_I}$ is the underlying distribution over images.
Further, denote the corresponding induced distribution over $g(x)$ as $\mathcal{D}_g$, i.e., $\mathcal{D}_g$ is a distribution over the vertices of the hypercube ${\mathcal H}_d$. Letting $y = g(x)$, 
% (and correspondingly, $Y = g(X)$), 
the objective above can now be rewritten in terms of these distributions:
\begin{equation}\label{eq:uniformity}
\min_{\{\mathcal{D}_g: g\in \G_c\}} \bigg\{\tilde{\mathcal{L}}(\mathcal{D}_g) \coloneqq \underset{y,\{y_i^-\}_{\ell}\sim \mathcal{D}_g}{\mathbb{E}}\bigg[  \log\bigg(1+ \sum_{i= 1}^\ell e^{\beta y^\top y_{i}^- - \beta d} \bigg) \bigg]\bigg\}
\end{equation}
% thus our problem is equivalent to:
% $\mathcal{D}_g^* = \argmin_{\{\mathcal{D}_g: g\in \G_c\}} \hat{\mathcal{L}}(\mathcal{D}_g).$

Suppose the the minimizing distribution was not uniform over the hypercube, i.e. for $\mathcal{D}_g^*\in \argmin_{\{\mathcal{D}_g: g\in \G_c\}} \tilde{\mathcal{L}}(\mathcal{D}_g)$, $\mathcal{D}_g^*\ne \mathcal{U}$, where $\mathcal{U} $ is the uniform distribution over the hypercube $\mathcal{H}_d$. 
For any sample
% \footnote{We use capital letters $Y, \{Y_i\}$ to denote random samples, and small letters $y, \{y_i\}$ to denote specific instances.} 
$y, \{y_i\}\sim \mathcal{D}_g$, consider a random walk that starts from this sample and evolves over time. For this random walk, denote the variables at time $t$ by $y^t, \{y_i^t\}$ where $y^t$ (and similarly $y_i^t$ for all $i$), with $y^0 = y$ (correspondingly $y_i^0 = y_i$). The random walk evolves from $y^{t-1}$ to $y^t$ by flipping a uniformly random bit of $y^{t-1}$ with probability $\frac{1}{2}$, and with probability $\frac{1}{2}$, not changing anything; this construction is independent across all samples. We now observe that this construction induces an irreducible, aperiodic Markov chain with uniform stationary distribution over the hypercube. 

With this construction, the critical step in our proof is a surprising ``monotonicity'' property over time:  we show in Appendix \ref{appendix:realizable} that each transition over time decreases the function value as long as $\mathcal{D}_g$ is not uniform. Intuitively, {\em ``blurring'' the distribution $\mathcal{D}_g$ decreases the objective.} 

This result implies that $g$ is a minimizer of the loss ${{\hat{\mathcal{L}}}}(g)$ if and only if $g$ is a uniform representation. 
Consequently, we obtain that among all the representations in $\mathcal{G}_c$, the ones that are uniform minimize the loss in \eqref{eq:uniformity} and the statement of Lemma \ref{thm:uniform} follows. See Appendix~\ref{sec:lemma-prf} for details.
\end{proof}

Using Lemma \ref{thm:uniform}, we show our main result that all minimizers of the InfoNCE loss are uniform and cluster-preserving. 
% {\color{red} The proof also uses the fact that for all representations $g$ that are not cluster-preserving, ${\mathcal{L}}(g)>\hat{\mathcal{L}}(g)$ by Assumption \ref{assump:spikyaug}, where $\hat{\mathcal{L}}$ is defined in \eqref{tilde_loss}. --- MOVE to appendix?} 
To the best of our knowledge, this is the first result characterizing the minimizers of the InfoNCE loss with a finite batch of negative samples. The proof is provided in Appendix \ref{app:uniform-b}. 

\begin{theorem}\label{thm:uniformandclean}
    If Assumptions \ref{assump:spikyaug} and \ref{assump:realizability} hold, and we have $d>3$, $\ell \geq 1$, and $\beta > c \log d$ for an absolute constant $c$, then a representation $g^*\in \mathcal{G}$ is a global minimizer of the loss {${\mathcal{L}}(g) $} optimized over $\mathcal{G}$ if and only if it is uniform and cluster-preserving.
    % , meaning for any $\mathbf{c}\in C$ and any $x,x'\in \Gamma_{\mathbf{c}}\cap D_{\circ}$, $g^*(x)=g^*(x')$. 
    % Additionally, any clean and uniform representation $g$ is a global minimizer of the loss $\mathcal{L}(g)$.
\end{theorem}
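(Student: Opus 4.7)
The strategy is to reduce the minimization over all of $\mathcal{G}$ to one that depends only on the marginal distribution of $g(x)$ on the hypercube, and then invoke Lemma~\ref{thm:uniform}. First, I would absorb the alignment term of \eqref{eq:InfoNCE} into the logarithm using the identity $-\beta a + \log(e^{\beta a} + S) = \log(1 + e^{-\beta a}S)$ to rewrite the loss as
\begin{equation*}
\mathcal{L}(g) \;=\; \mathbb{E}_{x,x^+,\{x_i^-\}_\ell}\!\left[\log\!\left(1 + \sum_{i=1}^\ell e^{\beta\, g(x)^\top\bigl(g(x_i^-)-g(x^+)\bigr)}\right)\right].
\end{equation*}
Since $g(x),g(x^+)\in\mathcal{H}_d$ forces $g(x)^\top g(x^+)\leq d$, with equality iff $g(x)=g(x^+)$, the integrand is pointwise lower bounded by $\log\bigl(1+\sum_i e^{\beta g(x)^\top g(x_i^-)-\beta d}\bigr)$. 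Taking expectations yields
\begin{equation*}
\mathcal{L}(g) \;\geq\; \widetilde{\mathcal{L}}(\mathcal{D}_g)\;:=\;\mathbb{E}_{y,\{y_i^-\}_\ell\sim \mathcal{D}_g}\!\left[\log\!\left(1 + \sum_{i=1}^\ell e^{\beta\, y^\top y_i^- - \beta d}\right)\right],
\end{equation*}
where $\mathcal{D}_g$ denotes the marginal law of $g(x)$ under $x\sim D_\circ$, with equality iff $g(x)=g(x^+)$ almost surely, i.e., iff $g$ is clean. By Assumption~\ref{assump:spikyaug} this is equivalent to $g$ being cluster-preserving.

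The right-hand side $\widetilde{\mathcal{L}}(\mathcal{D}_g)$ depends on $g$ only through the marginal $\mathcal{D}_g$ and coincides with the distributional objective analyzed in Lemma~\ref{thm:uniform}. The Markov-chain blurring construction used in that proof is a statement about arbitrary distributions on $\mathcal{H}_d$: each bit-flip transition strictly decreases $\widetilde{\mathcal{L}}$ unless the current distribution is already uniform, so $\widetilde{\mathcal{L}}(\mathcal{D})\geq \widetilde{\mathcal{L}}(\mathcal{U})$ with equality iff $\mathcal{D}=\mathcal{U}$. Assumption~\ref{assump:realizability} then supplies a cluster-preserving, uniform $g^\star\in\mathcal{G}$, for which both inequalities above are tight and $\mathcal{L}(g^\star)=\widetilde{\mathcal{L}}(\mathcal{U})$. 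Chaining the two bounds gives $\mathcal{L}(g)\geq \mathcal{L}(g^\star)$ for every $g\in\mathcal{G}$, with equality throughout exactly when $g$ is clean (i.e., cluster-preserving) and $\mathcal{D}_g=\mathcal{U}$ (i.e., uniform), which proves both directions of the theorem.

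The step that will require the most care is verifying that the blurring argument from Lemma~\ref{thm:uniform} minimizes $\widetilde{\mathcal{L}}(\mathcal{D})$ over the entire simplex of distributions on $\mathcal{H}_d$, rather than only those realizable by $g\in\mathcal{G}_c$. Inspecting the sketch, the monotonicity arises purely from a bit-flip random walk on $\mathcal{H}_d$ with stationary law $\mathcal{U}$, so the per-step decrease in $\widetilde{\mathcal{L}}$ holds for any input distribution on the hypercube; realizability is used only at the very end to certify that $\mathcal{U}$ is actually attained inside $\mathcal{G}$, which makes the lower bound achievable and the ``if'' direction nontrivial.
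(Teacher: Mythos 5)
Your proposal is correct and follows essentially the same route as the paper's proof: rewrite the loss so that $\mathcal{L}(g)\geq \hat{\mathcal{L}}(g)$ with equality exactly for clean (hence, by Assumption~\ref{assump:spikyaug}, cluster-preserving) representations, note that $\hat{\mathcal{L}}$ depends on $g$ only through $\mathcal{D}_g$ and is uniquely minimized at the uniform distribution by the Markov-chain blurring argument of Lemma~\ref{thm:uniform}, and use Assumption~\ref{assump:realizability} to exhibit a clean, uniform $g^\star$ that makes both bounds tight. You also correctly flag the one subtlety—that the blurring argument must hold for arbitrary distributions on $\mathcal{H}_d$, not only those induced by $\mathcal{G}_c$—which the paper's two-case comparison implicitly relies on as well.
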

\subsection{Downstream Guarantees}
% {\color{red} move after section 4.2? the positive downstream guarantee is only for the realizable setting}

% We first define the space of downstream tasks consistent with our inductive bias assumptions on $\mathcal{F}$. Our main assumption is that downstream tasks label all images from the same cluster alike, hence the clusters are meaningful for downstream performance.
% \begin{assumption}[Meaningful clusters] \label{assump:downstream}
% The set of downstream tasks $\mathcal{T}$ is a subset of $\mathcal{F}_c$.
% \end{assumption}

% Assumption~\ref{assump:downstream} ensures that the inductive bias of the function class is aligned with downstream tasks, which is necessary for any downstream guarantees. Importantly, recall that this bias is inherited by the  representation class $\G$ because each representation in $\G$ consists of concatenated supervised functions.
%
Next we translate the aforementioned representation learning results for $\mathcal{G}$ into downstream performance guarantees. We consider the class of heads consisting of single-layer ReLU neural networks with $m$ neurons.  Formally, $\mathcal{J}_{\text{ReLU}} := \{ \omega_{a,\mathbf{W}, b}: \mathbb{R}^d \rightarrow  \mathbb{R}\; \text{s.t.}\;  \omega_{a,\mathbf{W}, b}(g(x)) = a^\top\text{ReLU}(\mathbf{W}g(x) - b), \; a \in \mathbb{R}^m, \mathbf{W}\in \mathbb{R}^{m \times d}, b \in \mathbb{R}^m  \}$, where $\text{ReLU}(h) = \max(h,0)$. 
% We slightly abuse notation by letting $j\in [m]$ be an indexing over clusters, and defining $\Gamma_j$ as the $j$-th cluster.
\begin{theorem} \label{thm:downstream1} 
% Consider the family of tasks
% $\mathcal{T}_{N}:= \{f \in \mathcal{F}_c : \sum_{j=1}^m \chi\{ y(x)=1 \; \forall\;  x \in \Gamma_j \} \leq N\}$, where $\chi\{\cdot\}$ is the indicator variable. 
Suppose the representation $g^* \in \arg \min_{g \in \mathcal{G}} \mathcal{L}(g)$ under Assumptions \ref{assump:spikyaug} and \ref{assump:realizability}, $\beta > c\log d$ for an absolute constant $c$ and $d>3$. Then for any set of cluster-preserving downstream tasks $\mathcal{T}$, $\mathcal{L}_{\mathcal{T},\J_{\text{ReLU}}}(g^*) = 0$.

% {\color{blue} [KS: $\mathcal{L}_{\mathcal{T},\J_{\text{ReLU}}}$ has not been defined anywhere.][SS: It is in Equation (1), (2)]}

% for any $N \in [m]$ and $\tilde{N} := \max(N,m-N)$, 
% % for any downstream task $y$, there exists a linear head ${w}\in\J_{\text{lin},} :=\mathbb{R}^{ d}$ such that $w^\top g$ achieves zero error, i.e.
% \begin{align}
%      \nonumber
% \end{align}
% or:
% Among all mappings from $m=2^d$ objects to $\mathcal{H}_{d}$, those that map these objects uniformly on the hypercube can express the largest possible number of downstream tasks. 
\end{theorem}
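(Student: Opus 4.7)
The plan is to treat Theorem \ref{thm:uniformandclean} as a black box and reduce the downstream guarantee to a purely combinatorial statement about representing arbitrary $\pm 1$-valued functions on the Rademacher hypercube $\mathcal{H}_d$ by a single-hidden-layer ReLU network. First I would invoke Theorem \ref{thm:uniformandclean} to conclude that $g^*$ is both cluster-preserving and uniform on $D_\circ$. Combined with Assumption \ref{assump:realizability} (which forces the number of clusters to be a multiple of $2^d$ with balanced sizes), uniformity of $g^*$ means each vertex $v \in \mathcal{H}_d$ is the image of exactly the same number of clusters under $g^*$; for concreteness and the cleanest downstream statement I would work in the canonical realizable case where this number is one, so $g^*$ induces a bijection between clusters and vertices. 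Since every $h \in \mathcal{T}$ is cluster-preserving, $h$ depends on $x$ only through its cluster, and therefore factors as $h(x) = \tilde h(g^*(x))$ for some well-defined $\tilde h : \mathcal{H}_d \to \{-1,+1\}$.

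Next I would hand-construct a ReLU head $\omega \in \mathcal{J}_{\text{ReLU}}$ that implements $\tilde h$ exactly on $\mathcal{H}_d$. The building block is a ``bump'' neuron at each vertex $v \in \mathcal{H}_d$: define
\begin{equation*}
\phi_v(z) = \mathrm{ReLU}\bigl(v^\top z - (d-1)\bigr).
\end{equation*}
For any $u \in \mathcal{H}_d$, $v^\top u = d - 2\,\mathrm{Ham}(v,u)$, so $\phi_v(v) = 1$ while $\phi_v(u) \le \mathrm{ReLU}(-1) = 0$ whenever $u \neq v$. Taking $m = 2^d$ neurons indexed by $v \in \mathcal{H}_d$, with the $v$th row of $\mathbf{W}$ equal to $v^\top$, the $v$th coordinate of $b$ equal to $d-1$, and $a_v = \tilde h(v)$, yields
\begin{equation*}
\omega_{a,\mathbf{W},b}(z) = \sum_{v \in \mathcal{H}_d} \tilde h(v)\, \phi_v(z),
\end{equation*}
which evaluates to $\tilde h(u)$ at every vertex $u \in \mathcal{H}_d$. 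Hence $\omega \circ g^*(x) = \tilde h(g^*(x)) = h(x)$ for all $x$ in the support of $D$, so $\mathcal{L}_{h,\mathcal{J}_{\text{ReLU}}}(g^*) = 0$, and taking the supremum over $h \in \mathcal{T}$ finishes the proof.

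The only nontrivial point, and the main obstacle, is the reduction in the first step: without the uniformity half of Theorem \ref{thm:uniformandclean}, two distinct clusters could be collapsed onto the same vertex by $g^*$, and a cluster-preserving task that labels them differently would then make zero error impossible under \emph{any} head. Uniformity is precisely what rules this out and lets $h$ factor through $g^*$. Everything after this reduction is a standard interpolation argument for $\pm 1$-functions on the hypercube, and the $\beta > c \log d$ and $d>3$ conditions are used only to invoke Theorem \ref{thm:uniformandclean} and play no further role in the ReLU construction.
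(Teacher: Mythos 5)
Your proposal is correct and follows essentially the same route as the paper: invoke Theorem \ref{thm:uniformandclean} to get that $g^*$ is cluster-preserving and uniform (hence maps distinct clusters to distinct vertices, so cluster-preserving tasks factor through $g^*$), then build a one-neuron-per-vertex ReLU head whose weight rows are the vertices and whose thresholds isolate each vertex. The paper's construction uses bias $d-2$ and $\pm 1$ output weights grouped by the task's label (so it outputs $\pm 2$, matching $f$ in sign), while your bias of $d-1$ with output weights $\tilde h(v)$ interpolates $h$ exactly; this is only a cosmetic difference.
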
 
% In words, $\mathcal{T}_N$ is the set of tasks which positively label images from at most $N$ clusters. 
Theorem \ref{thm:downstream1} shows that any representation learned by minimizing the InfoNCE loss achieves zero downstream error on any task from $\mathcal{F}_c$ with a sufficiently wide two-layer ReLU head.  
% Note that $\mathcal{T}_m = \mathcal{F}_c$, and the maximum number of neurons needed in the head to learn any clean task is $\frac{m}{2}$.

% Next we consider the case in which the heads are two-layer ReLU networks.

% \begin{theorem} \label{thm:downstream1}
%     Let ${U}\in \{-1,1\}^{ 2^d \times d}$ such that each row of $U$ is a unique vertex in $\mathcal{H}_d$. 
% Let $\tilde{g}(x)\coloneqq \text{ReLU}({U}g(x) - (d-1)\mathbf{1})$. Then for any downstream task $\mathcal{T}$, 
% \begin{align}
%     \min_{W \in \mathbb{R}^{r \times 2^d}} \mathcal{L}_{\mathcal{T}}( W, \tilde{g}  ) &= 0
% \end{align}
% \end{theorem}

% lower bound on downstream task error without spiky augmentations

% what if $m \neq 2^d$?

% what if clusters are not the same size?

% example showing how HaoChen assumption/technique can fail on SimCLR? their result follows almost immediately from their assumptions. Ours is more complicated, need to show why. Why is our case hard? would be nice to have positive result for different cluster sizes...

% special case with quantized linear models

% Define the downstream error of a pre-trained representation $g$ on a family of tasks $\mathcal{T}$, with family of heads $\mathcal{W}$.
% \begin{align}
%     \mathcal{L}_{\mathcal{T}, \mathcal{W}}(g) \coloneqq \sup_{T \in \mathcal{T}} \inf_{w \in \mathcal{W}} \mathbb{P}_{x \sim \mathcal{I}}[ w \circ g(x) \neq {T}(x) ].
% \end{align}

{Next, we show that controlling the expressivity of $\mathcal{G}$ is necessary to achieve meaningful downstream performance guarantees. Suppose that instead of optimizing the InfoNCE loss over $\mathcal{G}$, we instead optimized it over a representation class ${\mathcal{G}}_{\star}:= \mathcal{F}_\star^d$ where $\mathcal{F}_\star\coloneqq \{f : {D} \rightarrow \{-1,1\}\}$ consists of all classifiers mapping from images to binary labels.
\begin{theorem} \label{thm:negative}
   Let $\beta > c\log d$ for an absolute constant $c$ and $d>3$. There exists a dataset ${D}$ that satisfies Assumptions \ref{assump:spikyaug} and \ref{assump:realizability} for $\mathcal{G}$, representation $g \in \arg \min_{g'\in \mathcal{G}_{\star}} \mathcal{L}(g')$, and a downstream task $h\in \mathcal{F}_c$ such that 
    % \begin{align}
        $\mathcal{L}_{h,\mathcal{J}_\star}(g) \geq 0.5$,
    % \end{align}
    where $\mathcal{J}_\star= \{\omega : \mathcal{H}_d\rightarrow \{-1,1\} \}$ is the set of all mappings from $\mathcal{H}_d\rightarrow \{-1,1\}$.
\end{theorem}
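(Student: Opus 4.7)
The plan is to construct an explicit dataset and function class $\mathcal{F}$ on which a cluster-preserving uniform representation exists in $\mathcal{G}$, but on which the unconstrained class $\mathcal{G}_\star$ admits a global InfoNCE minimizer that scrambles cluster membership across vertices of $\mathcal{H}_d$. First, I would take $m = 2^d$ clusters, each consisting of two natural images whose augmentation sets are disjoint and carry equal probability mass $2^{-d-1}$ under the natural-image distribution. Let $\mathcal{F}$ be the class of all binary functions $D \to \{-1,+1\}$ that are constant on each cluster. Then every $f \in \mathcal{F}$ is clean, so Assumption~\ref{assump:spikyaug} holds vacuously, and by picking $d$ cluster-preserving binary functions whose concatenation implements the binary encoding of each cluster's index, $\mathcal{G} = \mathcal{F}^d$ contains a cluster-preserving uniform representation, giving Assumption~\ref{assump:realizability}.

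Next, I would characterize $\arg\min_{g' \in \mathcal{G}_\star} \mathcal{L}(g')$. Rewriting
\[
  \mathcal{L}(g) = \mathbb{E}_{x, x^+, \{x_i^-\}_\ell}\!\left[ \log\!\left(1 + \sum_{i=1}^\ell e^{\beta(g(x)^\top g(x_i^-) - g(x)^\top g(x^+))}\right) \right],
\]
the integrand is strictly decreasing in $g(x)^\top g(x^+)$, so any minimizer must have perfect alignment: $g(x^+) = g(x)$ for all $x^+ \in A(x)$, meaning each augmentation set together with its natural image maps to a single vertex of $\mathcal{H}_d$. Conditional on perfect alignment, $g(x)^\top g(x^+) = d$ identically and the loss depends on $g$ only through the distribution $\mathcal{D}_g$ induced on $\mathcal{H}_d$. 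The Markov-chain blurring argument of Lemma~\ref{thm:uniform} uses only this reduction (not cleanness of individual coordinates), so it still applies here and forces $\mathcal{D}_g$ to be uniform on $\mathcal{H}_d$. Minimizers over $\mathcal{G}_\star$ are therefore exactly the representations that send each augmentation set to a single vertex and place exactly two augmentation sets on each vertex.

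Third, I would exhibit a concrete non-cluster-preserving minimizer. Enumerate clusters $1, \ldots, 2^d$, pair them as $(2j-1, 2j)$, and fix an enumeration $v_1, \ldots, v_{2^d}$ of $\mathcal{H}_d$. Define $g \in \mathcal{G}_\star$ to send the first augmentation set of clusters $2j-1$ and $2j$ to $v_{2j-1}$, and their second augmentation sets to $v_{2j}$. By the previous characterization $g$ is a global minimizer, yet is not cluster-preserving. Finally, take $h \in \mathcal{F}_c$ labeling cluster $2j-1$ as $+1$ and cluster $2j$ as $-1$. For any head $\omega : \mathcal{H}_d \to \{-1,+1\}$, the preimage $g^{-1}(v_i)$ carries equal mass from a $+1$-labeled cluster and a $-1$-labeled cluster, so $\omega$ mislabels mass $2^{-d-1}$ at each vertex, and summing over all $2^d$ vertices gives downstream error exactly $1/2$, matching the claimed bound.

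The main obstacle is justifying that the Markov-chain blurring argument of Lemma~\ref{thm:uniform} extends from $\mathcal{G}_c$ to $\mathcal{G}_\star$ once restricted to perfectly aligned representations. The argument really only uses that $g(x)^\top g(x^+) = d$, which is immediate from alignment, and that the space of induced distributions $\mathcal{D}_g$ is rich enough to contain the perturbations produced by the Markov transitions --- and this space is strictly larger for $\mathcal{G}_\star$ than for $\mathcal{G}_c$, so the extension is immediate. The remaining bookkeeping on cluster masses per vertex is routine.
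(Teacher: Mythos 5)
Your proposal is correct and takes essentially the same route as the paper: construct a dataset in which $\mathcal{G}_\star$ admits a clean (perfectly aligned) but cluster-scrambling uniform representation, certify it as a global InfoNCE minimizer via the realizable-setting characterization (alignment plus the Markov-chain uniformity argument of Lemma~\ref{thm:uniform}, exactly the ``cluster-preserving replaced by clean'' adaptation the paper invokes), and then defeat every head $\omega$ by making each vertex's preimage carry equal mass of $+1$- and $-1$-labeled images for a cluster-preserving task. The only difference is in the concrete construction: the paper spreads every cluster evenly over all $2^d$ vertices so that \emph{any} balanced task fails, whereas you pair two clusters per vertex and choose the task to split each pair --- both give downstream error exactly $1/2$.
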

% \begin{proof} 
% Dataset has two clusters, and there are 2^d vertices. For each vertex, half of the images mapped there by g are from one cluster, the other half are from the other cluster.
% \end{proof}
}

\section{Results for the Agnostic Setting}\label{section:agnostic}

In this section, we consider the setting in which there may not exist any cluster-preserving and uniform representation (that is, Assumption \ref{assump:realizability} is violated). We show that even in this setting, the InfoNCE loss prioritizes cluster-preserving representations. Specifically, we show that if an optimal solution of the InfoNCE loss on $\mathcal{G}$ is close to uniform, then it must also be cluster-preserving. This result requires two new assumptions that we describe below. 

First, the function class  $\mathcal{F}$ must be closed under operations that make classifiers cluster-preserving, in the sense that if $f\in \mathcal{F}$ and $f$ does not preserve the cluster $\Gamma_{\mathbf{c}}$, then the two perturbations of $f$ that preserve $\Gamma_{\mathbf{c}}$ (by assigning $\pm 1 $ to all images within it) and do not change $f$ otherwise are also in $\mathcal{F}$.
\begin{assumption}[Expressivity of $\mathcal{F}$]\label{assump:exist}
For any cluster $\Gamma_{\mathbf{c}}$, if any $f\in \mathcal{F}$ is such that $f(x)\neq f(x')$ for some $x,x'\in \Gamma_{\mathbf{c}}$, then $f' \in \mathcal{F}$ and $f'' \in \mathcal{F}$, where $f'(x) = f''(x) = f(x) \;  \forall x \notin \Gamma_{\mathbf{c}}$, and $f'(x)=1, f''(x)=-1 \;\forall x \in \Gamma_{\mathbf{c}}$.
\end{assumption}

% \begin{assumption}[$(\gamma, \delta)$-Regularity of $\mathcal{F}$] \label{assump:spiky_complete}
% A function class $\F$ is called $(\gamma, \delta)-$regular if 
% \begin{enumerate}[ref={\theassumption.\Alph*}]
% \item \label{assump-a}For every $f\in F$, for any $x\in D_\circ$, $$\Pr_{x^+\sim \text{Unif}( \kappa(x))}[f(x)\ne f(x^+)] \le \gamma.$$
% \item \label{assump-b}For any $f\in \F\setminus \F_c$, let 
% $$\Delta_f := \min_{f_c\in \F_c} \Vert \{x \in D_\circ \mid f(x) \ne f_c(x)\}\Vert_\circ.$$ 
% Then we have $\Pr_{x^+\sim \text{Unif}( \kappa(x))}[f(x)\ne f(x^+)] \ge \delta \Delta_f$.
% \end{enumerate}
% \end{assumption}
Next we define a regularity condition of a function class and augmentation set that
captures the extent to which non-cluster-preserving classifiers classify  images in positive pairs differently within clusters that they intersect. 
% Recall that  a cluster is a set of points classified alike by all   classifiers that classify the samples in every positive pair alike. 
% This means that any classifier that splits a cluster (i.e., any non-clean classifier) must also split an augmentation set within that cluster, as we used critically in the proof of Theorem \ref{thm:uniformandclean}. 
So far, we have only assumed that non-cluster-preserving classifiers misclassify at least  
% does not entail that non-clean classifiers must in 
% only implies that any non-clean classifier classifies the samples in at least 
{\em one}  positive pair differently within any cluster they intersect (Assumption \ref{assump:spikyaug}).
However, for regular classes of binary classifiers and intertwined augmentation sets within clusters, we can expect that the number of positive pairs split in a cluster that are split by any binary classifier scales with the number of negative pairs in the same cluster that are split by the classifier.
% Here, we define the regularity of a class of binary classifiers  by the minimum number of positive pairs in a cluster that must be misclassified by any classifier in the class that does not preserve that cluster. 
% Recall that Theorem \ref{thm:uniformandclean} only required 
For a set of images $B\subseteq D$, we employ the notations  $\|B\|_\circ \coloneqq \mathbb{P}_{x \sim D_{\circ}}[x \in B]$ and $\|B\| \coloneqq \mathbb{P}_{x \sim D\setminus D_\circ}[x \in B]$.
% Further, recall that $\Gamma_{\mathbf{c},\circ}\coloneqq {\Gamma}_{\mathbf{c}}\cap D_\circ$.
\begin{definition}[$\delta$-Regularity] \label{defn:reg} %\label{assump:spiky_complete}
% A function class $\F$ is called $(\gamma, \delta)-$regular if 
% \begin{enumerate}[ref={\theassumption.\Alph*}]
% \item \label{assump-a}For every $f\in F$, for any $x\in D_\circ$, $$\Pr_{x^+\sim  \kappa(x)}[f(x)\ne f(x^+)] \le \gamma.$$
% \item \label{assump-b}
For any $f\in \F$, let $\Sigma_f \coloneqq \{\mathbf{c}\in {C}: %\; \text{s.t.}\; 
\exists  x,x'\in \Gamma_{\mathbf{c}} \; \text{s.t.}\; f(x) \neq f(x')\}$ be the set of content variables corresponding to clusters split by $f$. 
% {\color{blue} Let $S_{\mathbf{c},\circ}$ be ...}
For all ${\mathbf{c}} \in \Sigma_f$ and $\sigma \in \{-1,1\}$, define %\begin{align}
    $f^{({\mathbf{c}},\sigma)}(x) \coloneqq \begin{cases} f(x) & x \notin \Gamma_{\mathbf{c}} \\
    \sigma & x \in \Gamma_{\mathbf{c}} \\
    \end{cases}$ as the classifier that outputs the same label as $f$ on all images not in $\Gamma_{\mathbf{c}}$ and  $\sigma$ on $\Gamma_{\mathbf{c}}$. Further define 
    %,\quad  \nonumber \\
    $\Delta_{f,\mathbf{c}} \coloneqq  %\min_{{\mathbf{c}}  \in \Sigma_f} 
    \min_{\sigma \in \{-1,1\}} \Vert \{x \in \Gamma_{\mathbf{c},\circ} \mid f(x) \ne f^{({\mathbf{c}},\sigma)} (x)\}\Vert_\circ $ as the minimum measure of the set on which $f^{\mathbf{c},\sigma}$ and $f$ differ among all possible choices of $\sigma \in \{-1,1\}$. 
%\end{align}
% Lastly, let $R\coloneqq \{x^+\in A(x): x\in \Gamma_{\mathbf{c},\circ}, \; f(x)\ne f(x^+)\}$ be the set of   augmentations in the cluster $\Gamma_{\mathbf{c}}$ that are misclassified by $f$.
Then $(\F,\Lambda)$ is $\delta-$regular if for all $\mathbf{c} \in \Sigma_f$,  $\|\{\mathcal{A}(x):\mathcal{A}\in \Lambda, x\in \Gamma_{\mathbf{c},\circ}, \; f(x)\ne f(x^+)\}\| \ge \delta \Delta_{f,\mathbf{c}}$.
% \end{enumerate}
\end{definition}

 % Assumption \ref{assump-a} states that \textit{every} classifier in the function class $\F$ can only separate an image from some $\gamma$ fraction of its augmentations. Suppose the natural images are points in $\R^D$, and suppose an augmentation was a perturbation of the natural image by a mean-zero Gaussian. If the class $\F$ consisted of linear half-spaces, then this assumption is satisfied with $\gamma = \frac{1}{2}$ (any halfspace that contains the center of the Gaussian must contain at least half of the augmented data).

 % Below, we effectively assume that any non-clean classifier that splits $k$ natural images from the rest of their cluster must also split a constant fraction of $k$ augmentations from their natural images in that cluster.

\noindent Definition \ref{defn:reg} states that a function class and set of augmentations $(\mathcal{F},\Lambda)$ is $\delta$-regular if the number of positive pairs split by a classifier that intersects a cluster is at least $\delta$ fraction of the extent to which the classifier intersects the cluster. If classifier $f\in \mathcal{F}$ barely intersects the cluster, i.e., classifies most natural images from the cluster alike, then $\Delta_{f,\mathbf{c}}$ is small and the lower bound on the number of positive pairs that it intersects is weaker. However, if $f$ splits the cluster almost in half, then $\Delta_{f,\mathbf{c}}$ is large and the classifier separates many augmentations from their associated natural images within the cluster.  
Next, we state our regularity assumption and the result for the agnostic case. 

\begin{assumption}[$\delta$-Regularity of $(\mathcal{F},\Lambda)$] \label{assump-b}
    The pair $(\mathcal{F},\Lambda)$ is $\delta-$regular with $\delta\geq 0.4$.
\end{assumption}

\vspace{-1mm}
% Now we are ready to state our main result for the agnostic setting. The below theorem states that if $g$ is in some sense close to a uniform representation, it must be clean in order to optimize the InfoNCE loss with sufficiently large $\ell$ and $\beta$. 

%Our main result for the agnostic setting states that if $g$ is in some sense close to a uniform representation, it must be cluster-preserving in order to optimize the InfoNCE loss. % with sufficiently large $\ell$ and $\beta$. 

 % \vspace{-10mm}

\vspace{-1mm}
\begin{theorem} \label{thm:agnostic}
% Let $\ell = \frac{C}{\epsilon} d 2^d $ for an absolute constant $C$ and $\beta \geq c_3 2^d$. 
Suppose Assumptions \ref{assump:exist} and \ref{assump-b} hold and $g= [f_1,...,f_d]$ is \textbf{not} cluster-preserving with 
$\min_{j\in [d]} \min_{\mathbf{c} \in \Sigma_{f_j}} \mathbb{P}_{x,x'\sim D}[x, x' \in \Gamma_{\mathbf{c}},f_j(x)\neq f_j(x')] \geq \epsilon >0$. Let $\ell \geq \frac{ c  }{\epsilon} d 2^d $, $\beta \geq {c}\log(\frac{c}{\epsilon}) 2^d$ for a sufficiently large constant $c$.
% $\min_{j\in [d]} \min_{\mathbf{c} \in \Sigma_{f_j}} \|\{x^+\in A(x): x\in \Gamma_{\mathbf{c},\circ}, \; f_j(x)\ne f_j(x^+)\}\| \geq \epsilon >0$. 
Moreover, suppose $g$
 is close to a uniform
representation in the sense that\footnote{Note that this near-uniformity condition allows for representations that for each vertex put  mass at least a constant factor of $\frac{1}{d}$ times $2^{-d}$, or essentially treat the vertex as   inactive,
 % Recall that for a uniform representation, $\mathcal{D}_g(v) = 2^{-d}$ for all $v$. Here, we consider representations that for all vertices have a least a factor of $\frac{\log(d)}{d}$ from $2^{-d}$ mass, or treat the vertex as essentially  inactive, 
 which allows for the case wherein the number of clusters is less than $2^d$ and some vertices are inactive for cluster-preserving representations.}  $\mathbb{P}_{x\sim D_\circ}[g(x)=v] \geq \frac{10}{ c d2^d}$ 
or $\mathbb{P}_{x\sim D_\circ}[g(x)=v]  \leq \frac{ \epsilon}{ 100 c d 2^{2d}} $
% $\|Q_v\|_\circ > \frac{\log(\eta)}{c 2^d}$ or $\|Q_v\|_\circ \leq \frac{c'}{ \eta d 2^{2d}}$ 
for all $v\in \mathcal{H}_d$.
Then $g$ is \textbf{not} a minimizer of the InfoNCE loss. 
% $\mathcal{L}(g)> \mathcal{L}(g')$ for some nearby representation $g'$. 
% Consider the InfoNCE objective \ref{eq:InfoNCE} when $l \ge m \log m$. 
% The optimal representation is clean and such that it maximizes the entropy of the downstream representations to within an additive error.
\end{theorem}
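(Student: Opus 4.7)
The plan is to produce, for any $g$ satisfying the hypotheses, an explicit single-coordinate single-cluster perturbation $g' \in \mathcal{G}$ such that $\mathcal{L}(g') < \mathcal{L}(g)$. First, by the $\epsilon$-splitting hypothesis, I would pick a coordinate $j \in [d]$ and a content variable $\mathbf{c} \in \Sigma_{f_j}$ for which $f_j$ cuts $\Gamma_{\mathbf{c}}$ with mass at least $\epsilon$ on both sides. Assumption \ref{assump:exist} furnishes both sign-locked perturbations $f_j^{(\mathbf{c}, \pm 1)} \in \mathcal{F}$, and I would take $\sigma^\star \in \{-1,+1\}$ to realize the minimum in the definition of $\Delta_{f_j,\mathbf{c}}$ and set $g' \coloneqq [f_1, \ldots, f_j^{(\mathbf{c}, \sigma^\star)}, \ldots, f_d]$. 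By construction, $g' \in \mathcal{G}$ preserves one more cluster than $g$, agrees with $g$ off $\Gamma_{\mathbf{c}}$, and on $\Gamma_{\mathbf{c}}$ can differ from $g$ only in coordinate $j$.

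Next I would decompose $\mathcal{L}(g) - \mathcal{L}(g')$ into an alignment contribution and a uniformity contribution. The alignment contribution is transparent: for $x \in \Gamma_{\mathbf{c},\circ}$, $g'(x)^\top g'(x^+) - g(x)^\top g(x^+) = 2\,\Ind[f_j(x) \neq f_j(x^+)]$, and zero elsewhere, so the $\delta$-regularity hypothesis (Assumption \ref{assump-b}) yields an alignment gain of at least $2\beta\delta\Delta_{f_j,\mathbf{c}}$, linear in $\beta$. For the uniformity term, I would partition the image space $D_\circ$ by the joint preimage lattice of $(g,g')$; mass is redistributed only between paired vertices $(v,v^{\oplus j})$ that differ in the $j$-th coordinate, so a finite case analysis by whether each of $x,x^+,\{x_i^-\}$ lies in $\Gamma_{\mathbf{c}}$ and by whether their flipped coordinate aligns with the anchor image suffices. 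Within each case, I would expand the log-sum-exp via its gradient: since $|\partial_{a_i}\log Z| \leq \beta$ and the two-regime near-uniformity hypothesis keeps the normaliser $Z$ at least $\Omega(\ell/(d2^d))$ on every active vertex, the ratios $\beta e^{\beta a_i}/Z$ do not blow up, and each case contributes a change controlled by $\beta$ times the mass displaced.

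Finally, concentration of the $\ell$-sample empirical negative distribution around $\mathcal{D}_g$ (valid for $\ell = \Omega(d2^d/\epsilon)$) lets me replace the log-sum-exp by its mean-field analogue and bound the uniformity degradation by $\mathrm{poly}(d,1/\epsilon) \cdot \beta \cdot \Delta_{f_j,\mathbf{c}}/2^d$, while the alignment gain is $\Omega(\beta\delta\Delta_{f_j,\mathbf{c}})$ with $\delta \geq 0.4$ independent of the $1/2^d$ factor. The hypotheses $\beta \gtrsim \log(c/\epsilon)\,2^d$ and $\ell \gtrsim d2^d/\epsilon$ then force $\mathcal{L}(g') - \mathcal{L}(g)$ strictly negative, contradicting optimality of $g$. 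The hard part will be the uniformity case analysis: a misaligned positive pair within $\Gamma_{\mathbf{c}}$ contributes an $e^{\beta(d-2)}$ term that becomes $e^{\beta d}$ under $g'$, hurting uniformity, while simultaneously some $x,x_i^-$ pairs with $x\in\Gamma_{\mathbf{c}}$ and $x_i^- \notin \Gamma_{\mathbf{c}}$ may see their inner product change by $\pm 2$, helping or hurting depending on geometry; tracking the cancellation between these signed contributions requires the full two-regime near-uniformity hypothesis, and vertices sitting near the active/inactive boundary need separate treatment to ensure that mass redistribution does not push them across the threshold and invalidate the lower bound on $Z$.
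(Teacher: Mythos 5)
Your construction of $g'$ (a single-coordinate, single-cluster perturbation furnished by Assumption \ref{assump:exist}), your exact formula for the alignment gain, and your partition of the image space by the joint preimages of $(g,g')$ all coincide with the paper's proof. The gap is in the uniformity analysis and in how the hypotheses are supposed to close the comparison. Your final step pits a degradation of order $\mathrm{poly}(d,1/\epsilon)\,\beta\,\Delta_{f_j,\mathbf{c}}/2^d$ against a gain of order $\beta\delta\Delta_{f_j,\mathbf{c}}$; both sides are linear in $\beta$, so the hypothesis $\beta \gtrsim 2^d\log(c/\epsilon)$ cannot "force" the difference negative -- $\beta$ cancels -- and what would remain to be checked, $\mathrm{poly}(d,1/\epsilon)\ll 2^d$, is not implied by the theorem (there is no lower bound on $\epsilon$ relative to $2^{-d}$). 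In the correct accounting, large $\beta$ plays a different role: the $\beta$-proportional part of the degradation must be shown on its own to have coefficient strictly smaller than the alignment coefficient $2\|R\|$ (this is where $\delta\ge 0.4$, the factor $1/e$ from binomial tail terms, and the relation $\epsilon \lesssim \|R\|/\delta$ enter), while $\beta\ge c\log(c/\epsilon)2^d$ is needed only to dominate the $\beta$-free remainders of size roughly $\|R\|\log\ell + \|E\|_\circ\,(2^{d+1}+\log\ell)$ that fall out of the log-sum comparisons; your sketch never produces these terms, so the specific form of the $\beta$ and $\ell$ conditions is left doing no work.

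Relatedly, the mechanism you propose for the uniformity term -- the gradient bound $|\partial_{a_i}\log Z|\le\beta$ together with a mean-field replacement justified by concentration of the empirical negative distribution -- does not give the needed estimate. A per-sample Lipschitz bound charges up to $2\beta$ for every negative sample falling in $E$, i.e.\ roughly $2\beta\,\ell\,\|E\|_\circ$, which dwarfs the gain $2\beta\|R\|$ since $\ell\ge\frac{c}{\epsilon}d2^d$. What actually saves the argument is not that $Z$ concentrates near its mean (your claim $Z=\Omega(\ell/(d2^d))$ describes the expected number of negatives at the anchor's vertex, whereas $Z$ is dominated by $e^{\beta d}$-scale terms), but a dichotomy on whether at least one negative sample lands in the anchor's cell $Q_v$: if yes, the shared $e^{\beta d}$ term caps the change of the log-sum by the $\beta$-free quantity $\log\frac{n_{1,v}+n_2+1}{n_{1,v}}$, whose expectation telescopes via a trinomial identity to $O(\|E\|_\circ)=O(\|R\|/\delta)$; if no, the change can be as large as $2\beta$, and one must show that the total probability of these events, summed over vertices, is strictly less than $\|R\|$ -- this is exactly where the two-regime near-uniformity condition (active vertices of mass at least $\frac{10}{cd2^d}$, inactive ones of mass at most $\frac{\epsilon}{100cd2^{2d}}$) and the lower bound $\|R\|\gtrsim\delta\epsilon$ are consumed. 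You gesture at the rare-event issue, but neither the $e^{\beta d}$-domination step nor the bookkeeping that the $2\beta$-events carry total mass $o(\|R\|)$ appears in the proposal, and that is where the substance of the theorem lies.
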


\begin{comment}

\begin{figure}[t]
\centering
% \resizebox{5cm}{!}{\input{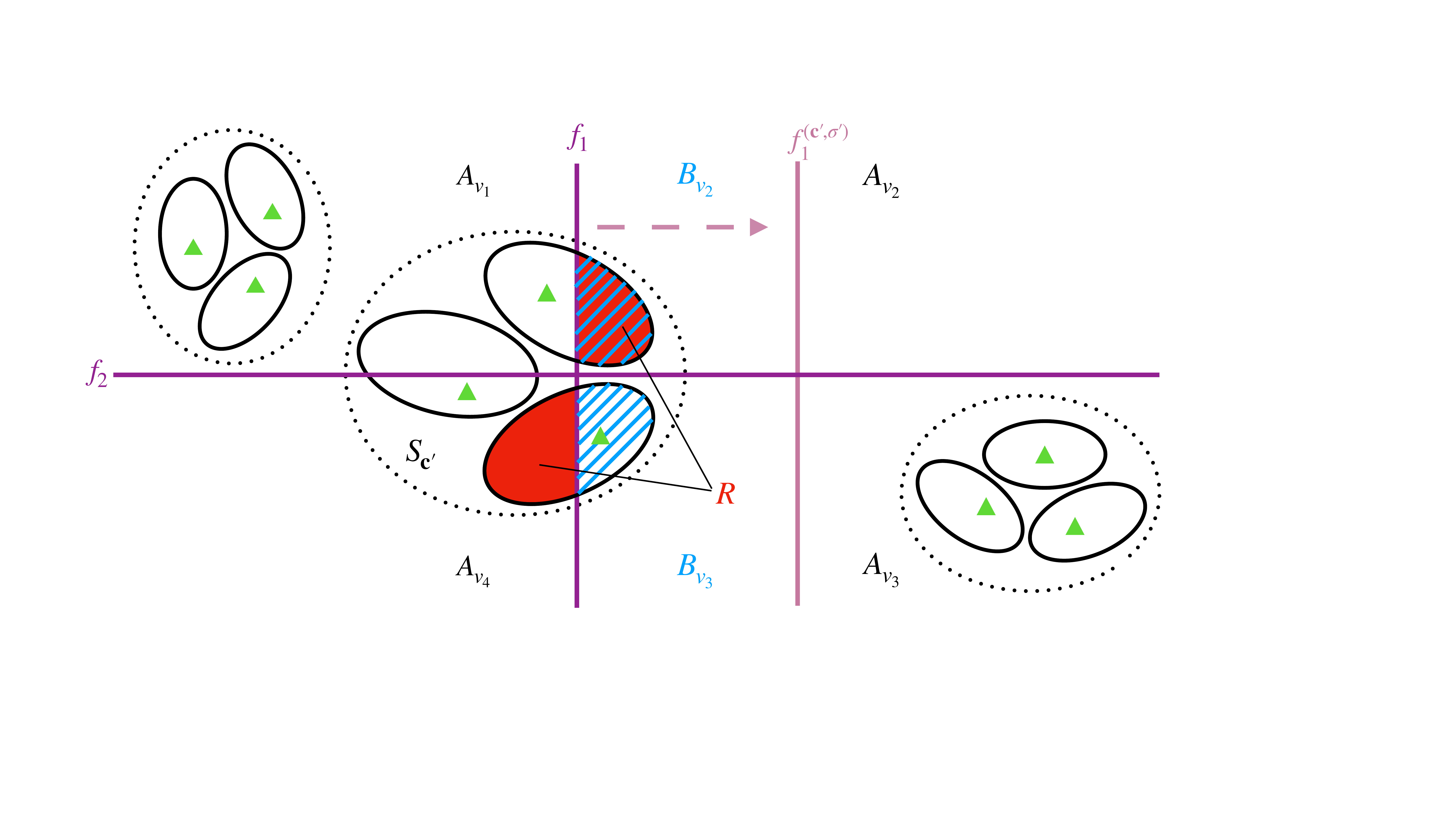}} %{\input{dirtyToClean.tikz}}
\centerline{\includegraphics[width=0.9\columnwidth]{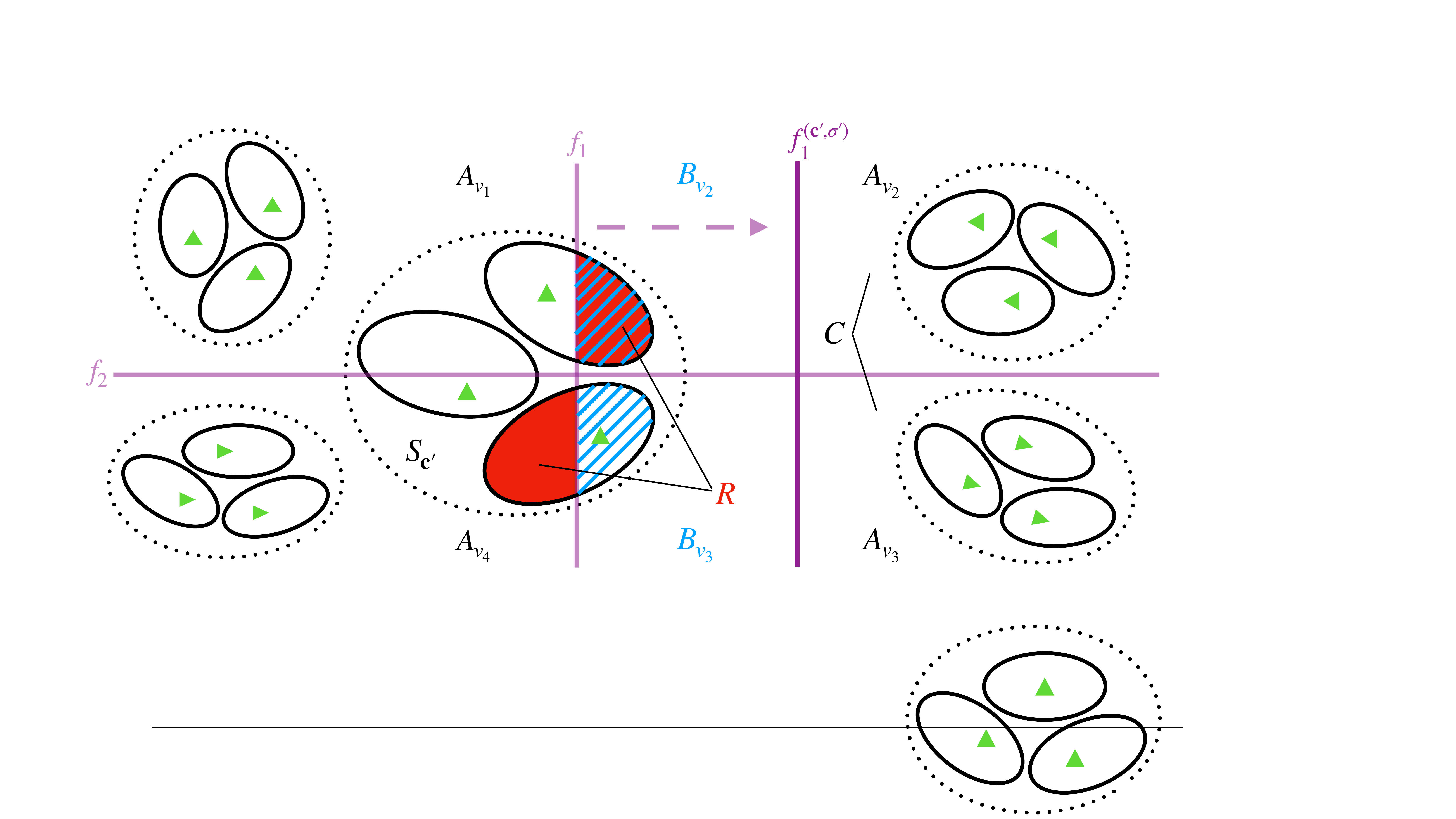}}
\caption{Example partitioning of images with $d=2$ and $D\subset \mathbb{R}^2$. Green triangles denote natural images, and black ellipses their corresponding augmentation sets (we have drawn the augmentation sets as ellipses for ease of presentation, but in reality they are typically highly non-smooth and non-contiguous). Clusters are indicated by dotted black ellipses. Here the ``cleaner'' representation $g'$ is constructed by making $f_1$ clean with respect to the cluster $S_{\mathbf{c}'}$. The region $R$  consisting of augmentations in $S_{\mathbf{c}'}$ misclassified by $f_1$ is shaded red, and the set $B$ of images which are classified differently by $f_1$ and $f_{1}^{(\mathbf{c}',\sigma')}$ is indicated by blue diagonal lines. By Assumption \ref{assump-b}, $\|R\| \geq \delta \|B\|_{\circ}$.} \label{fig:agnostic}
 % Illustration of a perturbation from a representation that is not clean to a cleaner representation. The lines represent the decision boundary of a single coordinate of the representation, which is hypothesized to be non-clean. The black dots represent the natural images that correspond to each of the clusters/augs sets, while the ellipses represent the augmentation sets corresponding to the black dots. The red shaded region represents the misclassification of the classifier corresponding to the non-clean coordinate.
\end{figure}
\end{comment}

%\begin{comment}
\begin{figure}
  \begin{minipage}[c]{0.35\textwidth}
  \vspace{-6mm}
    \includegraphics[height=2.4in,width=1.9in]{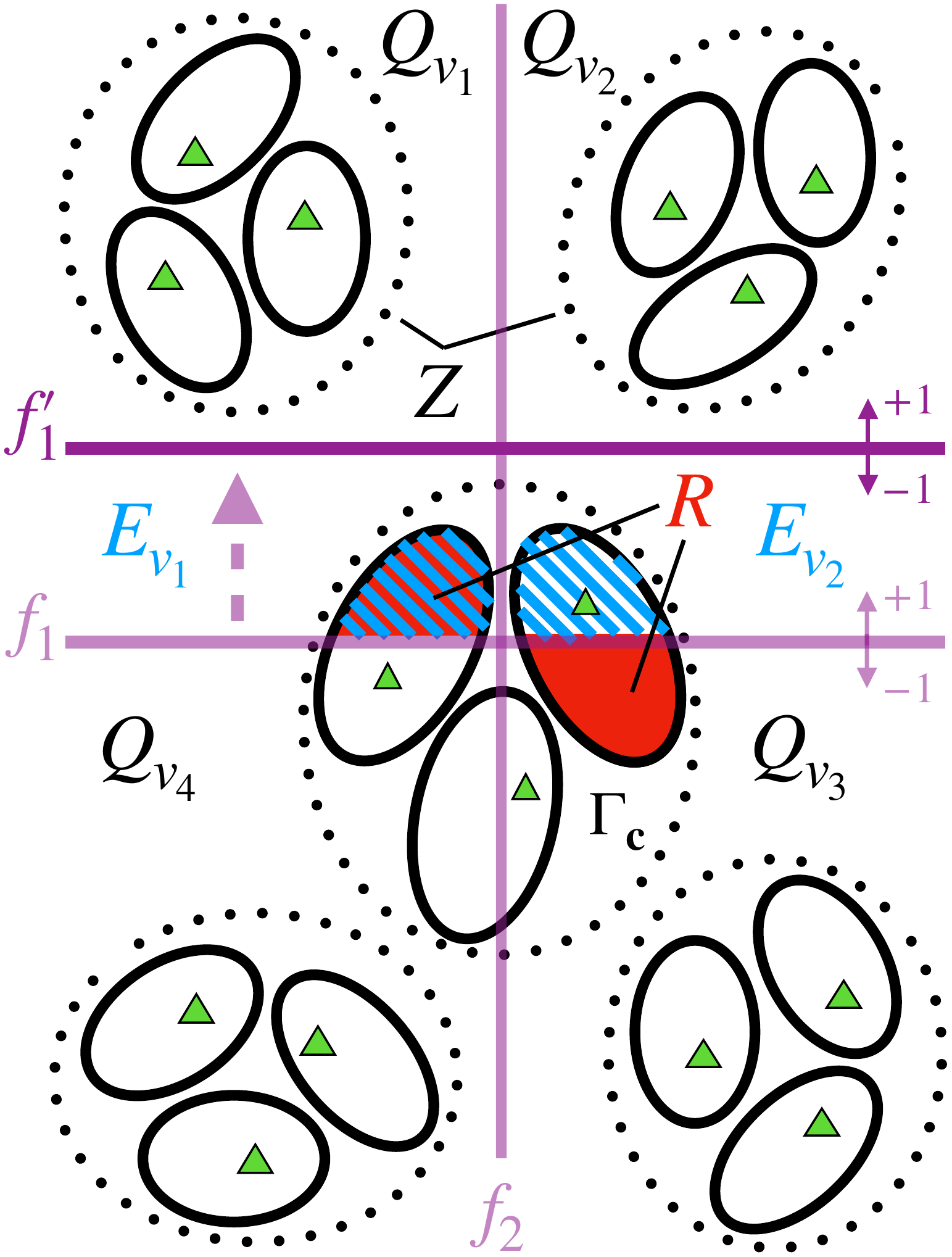}
  \end{minipage}\hfill %\hspace{-0.5in}
  \begin{minipage}[c]{0.65\textwidth}
\caption{Example partitioning of images with $d=2$ and $D\subset \mathbb{R}^2$. Green triangles denote natural images and solid black ellipses denote their corresponding augmentation sets (here we have drawn the augmentation sets as compact convex sets for ease of presentation, but in reality they may be non-simply connected and non-smooth). Clusters are indicated by dotted black ellipses. The non-cluster-preserving representation $g=(f_1,f_2)$, and we construct $g'=(f_1',f_2)$ by making $f_1$ preserve the cluster $\Gamma_{\mathbf{c}}$. The region $R$  consisting of augmentations in $\Gamma_{\mathbf{c}}$ misclassified by $f_1$ is shaded red, and the set $E$ of images which are classified differently by $f_1$ and $f_{1}'$ is indicated by blue diagonal lines. By Assumption \ref{assump-b}, $\|R\| \geq \delta \|E\|_{\circ}$, where $\|B\| \coloneqq \mathbb{P}_{x \sim D\setminus D_\circ}[x \in B]$  and  $\|B\|_\circ \coloneqq \mathbb{P}_{x \sim D_{\circ}}[x \in B]$ for any set of images $B \subseteq D$ \label{fig:agnostic}.
}
\end{minipage}
\end{figure}
%\end{comment}

\begin{proof}[Proof sketch of Theorem \ref{thm:agnostic}] For a non-cluster-preserving representation $g$ that is ``close" to a uniform representation, we construct a nearby representation $g'$ by changing one coordinate of $g$ such that it preserves one additional cluster, and show that the resulting $g'$ achieves smaller InfoNCE loss than $g$. 
% We construct $g'$ by changing only one coordinate of $g$ such that it preserves one cluster that it intersects. 
In particular suppose WLOG that $f_1$ does not preserve the cluster $\Gamma_{\mathbf{c}}$.
% has largest misclassification of positive samples on its most misclassified cluster among all $d$ classifiers in $g$, specifically that $1 \in \arg\max_{j\in [d]} \max_{\mathbf{c}' \in \Sigma_{f_j}} \|\{x^+\in A(x): x\in \Gamma_{\mathbf{c}',\circ}, \; f_j(x)\ne f_j(x^+)\}\|$. 
% meaning Suppose WLOG  that $1 \in \arg\max_{j\in [d]} \max_{\mathbf{c} \in \Sigma_{f_j}} \|\{x^+\in A(x): x\in \Gamma_{\mathbf{c},\circ}, \; f_j(x)\ne f_j(x^+)\}\|$, meaning 
Further, let $f_1^{(\mathbf{c},\sigma)}$ be the smallest perturbation of $f_1$ that preserves $\Gamma_{\mathbf{c}}$, as defined in Definition \ref{defn:reg}.
% $ \sigma \in \arg \min_{\sigma' \in \{-1,1\}} \Vert \{x \in D_\circ \mid f_1(x) \ne f_{1}^{(\mathbf{c},\sigma')} (x)\}\Vert_\circ $, using the notation defined in Assumption \ref{assump-b}, and 
Denote $f_1'= f_1^{(\mathbf{c},\sigma)}$.  {By Assumption \ref{assump:exist}, $f_1' \in \mathcal{F}$.} 
Construct $g' =[f_1',f_2,\dots,f_d] \in \mathcal{G}$. 
Note that $g'$ is equivalent to $g$ on all but one coordinate, and the one differing coordinate differs only on one cluster.

% where 
% \begin{align*}
% \mathcal{L}_{\text{pos}}(g) &:= -\beta \mathbb{E}_{x, x^+}\left[g(x)^\top g(x^+)\right] \nonumber \\
% {L}_{\text{neg}}(g) 
% &:= \mathbb{E}_{x,x^+, \{x^-_i\}_{\ell}} \bigg[\log\bigg({ e^{\beta g(x)^\top g(x^+)}\!+\!\sum_{i=1}^\ell e^{\beta g(x)^\top g(x^-_{i})}  } \bigg)\bigg] \nonumber
% \end{align*}

% To analyze the negative loss, it is helpful to first partition the space 

To characterize the variation in the InfoNCE loss when moving from $g$ to $g'$, we first consider a specific partition of the space of images defined based on the representations $g$ and $g'$. In particular, for a given vertex $v \in \mathcal{H}_d$, consider the set $Q_v := \{x \in {D} : g(x)=v, g'(x)=v \}$ which denotes the set of images that both $g$ and $g'$ map to vertex $v$, and the set $E_v := \{x \in  {D} : g(x)= v, g'(x)\neq v \}$ which denotes the set of images that $g$ maps to $v$ and $g'$ maps to another vertex. Considering these definitions, the set  $Q:= \cup_{v \in \mathcal{H}_d} Q_v$ corresponds to the set of all images that $g$ and $g'$ map to the same vertex, while $E:= \cup_{v \in \mathcal{H}_d} E_v$ denotes the set of all images which $g$ and $g'$ map to different vertices. Based on this construction, it is not hard to observe that for any $v\neq v'$ the sets $Q_v$, $Q_{v'}$, $E_v$, and $E_{v'}$  are disjoint, and each image belongs to either some $Q_v$ or $E_v$. Hence, the concatenation of these sets partitions the space of images. 
 Figure \ref{fig:agnostic} illustrates this partition for a special case with $d=2$. 
 % Finally, note that the sets $Q$ and $E$ are also disjoint and their union is the set of all images. 
 The above partition is critical as we divide our sensitivity analysis into multiple cases based on the location of the positive and negative images in this partition.

% For all $v \in \mathcal{H}_d$, define the sets $Q_v := \{x \in {D} : g(x)=v, g'(x)=v \}$ and $E_v := \{x \in  {D} : g(x)= v, g'(x)\neq v \}$.  

%Let  $Q:= \cup_{v \in \mathcal{H}_d} Q_v$ and $E:= \cup_{v \in \mathcal{H}_d} E_v$, meaning $Q$ is the set of images on which $g$ and $g'$ are the same, whereas $E$ is the set of images on which they differ.
% Note that $\|B\|_\circ = \Delta_{f_1} $ (defined in Assumption \ref{assump-b}) by construction of $g'$. 
%Also observe that all of the $Q_v$'s and $E_v$'s are disjoint, and $Q \cup E = {D}$. 

Let us define $\mathcal{L}^+$ and $\mathcal{L}^-$ as the alignment and uniformity losses in \eqref{eq:InfoNCE}, respectively. We refer to $\mathcal{L}^+$ as the positive part of the loss as it deals with positive samples (augmented images), and we  refer to $\mathcal{L}^-$ as the negative part of the loss as it contains negative samples. 
To prove that moving from $g$ to $g'$ decreases the loss, i.e., $\mathcal{L}(g)- \mathcal{L}(g')>0$, we show that the amount that the positive part of the loss decreases is more than the amount the negative part might increase: 
$\mathcal{L}^+(g) - \mathcal{L}^+(g') > \mathcal{L}^-(g') - \mathcal{L}^-(g)$.
To do so, first, note that the variation in the positive part is
\begin{align}
    \mathcal{L}^+(g) - \mathcal{L}^+(g') = 2\beta \left(\Pr\left[x\in Q, x^+\in E\right] + \Pr\left[x\in Q, x^+\in E\right]\right). \label{eq:sumv} % =: \Lambda^+. _{x \sim D_\circ, x^+\sim A(x)}
\end{align}
This holds as $ \beta g(x)^\top g(x^+)=\beta g(x)'^\top g'(x^+)$ except for the cases that $x\in Q, x^+\in E$ or $x\in Q, x^+\in E$. In these two cases, they differ by $2\beta$. Note that the augmentations that belong to either of these two cases lie in the area shaded red in Figure \ref{fig:agnostic}. {{
% Recall $f_1^{(\mathbf{c},\sigma)}(x)$ as the replacement of $f_1$ in the representation $g'$. Now define the set $Z \coloneqq \{ Q_v:f_1^{(\mathbf{c},\sigma)}(x) = f_1(x') \; \forall x \in Q_v, x' \in E\}$ as the set of images that $g$ and $g'$ 
% (e.g. $\{Q_{v_1},Q_{v_2}\}$ in Fig. \ref{fig:agnostic}). 
We refer to the set of augmentations in this region as $R$, in other words, $R$ is the set of augmentations in $\Gamma_{\mathbf{c}}$ that are classified differently than their natural image by $f_1$. 
Thus, we can write $\mathcal{L}^+(g) - \mathcal{L}^+(g') = 2\beta\|R\|$.
% , i.e. the size of $R$ is the error of $f_1$ on the positive loss on  $\Gamma_{\mathbf{c}'}$. 
}}

%Note that the expression  $\Pr\left[x\in Q, x^+\in E\right] + \Pr\left[x\in E, x^+\in Q \right] $ is exactly the measure of the region $R$ shaded red in Figure \ref{fig:agnostic}.

% In particular, we construct $g'$ by making one classifier in $g$ clean with respect to one cluster 
% Denote by $\Vert \cdot \Vert_\circ$ the measure associated with natural images, that is 
% $$ \Vert A\Vert_\circ \coloneqq \Pr_{x\sim \I}[x\in A].$$ Similarly denote by $\Vert \cdot \Vert$ the measure associated with augmentations, so 
% $$ \Vert A\Vert \coloneqq \Pr_{x\sim \I, x^+\sim \A(x)}[x^+\in A].$$

Next, we consider the difference in negative parts of the loss. To bound this difference, we leverage the partitioning of the space of images defined above to 
% consider all disjoint cases with $x \in Q_v$ and $x \in E_v$, for all vertices $v$. We 
decompose the variation of the losses based on the set that image $x$ belongs to. In particular, if we define the function $\mathcal{L}^-_{B}(g):=\mathbb{E}_{x,x^+, \{x^-_i\}_{\ell}} \big[ \chi({B})  \log\big({ e^{\beta g(x)^\top g(x^+)}+\sum_{i=1}^\ell e^{\beta g(x)^\top g(x^-_{i})}  } \big)\big] $ for any event~$B$, where $\chi({B})$ is the indicator random variable for the event $B$, then using the fact that each image $x$ either belongs to one of the $Q_v$'s or $E_v$'s we can write  
\begin{align}
    \mathcal{L}^-(g') - \mathcal{L}^-(g) = \sum_{v \in \mathcal{H}_d} \big[\mathcal{L}^-_{\{x\in Q_v\}}(g') - \mathcal{L}^-_{\{x\in Q_v\}}(g)\big] +\big[\mathcal{L}^-_{\{x\in E_v\}}(g') - \mathcal{L}^{-}_{\{x\in E_v\}}(g)\big],
\end{align}
Since the cases with $x \in E_v$ utilize similar analysis for those with $x \in Q_v$, we focus on the $x \in Q_v$ cases here and defer the $x \in E_v$ cases to Appendix \ref{appendix:agnostic}. 

To analyze $\mathcal{L}^-_{\{x\in Q_v\}}(g') - \mathcal{L}^-_{\{x\in Q_v\}}(g)$, we first observe that this difference is non-positive for a subset of the $Q_v$'s. Note in Fig. \ref{fig:agnostic} that if $x$ belongs to  $Q_{v_1}$ or $Q_{v_2}$, then moving from $f_1$ to $f_1'$  decreases the representation similarity for some pairs of negative samples (those with $x_i^-\in E$) while keeping the rest the same. So, the negative part of the loss cannot increase going from $g$ to $g'$ if $x$ lies  in either $Q_{v_1}$ or $Q_{v_2}$. We formally define this set of $Q_v$'s as $Z \coloneqq \{Q_{v}: f'_{1}(x)\neq f'_1(x^-) \; \forall x \in Q_{v}, x^-\in E\}$.
At a high level, the reason this definition implies the negative part of the loss does not increase if $x \in Q_v \in Z$ is because $f_1$ and $f_1'$ must agree on $Q_v\in Z$  and disagree on $E$, so since $f_1'$ differs on $Q_v\in Z$  and $E$, $f_1$ must agree on these sets. Thus, the similarity between negative pairs consisting of $x \in Q_v \in Z$ and $x_i^-\in E$ diminishes when moving from $g$ to $g'$. Thus, we have  
\begin{align}
   \sum_{v \in \mathcal{H}_d} \mathcal{L}^-_{\{x\in Q_v\}}(g') - \mathcal{L}^-_{\{x\in Q_v\}}(g) \leq \sum_{v \in \mathcal{H}_d} \mathcal{L}^-_{\{x\in Q_v\notin Z\}}(g') - \mathcal{L}^-_{\{x\in Q_v\notin Z\}}(g)  
\end{align}

% We know $f_1(x) = f_1'(x)$ by definition of $Q_v$

% If $x^-\in Q_{v'}$ then $f_1(x^-) = f_1'(x^-)$ by definition of $Q_v'$
% Therefore $g'(x)^\top g'(x^-) = g(x)^\top g(x^-)$ since no other coordinates change

% If $x^-\in E$ then $f_1(x^-) \neq f_1'(x^-)$ by definition of $E$
% Also, $f_1'(x) \neq f_1'(x^-)$  by definition of Z
% Therefore $f_1(x) = f_1(x^-)$
% Therefore $g'(x)^\top g'(x^-) = g(x)^\top g(x^-) - 2 $ since no other coordinates change

% In all cases $g'(x)^\top g'(x^-) \leq g(x)^\top g(x^-)$.

% then $\mathcal{L}^-_{\{x\in Q_v\}}(g') - \mathcal{L}^-_{\{x\in Q_v\}}(g) \leq 0$. This is best understood via  Fig. \ref{fig:agnostic}, in which $Z = \{Q_{v_1},Q_{v_2}\}$. Observe that if $x \in Q_{v_1}$ or $x \in Q_{v_2}$, the representations of all negative samples only become less 

% We first note that  $\mathcal{L}^-_{\{x \in Q_v \in Z\}}(g) - \mathcal{L}^-_{\{x \in Q_v \in Z\}}(g')> 0$ since  $g'(x^-)$ is less similar to $g'(x)$ than $g(x^-)$ is to $g(x)$ for all $x^- \in B$ and $x \in Q_v \in Z$ (as seen in Fig. \ref{fig:agnostic}), and the similarities are unchanged for $x^+$ and all other $x^-$. So, we can upper bound $$\mathcal{L}^-(g') - \mathcal{L}^-(g) \leq \sum_{v \in \mathcal{H}_d} \mathcal{L}^-_{\{x\in Q_v\notin Z\}}(g') - \mathcal{L}^-_{\{x\in Q_v\notin Z\}}(g) +\mathcal{L}^-_{\{x\in E_v\}}(g') - \mathcal{L}^{-}_{\{x\in E_v\}}(g).   $$

Now, for each event $\{x \in Q_v\notin Z
\}$, we consider two cases depending on the number of negative samples in $Q_v$. (1) If there is at least one negative sample $x_i^- \in Q_v$, then
$g'(x)= g'(x_i^-) = g(x) = g(x_i^-)$, so both the log-sums in $\mathcal{L}^-_{\{x\in Q_v\notin Z\}}(g')$ and $\mathcal{L}^-_{\{x\in Q_v\notin Z\}}(g)$ are dominated by $e^{\beta d}$ terms and the losses do not significantly differ (using that log-sum is approximately a max operation). (2) If no negative samples lie in $Q_v$, then the dominant terms in the log-sum for $\mathcal{L}^-_{\{x\in Q_v\notin Z\}}(g')$ may be a factor of $e^{2\beta}$ larger than the dominant terms for $\mathcal{L}^-_{\{x\in Q_v\notin Z\}}(g)$, requiring a sharp analysis to control the probability   these events occur.
Letting $n_{1,v}$ denote the number of negative samples in $Q_v$, we define these two  cases above  as $B_{v,1}\coloneqq\{x \in Q_v \notin Z, n_{1,v}>0\}$ and $B_{v,2
}\coloneqq \{x \in Q_v \notin Z, n_{1,v}=0\}$, respectively. Note that they form a partition of $\{x \in Q_v\notin Z\}$, so we have $\mathcal{L}^-_{\{x\in Q_v\notin Z\}}(g') - \mathcal{L}^-_{\{x\in Q_v\notin Z\}}(g) = \sum_{j=1}^2 \mathcal{L}^-_{B_{v,j}}(g') - \mathcal{L}^-_{B_{v,j}}(g)$. 
We detail each case below, where $n_2$ is the number of negative samples in $E$.

 \textbf{Case 1:} $B_{1,v} := \{x \in Q_v \notin Z, n_{1,v}>0\}$. In this case %$ g'(x)^\top g'(x_{i}^-) = g(x)^\top g(x_{i}^-) = d$ for at least one $i \in [\ell]$, meaning that
the dominant terms in the log sums for $\mathcal{L}^-_{B_{1,v}}(g')$ and $\mathcal{L}^-_{B_{1,v}}(g)$ are both $e^{\beta d}$, although the losses may  differ in the number of such terms, which can be, in the worst case, $n_{1,v}+n_2+1$ for $g'$ and $n_{1,v}$ for $g$. This is because  $g$ and $g'$ can disagree on at most $n_2$ negative samples, and they can also disagree on the positive sample. Thus, $\!\mathcal{L}_{B_{1,v}}^-(g')\!- \!\mathcal{L}_{B_{1,v}}^-(g) \! \leq \mathbb{E}\big[\chi(B_{1,v}) \log\big(\tfrac{ n_{1,v} + n_2 + 1 }{n_{1,v}} \big) \big]  \leq \mathbb{E}[\chi(B_{1,v}) \frac{2(n_2+1)}{n_{1,v}+1}]$, where the last inequality follows using $\log(1+x) \leq x$.                                                                                    
% If $n_2=0$ then the loss of $g'$ can only increase if $x^+\in B$, and a simple calculation shows that $\mathcal{L}_{E_1\cap \{n_2=0\}}^-(g') - \mathcal{L}_{E_1\cap \{n_2=0\}}^-(g)\leq \log(2) \mathbb{P}(x\in Q_v, x^+ \in B) \mathbb{P}(n_{1,v}=0, n_2=0)$. For general $n_2$ we can show that 
% \begin{align}
%     \mathcal{L}_{E_1}^-(g') - \mathcal{L}_{E_1}^-(g)\leq \mathbb{E}\big[ \chi\{E_1\} \log( ) \big] 
% \end{align}
% \begin{align}
%     \!\mathcal{L}_{B_{v,1}}^-(g')\!- \!\mathcal{L}_{B_{v,1}}^-(g) \!\leq \mathbb{E}\big[\chi\{B_{v,1}\} \log\big(\tfrac{ n_{1,v}e^{\beta d} + (n_2+1)e^{\beta d} }{n_{1,v}e^{\beta d} } \big) \big] \leq \mathbb{E}\big[\chi\{B_{v,1}\} \log\big(\tfrac{ n_{1,v} + n_2 + 1 }{n_{1,v}} \big) \big], \nonumber
% \end{align}
% where $ \log\big(\tfrac{ n_{1,v} + n_2 + 1 }{n_{1,v}} \big)\leq \frac{2(n_2+1)}{n_{1,v}+1}$ using $\log(1+x) \leq x$.  
We bound $\mathbb{E}[\chi(B_{1,v}) \frac{2(n_2+1)}{n_{1,v}+1}]$ by writing the trinomial expansion of the expectation (note that the joint distribution of ($n_{1,v},n_2$) is trinomial with parameters ($\|Q_v\|_\circ, \|E\|_\circ$)), 
% and factoring the $n_2$ and $n_{1,v}+1$ terms to obtain a new trinomial 
and further simplifying to result in an upper bound of $\|E\|_\circ$. Importantly, this bound is  $O(\|R\|)$ by Assumption \ref{assump-b} and independent of $\beta$, so we control it by making $\beta$ large enough. 
% Note that this analysis avoids requiring scaling $\beta$
% inversely with $\|R\|$.

% Thus, a naive bound gives $$\mathcal{L}_{E_1}^-(g')- \mathcal{L}_{E_1}^-(g) \leq \mathbb{E}\big[\chi\{E_1\} \log\big(\tfrac{ (\ell+1)e^{\beta d}  }{ n_{1,v}e^{\beta d}} \big) \big]\leq \mathbb{P}(x\in Q_v\notin Z)\log(\ell+1),$$ 
% where the RHS is $O(\beta \|R\|)$ for sufficiently large $\beta$. We give a more fine-grained analysis to avoid scaling  $\beta$ with $1/\|R\|$. 

\textbf{Case 2:} $B_{2,v} := \{x \in Q_v \notin Z, n_{1,v}=0\}$. 
Since here there is no shared dominant $e^{\beta d}$ term in the log-sums for $\mathcal{L}^-_{B_{2,v}}(g')$ and $\mathcal{L}^-_{B_{2,v}}(g)$, the dominant terms for $g'$ may involve strictly larger similarities than those for $g$, corresponding to $x_i^- \in E$ and $x^+ \in E$ (the only samples on which $g'$ and $g$ can disagree).
% Specifically, the loss for $g$ may be up to a factor $2\beta$ larger than the loss for $g$ for some batches  $(x,x^+,\{x_i^-\}_\ell)$, which means that $\mathcal{L}_{B_{v,2}}^-(g')- \mathcal{L}_{B_{v,2}}^-(g)$ is up to $2\beta$ times the probability that these batches are drawn. 
These events are bounded depending on whether $n_2=0$. If $n_2=0$, the loss of $g'$ exceeds that of $g$  iff $g'(x)^\top g'(x^+) = g(x)^\top g(x^+) + 2$, which occurs iff $x^+ \in E$. 
% Thus  $\mathcal{L}_{\{x \in Q_v \notin Z, n_1=0,n_2=0\}}^-(g') - \mathcal{L}_{\{x \in Q_v \notin Z, n_1=0,n_2=0\}}^-(g)\leq 2\beta \mathbb{P}(x \in Q_v, x^+ \in B) \mathbb{P}(n_{1,v}=0,n_2=0)$. 
% For $n_2 >0$, a similar argument shows that $\mathcal{L}_{\{x \in Q_v \notin Z, n_1=0,n_2>0\}}^-(g') - \mathcal{L}_{\{x \in Q_v \notin Z, n_1=0,n_2>0\}}^-(g)\leq 2\beta \mathbb{P}(x \in Q_v) \mathbb{P}(n_{1,v}=0,n_2>0)$.
If $n_2 >0$, the loss can increase by $2\beta$ regardless of the value of $x^+$. 
Combining these sub-cases yields
\begin{align}\mathcal{L}_{B_{2,v}}^-(g') - \mathcal{L}_{B_{2,v}}^-(g)&\leq 2\beta \mathbb{P}(x \in Q_v\notin Z, x^+ \in E) \mathbb{P}(n_{1,v}=0,n_2=0) \nonumber \\
&\quad + 2\beta \mathbb{P}(x \in Q_v\notin Z) \mathbb{P}(n_{1,v}=0,n_2>0). \nonumber
\end{align}
For each term above, we need to show that the coefficient of $2\beta$ is $o(\|R\|)$ even after it is summed over $v$. Note that both terms scale with the probability that $x \in Q_v \notin Z$ {\em and} no negative samples are in $Q_v$. To control this probability we leverage that the distribution induced by $g$ is close to uniform in the sense that every ``active'' vertex $v$ has mass $\mathbb{P}[g(x)=v]=\tilde{\Omega}(\frac{1}{d2^d})$. We use this fact to bound $\mathbb{P}[x \in Q_v]$. Note that the set of images that $g$ maps to $v$ is $Q_v \cup E_v$, yet for all $Q_z \notin Z$, $E_v=\emptyset$ since, at a high level, $f_1$ must separate these $Q_v$ from $E$. So, $\mathbb{P}[g(x)=v] = \mathbb{P}[x \in Q_v]$ for all $Q_v\notin Z$. Therefore, we can show that  with large $\ell$  it is highly unlikely that $x\in Q_v \notin Z$ and none of the negative samples are in $Q_v$. To complete the bounds, we leverage the facts that  $\mathbb{P}[x \in Q_v\notin Z, x^+ \in E]$ scales with $\|R\|$ for the first term, and $\mathbb{P}[n_2>0]$ scales with $\|E\|_\circ$ for the second term, where $\|E\|_\circ = O(\|R\|)$ by Assumption \ref{assump-b}.

After performing a similar analysis for $\{ x \in E_v\}$ and summing the resulting bounds over $\{v \in \mathcal{H}_d\}$, as in \eqref{eq:sumv},  we obtain $\mathcal{L}^-(g') - \mathcal{L}^-(g)< 2 \beta \|R\| = \mathcal{L}^+(g) - \mathcal{L}^+(g')$.
\begin{comment}
{\color{gray}
Note that the two events above form a partition of $\{x \in Q_v\notin Z\}$, i.e. $\mathcal{L}^-_{\{x\in Q_v\notin Z\}}(g') - \mathcal{L}^-_{\{x\in Q_v\notin Z\}}(g) = \sum_{j=1}^2 \mathcal{L}^-_{B_j}(g') - \mathcal{L}^-_{B_j}(g)$. After summing over $\{v \in \mathcal{H}_d: Q_v\notin Z\}$ and performing a similar analysis for $\{ x \in E_v\}$, we obtain
\begin{align}
    \mathcal{L}^-(g') - \mathcal{L}^-(g) &\leq  2\beta \sum_{v: Q_v \notin Z} \big(\mathbb{P}(x \in Q_v \cap x^+ \in E) (1 - \|Q_v\|_\circ)^\ell   +   \ell \|B\|_\circ \|Q_v \|_\circ (1 - \|Q_v \|_\circ)^{\ell-1} \big)\nonumber\\
&\quad +  2\beta \sum_{v\in \mathcal{H}_d} \|E_v\|_\circ (1 - \|E_v\|_\circ)^\ell  + o(\beta \|R\|)  \nonumber
    % \|Q_v\|_\circ(1 - \|Q_v\|_\circ - \|B\|_\circ)^\ell + 
\end{align}
To bound the RHS, we leverage $g$ being close to uniform to upper bound $\sum_{v \in \mathcal{H}_d: Q_v \notin Z} \|Q_v \|_\circ (1 - \|Q_v \|_\circ)^{\ell-1}$, noting that $\mathcal{D}_g(v) = \|Q_v\|_\circ$ iff $Q_v \notin Z$. Finally, we show that $\sum_{v \in \mathcal{H}_d} \mathbb{P}(x \in Q_v \cap x^+ \in B) (1 - \|Q_v\|_\circ)^\ell \lesssim \frac{\|R\|}{e}$,  and  $\sum_{v \in \mathcal{H}_d} \|E_v\|_\circ (1 - \|E_v\|_\circ)^\ell\lesssim \frac{\|E\|_\circ}{e} \leq \frac{\|R\|}{\delta e}$ (by Assumption \ref{assump-b}), meaning that $\mathcal{L}^-(g') - \mathcal{L}^-(g) < \mathcal{L}^-(g) - \mathcal{L}^+(g') $ for $\delta \geq 0.6$, completing the proof. }
% {\color{red} need to use Weighted InfoNCE loss?}
\end{comment}

\end{proof}

Theorem \ref{thm:agnostic} shows that for large $\ell$ and $\beta$, all minimizers of the InfoNCE loss that are near-uniform must be cluster-preserving regardless of the sizes of each cluster or the number of clusters. 
% Combined with Lemma \ref{lemma:entropy}, which shows that the InfoNCE loss almost behaves like the Shannon entropy and therefore encourages uniform representations, Theorem \ref{thm:agnostic} suggests that  in the agnostic case. 
However, it does not rule out that there could be a highly non-uniform and non-cluster-preserving optimal representation. In Appendix \ref{app:weighted}, we show that if we re-weight the alignment and uniformity losses in the InfoNCE loss, we can ensure that {\em all} minimizers of the InfoNCE loss are cluster-preserving.

% {\color{red} define near-uniform formally before theorem statement?}

% Discussion of why this result is significant? 

% Weighted InfoNCE loss for non-uniform distributions. 

% {\color{red}Finally, we have the following Lemma that states that, among clean representations, the InfoNCE loss maximizes the entropy of the pre-trained representations when $l=\{1, 2\}$. Note that the result is similar to that of Lemma \ref{thm:uniform}, however we can completely characterize the ordering on distributions in this case and the the proof is specific to $d=1$ (Lemma \ref{thm:uniform} only holds for $d>4$). 

% \begin{lemma}
%     Equation (\ref{eq:Ltilde}) is minimized by the clean representation $g$ that induces a distribution $\mathcal{D}_g$ with maximal entropy.
% \end{lemma}
% The proof follows by writing out the loss explicitly and optimizing over the one parameter family of distributions of Rademacher random variables, and is provided in Appendix~\ref{appendix:maxentropy}.}

% \section{Proof Sketch for Theorem \ref{thm:uniform}}

% \input{Theory/SparseCoding.tex}
% \input{Experiments.tex}
 \section{Conclusion}

 We study properties of minimizers of the InfoNCE loss optimized over function classes with restricted complexity relative to the complexity of augmentations in the dataset, in realistic settings with disjoint augmentation sets and finite negative samples.  Our results show that such representations are uniform and cluster-preserving in the realizable setting, and must be cluster-preserving if they are close to uniform in the agnostic setting. We believe that our novel analytical tools, namely our stochastic argument for the optimality of representations and our inverse partitioning of the space of images, may be of use for future studies of the InfoNCE loss.

% We do not explore the role of bias induced by the optimization algorithm, although it is feasible that it induces a similar bias as the function class, i.e.
% For instance, it may be the case the that the function class contains representations that can cleanly separate all augmentation sets from each other, meaning that the clusters induced by $\mathcal{F}_c$ consist only of single augmentation sets. However, 
% the set of clean  representations attainable by a particular algorithm may induce a meaningful clustering of the data. We leave future work to study this hypothesis.

% Acknowledgments---Will not appear in anonymized version
% \acks{We thank a bunch of people and funding agency.}

\section*{Acknowledgements}
This research is supported in part by NSF Grants 2127697, 2019844 and 2112471, ARO Grant
W911NF2110226, the Machine Learning Lab (MLL) at UT Austin, and the Wireless Networking and Communications Group (WNCG) Industrial Affiliates Program.

% This research is supported in part by NSF Grants 2127697, 2019844, and 2112471, ARO Grant
% W911NF2110226, the Machine Learning Lab (MLL) at UT Austin, and the Wireless Networking
% and Communications Group (WNCG) Industrial Affiliates Program.

\bibliography{refs}
\bibliographystyle{plainnat}
\newpage
\appendix

\newpage

\section{Proof of Theorem \ref{thm:uniformandclean}}
\label{appendix:realizable}

\noindent We prove Theorem~\ref{thm:uniformandclean}. To do so, we first prove Lemma~\ref{thm:uniform}, and use this to prove our main result.

\subsection{Proof of Lemma \ref{thm:uniform}}
\label{sec:lemma-prf}

\begin{lemma}[Lemma \ref{thm:uniform} Restated] \label{thm:uniform-app}
     If Assumptions \ref{assump:spikyaug} and \ref{assump:realizability} hold, $\beta > c \log d$ for an absolute constant $c$, $d>3$, and $\ell\geq 1$, then $g^*\in \underset{g\in \mathcal{G}_c}{\arg\min}  \ {\mathcal{L}}(g) $ if and only if $g^*$ is uniform.
\end{lemma}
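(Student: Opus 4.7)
The plan is to reduce the optimization over clean representations to an optimization over probability distributions on the hypercube $\mathcal{H}_d$, and then show by a Markov-chain monotonicity argument that the uniform distribution is the unique minimizer. For any $g\in\mathcal{G}_c$, every coordinate classifier is clean, so $g(x)=g(x^+)$ and the alignment term $\beta g(x)^\top g(x^+)$ reduces to the constant $\beta d$. Pulling this constant out of the uniformity term via $\log\bigl(e^{\beta d}+\sum_i e^{\beta g(x)^\top g(x_i^-)}\bigr)=\beta d+\log\bigl(1+\sum_i e^{\beta g(x)^\top g(x_i^-)-\beta d}\bigr)$ reduces $\mathcal{L}(g)$ on $\mathcal{G}_c$ (up to an additive constant) to $\hat{\mathcal{L}}(g)$ as in \eqref{tilde_loss}. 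Moreover, by Assumption~\ref{assump:spikyaug} every clean classifier is cluster-preserving, so $g$ is constant on clusters; the law of $g(x)$ for $x\sim D_\circ$ is a distribution $\mathcal{D}_g$ on $\mathcal{H}_d$, and $\hat{\mathcal{L}}(g)=\tilde{\mathcal{L}}(\mathcal{D}_g)$ as in \eqref{eq:uniformity}. By Assumption~\ref{assump:realizability}, the uniform distribution $\mathcal{U}$ is realized by some $g^u\in\mathcal{G}_c$, so it suffices to show that $\mathcal{U}$ is the unique minimizer of $\tilde{\mathcal{L}}$ on $\{\mathcal{D}_g : g\in\mathcal{G}_c\}$ (and in fact on all distributions on $\mathcal{H}_d$).

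Next, I introduce the lazy simple random walk on $\mathcal{H}_d$: with probability $\tfrac12$ do nothing, and with probability $\tfrac12$ flip a uniformly chosen coordinate. Let $M$ denote the corresponding transition operator on distributions. The walk is irreducible, aperiodic, and reversible with stationary distribution $\mathcal{U}$. I apply $M$ independently to each of the $\ell+1$ samples $(y,y_1^-,\dots,y_\ell^-)$, which replaces the product law $\mathcal{D}_g^{\otimes(\ell+1)}$ by $(M\mathcal{D}_g)^{\otimes(\ell+1)}$. The strategy is to show that each step of this chain weakly decreases $\tilde{\mathcal{L}}$, and strictly decreases it whenever $\mathcal{D}_g\neq\mathcal{U}$.

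The heart of the proof, and the step I expect to be the main obstacle, is this monotonicity claim: $\tilde{\mathcal{L}}(M\mathcal{D})\le\tilde{\mathcal{L}}(\mathcal{D})$ for every $\mathcal{D}$, with strict inequality unless $\mathcal{D}=\mathcal{U}$. Once established, iterating gives $\tilde{\mathcal{L}}(M^t\mathcal{D})\le\tilde{\mathcal{L}}(\mathcal{D})$ for all $t$, and since $M^t\mathcal{D}\to\mathcal{U}$ ergodically, continuity of $\tilde{\mathcal{L}}$ yields $\tilde{\mathcal{L}}(\mathcal{U})\le\tilde{\mathcal{L}}(\mathcal{D})$ with equality only at a fixed point of $M$, i.e.\ at $\mathcal{D}=\mathcal{U}$. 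The monotonicity itself I plan to derive from a coupling/Jensen argument: one step of $M$ replaces each sample by a convex combination of itself and its one-bit neighbors, and the integrand $\log\bigl(1+\sum_i e^{\beta y^\top y_i^--\beta d}\bigr)$ is convex in the inner products $y^\top y_i^-$. The delicate point is that $y$ is coupled with every $y_i^-$ inside the same log-sum-exp, so Jensen cannot be applied naively in both variables at once. My approach is to condition on the coordinate being flipped, writing $\tilde{\mathcal{L}}(M\mathcal{D})$ as an average over $k\in[d]$ of a ``coordinate-$k$-symmetrized'' version of $\tilde{\mathcal{L}}(\mathcal{D})$, and then exploiting that a single bit flip at coordinate $k$ acts as a sign flip on the contribution of that coordinate to $y^\top y_i^-$. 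Pairing up samples with their $k$-flipped copies and applying convexity of log-sum-exp on each coupled pair should give the weak inequality, and strict inequality whenever $\mathcal{D}$ assigns unequal mass to some pair of vertices related by a single-bit flip. The conditions $\beta>c\log d$ and $d>3$ are expected to enter here to ensure the exponential terms dominate the additive $1$ inside the log, so that the convexity gap from Jensen does not vanish. Combining this monotonicity with the reduction above, every minimizer $g^*\in\mathcal{G}_c$ must satisfy $\mathcal{D}_{g^*}=\mathcal{U}$, i.e., $g^*$ is uniform; conversely any uniform $g\in\mathcal{G}_c$ attains the minimum, completing the ``if and only if'' statement.
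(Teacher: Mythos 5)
Your overall architecture is the same as the paper's: pass to the induced distribution $\mathcal{D}_g$ on $\mathcal{H}_d$ (using cleanness to freeze the alignment term at $\beta d$), run the lazy bit-flip walk independently on all $\ell+1$ samples, argue that each step strictly decreases $\tilde{\mathcal{L}}$ unless the distribution is already uniform, and finish via stationarity/convergence (Lemma \ref{lem:stationary}) plus realizability. The problem is that the one step you yourself flag as ``the main obstacle'' --- the single-step monotonicity $\tilde{\mathcal{L}}(M\mathcal{D})<\tilde{\mathcal{L}}(\mathcal{D})$ for non-uniform $\mathcal{D}$ --- is exactly where all of the technical content of the paper's proof lives, and the mechanism you propose for it does not work. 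Jensen's inequality compares $\mathbb{E}[f(X)]$ with $f(\mathbb{E}[X])$, whereas here both $\tilde{\mathcal{L}}(\mathcal{D})$ and $\tilde{\mathcal{L}}(M\mathcal{D})$ are expectations of the \emph{same} function under two different product measures, so convexity of the log-sum-exp in the inner products gives no ordering between them. More tellingly, the inequality is simply false for generic decreasing, convex functions of Hamming distance, so no argument that only uses ``decreasing and convex'' can succeed: already in the quadratic case $\ell=1$, take $d=4$, $\phi(h)=\max(1-h/2,0)$ (decreasing and convex) and $p(v)\propto 1+\epsilon\, v_1v_2v_3v_4$; the Walsh coefficient of $\phi$ on the full character is $\sum_h(-1)^h\binom{4}{h}\phi(h)=-1<0$, and one step of the lazy walk annihilates that Fourier mode of $p$, so the energy $\sum_{u,v}p_u p_v\,\phi(h(u,v))$ strictly \emph{increases}. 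Hence any valid proof must exploit the specific quantitative structure of $h\mapsto\log(1+e^{-2\beta h})$ (essentially that for $\beta\gtrsim\log d$ it is dominated by the product-kernel term $e^{-2\beta h}$), not convexity, and your pairing-with-flipped-copies step is where the argument would fail.

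This is also where your placement of the hypotheses goes wrong: since $g(x)^\top g(x_i^-)\le d$, every exponential inside the log is at most $1$, so the exponentials never ``dominate the additive $1$''; the condition $\beta>c\log d$ is needed for the opposite reason. In the paper's proof one passes to the Markov chain of Hamming distances $h_i$, telescopes over the $\ell$ negatives, conditions on the remaining terms $C_{-i}$, observes that the conditional increment is exactly zero under the stationary uniform law \eqref{eq:ww}, and then compares the non-uniform case to the uniform one via Lemma \ref{lem:16} (the excess mass of $\tilde{\mathcal{D}}_h$ at $h=0$ controls, with binomial weights $\binom{d}{k}$, the excess at every shell $k$) and Lemma \ref{lem:17} ($\sum_{k\ge0}\binom{d}{k}\Delta_C(k)<0$). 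In that computation the $\log(1+1/C)$ contributions from different shells cancel exactly, and the sign is decided by a gain of order $e^{-2\beta}d$ from $h=0$ against adverse, binomially amplified terms of order $d^2e^{-4\beta}$ from $h\ge1$; this is precisely where $\beta>c\log d$ and $d>3$ are used, and it is also what delivers the strict inequality needed for the ``only if'' direction. Without an argument of this type, your proposal establishes the reduction and the limiting argument but leaves the central inequality unproved.
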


\begin{proof} 
To prove this claim we first note that since we are optimizing over $\mathcal{G}_c$ and $g(x)=g(x^+)$ for all $x\in D_\circ,x^+\in A(x) $ and $g \in \mathcal{G}_c$, the optimization problem $\min_{g\in\mathcal{G}_c} \mathcal{L}(g)$ is equivalent to
\begin{equation}\min_{g \in \mathcal{G}_c}  \hat{\mathcal{L}}(g) = \mathbb{E}_{x,\{x_i^-\}_{\ell}}\left[  \log\left(1+ \sum_{i= 1}^\ell e^{\beta g(x)^\top g(x_{i}^-) - \beta d} \right) \right].
\end{equation}
Note that we can think of this as an optimization over distributions over the hypercube induced by $g$. That is, consider the random variable $Y$ supported on the hypercube that is given by $Y = g(X)$ for $X\sim D_\circ$, and denote its distribution as $\mathcal{D}_g$. Using this notation, the optimization above can be rewritten in terms of distributions
\begin{align*}
\min_{\mathcal{D}\in \{\mathcal{D}_g: g\in \G\}} \tilde{\mathcal{L}}(\mathcal{D}) = \mathbb{E}_{y,\{\bar{y}_i\}_{\ell} \sim \mathcal{D}}\left[  \log(1+ \sum_{i= 1}^\ell e^{\beta y^\top \bar{y}_{i} - \beta d} ) \right].
\end{align*}
where $\bar{y}_{i}$ corresponds to the representation of the $i$-th negative sample, and here we overload notation by using $ \sim \mathcal{D}$ to denote an i.i.d. draw from the distribution $\mathcal{D}$.

Next, we define a Markov chain as follows. We begin with a fresh set of samples denoted by $y^0, \bar{y}_1^0, \dots, \bar{y}_\ell^0$ that are drawn i.i.d. from the distribution $\mathcal{D}^0 = \mathcal{D}$. At each step, for each sample, we either with probability $\frac{1}{2}$ flip one bit uniformly at random, or with probability $\frac{1}{2}$ we do not change it. Concretely, we take for all $i$ a random variable $j_{i, t}\in [d]$ (both uniformly random and independent of each other and every other such sample) and set $(\bar{y}^{t}_i)_{j_{i, t}} = -(\bar{y}^{t-1}_i)_{j_{i, t}}$. After this operation, each $y_i^t$ (and $y$) can be considered to be an i.i.d. drawn from $\mathcal{D}^t$, where $\mathcal{D}^t$ is another distribution over $\mathcal{H}_d$. We show that $\mathcal{L}(\mathcal{D}^{t-1})>\mathcal{L}(\mathcal{D}^{t})$ if $\mathcal{D}^{t-1}$ is not uniform. Since $\{\mathcal{D}^{t}\}_t$ converges to the uniform distribution by Lemma~\ref{lem:stationary}, these two arguments together imply the claim of Lemma~\ref{thm:uniform}.
% . Since the joint distribution of $y^t, \{\bar{y}^t_i\}_\ell$ converges to the joint uniform distribution, we are done. 

To show $\mathcal{L}(\mathcal{D}^{t-1})>\mathcal{L}(\mathcal{D}^{t})$ for the case that $\mathcal{D}^{t-1}$ is not the uniform distribution, considering the definition of  $\mathcal{L}$ we need to study the variation in the inner products between the vectors $(y^t, \bar{y}^t_i)$ when we move from one distribution to another. Note that as these vectors are binary vectors, their inner product can be written as a function of their Hamming distances. More precisely, for any pair $(y, y')$ we have $y^\top y'=d- 2  h(y,y')$, where the Hamming distance between them is defined as $h(y,y')\coloneqq \sum_{j=1}^\ell \chi\{y_j\neq y'_j\}$ or the number of bits that are different in the two points $y,y'$ (note that $\chi\{U\}$ is the indicator variable for the event $U$).

%we will argue in terms of Hamming distances between negative pairs $(y^t, \bar{y}^t_i)$. For two points $y,y'\in\mathcal{H}_d$, . . Note that $2h(y,y') = {d-y^\top y'}{}$.
% consider the induced evolution of each of the exponents in the loss: $(y^{t-1})^\top \bar{y}^{t-1}_i =  d-2h(y^{t-1}, \bar{y}_i^{t-1}) $ to $(y^t)^\top \bar{y}^{t}_i = d-h(y^{t}, \bar{y}_i^{t})$, where here , or the number of bits that are different in the two points $y,y'$. 
For ease of notation we let $h_i^t\coloneqq h(y^t,\bar{y}_i^t)$ for all $i,t$. 
Due to the fact that each of the $\bar{y}_i^t$'s are independent and identically distributed and are evolving according to a Markov chain, the $h_i^t$'s also evolve according to a Markov chain. In particular, for every distribution $\mathcal{D}^t$ over $\uH_d$ that describes the distribution of each $\bar{y}_i^t$, there is induced a distribution $\tilde{\mathcal{D}}_h^t$ over $[d]$ that specifies the distribution for $h_i^t$. By direct computation, one can check that $h_i^t$ has the following transition kernels which differ for different values of $h_i$:
% the Markov chain over $h_i$ induced by the Markov chain over $y, \{\bar{y}_i\}_\ell$ has the following transition kernel:

\vspace{2mm}
For $2\le h_i^{t-1} \le d-2:$
$$h_i^{t}\to 
\begin{cases}
h_i^{t-1}-2 & \text{w.p.}  ~\frac{(h_i^{t-1})(h_i^{t-1}-1)}{4d^2}\\
h_i^{t-1}-1 & \text{w.p.}  ~\frac{h_i^{t-1}}{2d}\\
h_i^{t-1} & \text{w.p.}  ~\frac{1}{4}+\frac{(h_i^{t-1})(d-h_i^{t-1}+1)+(d-h_i^{t-1})(h_i^{t-1}+1)}{4d^2}\\
h_i^{t-1}+1 & \text{w.p.}  ~\frac{d-h_i^{t-1}}{2d}\\
h_i^{t-1}+2 & \text{w.p.} ~\frac{(d-h_i^{t-1})(d-h_i^{t-1}+1)}{4d^2}
\end{cases}$$

For $h_i^{t-1} = 1:$
\begin{equation}\label{h_one}
1\to 
\begin{cases}
0 & \text{w.p.}  ~\frac{1}{2d}\\
1 & \text{w.p.}  ~\frac{1}{4}+\frac{1}{4d}+\frac{2(d-1)}{4d^2}\\
2 & \text{w.p.}  ~\frac{d-1}{2d}\\
3 & \text{w.p.}  ~\frac{(d-1)(d-2)}{4d^2}\end{cases}
\end{equation}

For $h_i^{t-1} = d-1:$
$$1\to 
\begin{cases}
d & \text{w.p.}  ~\frac{1}{2d}\\
d-1 & \text{w.p.}  ~\frac{1}{4}+\frac{1}{4d}+\frac{2(d-1)}{4d^2}\\
d-2 & \text{w.p.}  ~\frac{d-1}{2d}\\
d-3 & \text{w.p.}  ~\frac{(d-1)(d-2)}{4d^2}\end{cases}$$

For $h_i^{t-1} = 0:$
\begin{equation}\label{h_zero}
0\to 
\begin{cases}
0 & \text{w.p.}  \frac{1}{4}+\frac{1}{4d}\\
1 & \text{w.p.}  \frac{1}{2}\\
2 & \text{w.p.}  \frac{d-1}{4d}
\end{cases}
\end{equation}

For $h_i^{t-1} = d:$
$$d\to 
\begin{cases}
d & \text{w.p.}  \frac{1}{4}+\frac{1}{4d}\\
d-1 & \text{w.p.}  \frac{1}{2}\\
d-2 & \text{w.p.}  \frac{d-1}{4d}
\end{cases}
$$

For ease of notation we drop the $t$ superscripts and refer to quantities at time $t-1$ without any superscript, and quantities at times $t$ with a $'$ superscript, e.g. $\mathcal{D}^{t-1}$  as $\mathcal{D}$ and  $\mathcal{D}^{t}$  as $\mathcal{D}'$, for the remainder of the proof.

% -------------------

% Next, consider any

% A distribution $\mathcal{D}$ over $y$ induces a distribution over the Hamming distance between two randomly sampled $y_1, y_2\sim \mathcal{D}$. We will denote this distribution as $\tilde{\mathcal{D}}_h$. 
Next, let 
$\textbf{h} = [h_1,\dots,h_\ell]$ denote the vector of concatenated Hamming distances between $y$ and $\{\bar{y}_i\}_{i=1}^\ell$. Using the above definitions, and the fact that $d-y^\top \bar{y}_i$ is twice of the hamming distance between $y$ and $ \bar{y}_i$ the loss $\Tilde{\mathcal{L}}$ can be written as 
% For any distribution $\mathcal{D}$ on $\mathcal{H}_d$ from which $y$ and $\{\bar{y}_i\}_{i=1}^\ell$ are drawn i.i.d., we ,
\begin{align*}
\Tilde{\mathcal{L}}(\mathcal{D}) &= \E_{y, \{\bar{y}_i\}_\ell \sim \mathcal{D}} \left[ \log \left(1+\sum_{i=1}^\ell e^{\beta (y^\top \bar{y}_i-d)}\right)\right] = \E_{\textbf{h}\sim \tilde{\mathcal{D}}_h} \left[ \log \left(1+\sum_{i=1}^l e^{-2\beta {h}_i}\right)\right]
\end{align*}
Now to characterize the difference between $\Tilde{\mathcal{L}}(\mathcal{D})$ and $\Tilde{\mathcal{L}}(\mathcal{D}')$ we need to study the evolution of the distribution of the Hamming distance $\textbf{h}$ from $\tilde{\mathcal{D}}_h$ to $\tilde{\mathcal{D}}'_h$, i.e., 
$$
\Tilde{\mathcal{L}}(\mathcal{D}')-\Tilde{\mathcal{L}}(\mathcal{D})
=\E_{\textbf{h}'\sim \tilde{\mathcal{D}}'_h} \left[ \log \left(1+\sum_{i=1}^l e^{-2\beta {h}'_i}\right)\right]
-\E_{\textbf{h}\sim \tilde{\mathcal{D}}_h} \left[ \log \left(1+\sum_{i=1}^l e^{-2\beta {h}_i}\right)\right]
$$

For each $i$, consider the random variable $s_i= h'_i-h_i$ that indicates which of the transitions is undertaken by ${h}_i$, and let $\textbf{s}:=[s_1,\dots,s_\ell]$ be its concatenation. Note that each $s_i$ takes values in $\{-2,-1,0,1,2\}$ and its distribution depends on the value of $h_i$, defined according to the transition kernel of $h_i$ defined above. Given this, we can express $ \Tilde{\mathcal{L}}(\mathcal{D}') $ as
\begin{equation*}
 \Tilde{\mathcal{L}}(\mathcal{D}') = \E_{\textbf{h}\sim \tilde{\mathcal{D}}_h , \textbf{s}} \left[ \log 1+ \sum_{i=1}^\ell e^{-2\beta ({h}_i+s_i))}\right]
\end{equation*}

Now we consider the difference $\Tilde{\mathcal{L}}(\mathcal{D}')-\Tilde{\mathcal{L}}(\mathcal{D})$. According to the above definitions, this difference can be written as 
\begin{align}
&\Tilde{\mathcal{L}}(\mathcal{D}')-\Tilde{\mathcal{L}}(\mathcal{D}) \nonumber \\
&\quad= \E_{\textbf{h}\sim \tilde{\mathcal{D}}_h , \textbf{s}} \left[ \log \left(1+\sum_{i=1}^l e^{-2\beta({h}_i+{s}_i)}\right) -  \log \left(1+\sum_{i=1}^l e^{-2\beta{h}_i}\right)\right] \nonumber \\
&\quad=\sum_{i=1}^l \E_{\textbf{h}\sim \tilde{\mathcal{D}}_h , \textbf{s}} \bigg[ \log \bigg(1+\sum_{j=1}^i e^{-\beta ({h}_j+{s}_j)}+\sum_{j=i+1}^l e^{-2\beta {h}_j}\bigg) \nonumber \\
&\quad \quad \quad \quad \quad \quad  \quad \quad \quad \quad -  \log \bigg(1+\sum_{j=1}^{i-1} e^{-2\beta ({h}_j+{s}_j)}+\sum_{j=i}^l e^{-2\beta {h}_j}\bigg)\bigg] \label{fgf}
\end{align}
where \eqref{fgf} follows from telescoping over each negative sample indexed by $i$. Now if we take out the $i$-th term of each of the above two expressions the difference can be written as 
\begin{align*}
&f(\mathcal{D}')-f(\mathcal{D})\\
&\quad=\sum_{i=1}^l \E_{\textbf{h}\sim \tilde{\mathcal{D}}_h , \textbf{s}} \bigg[ \log \bigg(1+ \sum_{j=1}^{i-1} e^{-2\beta ({h}_j+{s}_j)} + \sum_{j=i+1}^{\ell} e^{-2\beta {h}_j} +e^{-2\beta ({h}_i+{s}_i)}\bigg) \\
&\quad \quad \quad \quad\quad \quad \quad \quad \quad  -  \log \bigg(1+ \sum_{j=1}^{i-1} e^{-2\beta {h}_j} + \sum_{j=i+1}^{\ell} e^{-2\beta {h}_j} +e^{-2\beta {h}_i}\bigg)\bigg]\\
% &\quad=\sum_{i=1}^l \E_{\textbf{h}\sim \tilde{\mathcal{D}}_h} \big[ \log \big(\sum_{j\ne i} e^{-2\beta (\textbf{h}_i+\mathbbm{1}\{j > i\}\textbf{s}_i)}+e^{-2\beta (\textbf{h}_i+\textbf{s}_i)}\big) -  \log \big(\sum_{j\ne i} e^{-2\beta (\textbf{h}_i+\mathbbm{1}\{j > i\}\textbf{s}_i)}+e^{-2\beta (\textbf{h}_i)}\big)\big]\\
&\quad=\sum_{i=1}^l \E_{{h}_{-i}, {s}_i}\E_{{h}_i\sim \tilde{\mathcal{D}}_h, {s}_i}  \bigg[ \log \bigg(1+ \sum_{j=1}^{i-1} e^{-2\beta ({h}_j+{s}_j)} + \sum_{j=i+1}^{\ell} e^{-2\beta {h}_j} +e^{-2\beta ({h}_i+\textbf{s}_i)}\bigg) \\
&\quad \quad \quad \quad\quad \quad \quad \quad \quad  -  \log \bigg(1+ \sum_{j=1}^{i-1} e^{-2\beta ({h}_j+{s}_j)} + \sum_{j=i+1}^{\ell} e^{-2\beta {h}_j} +e^{-2\beta {h}_i}\bigg)\bigg] 
\end{align*}
To show that the RHS above is strictly less than 0 when $\mathcal{D}$ is not uniform, it is sufficient to show that each term of the sum is strictly less than zero. To do this, we show that the inner expectation is strictly negative.
% for all instances of 
% where the last inequality follows by applying Lemma \ref{lem:15} with 
% 
% % We will show that each of the terms in the summation are negative individually. It would suffice to show the following, where below $C$ is a proxy for $1+ \sum_{j=1}^{i-1} e^{-2\beta ({h}_j+{s}_j)} + \sum_{j=i+1}^{\ell} e^{-2\beta {h}_j}$. 
% \begin{lemma}\label{lem:15} Suppose $\Tilde{\mathcal{D}}_h$ is induced by some distribution on $\mathcal{H}_d$ that is not uniform. Then if $\beta \ge 2\log d$ and $ d > 3$, 
% $\E_{{h}_i\sim \tilde{\mathcal{D}}_h, {s}_i} \big[ \log \big(C+e^{-2\beta ({h}_i+{s}_i)}\big) -  \log \big(C+e^{-2\beta {h}_i}\big)\big] < 0$ for all $C>1$. 
% \end{lemma}
% \begin{proof}
In other words, all that remains to prove is that for all instances of  $C_{-i} \coloneqq 1+ \sum_{j=1}^{i-1} e^{-2\beta ({h}_j+{s}_j)} + \sum_{j=i+1}^{\ell} e^{-2\beta {h}_j}$, each of the terms satisfies
\begin{align}
&\E_{{h}_i\sim \tilde{\mathcal{D}}_h, {s}_i}  \bigg[ \log \bigg(C_{-i}+e^{-2\beta ({h}_i+\textbf{s}_i)}\bigg)   -  \log \bigg(C_{-i}+e^{-2\beta {h}_i}\bigg)\bigg] < 0 \label{defc}
\end{align}
when $\mathcal{D}$ is not uniform. Once we have this result, the claim that $f(\mathcal{D}') - f(\mathcal{D})< 0$
% $\E_{{h}_i\sim \tilde{\mathcal{D}}_h, {s}_i}<0$ 
holds. 

From now on, for ease of notation, we replace $C_{-i}$ by $C$. To prove the claim in \eqref{defc}, we first introduce the function $\Delta_C(h)$ defined as 
% consider the expression 
\begin{align*}\Delta_C(h)&:=\E_{{s}_i}\big[\log \big(C+e^{-2\beta (h+{s}_i)}\big) -  \log \big(C+e^{-2\beta h}\big)\big]\\
&=\E_{{s}_i}\big[\log \big(1+\frac{e^{-2\beta (h+{s}_i)}}{C}\big) -  \log \big(1+\frac{e^{-2\beta h}}{C}\big)\big]. \end{align*}
% as a function of $h$. 
Considering this definition the claim in \eqref{defc} can be translated into 
\begin{equation}
    \sum_{k=0}^d\Pr_{h\sim \tilde{\mathcal{D}}_h}[h = k] \Delta_C(h) < 0.
\end{equation}

To show this, it is easiest to compare this expression with the analogous expression in the case that $\mathcal{D}$ is uniform. That is $y,\{\bar{y}_i\}_{i=1}^\ell$ are drawn from the uniform distribution $\mathcal{U}$. We let $\tilde{\mathcal{U}}_h$ denote the distribution on $[d]$ of Hamming distances induced by $\mathcal{U}$. By the stationarity of the uniform distribution (Lemma \ref{lem:stationary}), the distribution of $h_i'= h_i+s_i$ is identical to that of $h_i$ if $h_i$ is drawn from a uniform distribution. Thus we have the following result:
\begin{align}
&\E_{{h}_i\sim \tilde{\mathcal{U}}_h, {s}_i}  \bigg[ \log \bigg(C_{-i}+e^{-2\beta ({h}_i+\textbf{s}_i)}\bigg)   -  \log \bigg(C_{-i}+e^{-2\beta {h}_i}\bigg)\bigg] \nonumber \\
&=\E_{{h_i}'\sim \tilde{\mathcal{U}}_h'}  \bigg[ \log \bigg(C_{-i}+e^{-2\beta ({h}'_i)}\bigg) \bigg]  -\E_{{h_i}\sim \tilde{\mathcal{U}}_h} \bigg[ \log \bigg(C_{-i}+e^{-2\beta {h}_i}\bigg)\bigg]\nonumber \\
&=0, \label{eq:ww}
\end{align}

Next, we show that  $\sum_{k=0}^d\Pr_{h\sim \tilde{\mathcal{D}}_h}[h = k] \Delta_C(h) -\sum_{k=0}^d\Pr_{h\sim \tilde{\mathcal{U}}_h}[h = k] \Delta_C(h) < 0$, which by \eqref{eq:ww} immediately implies $\sum_{k=0}^d\Pr_{h\sim \tilde{\mathcal{D}}_h}[h = k] \Delta_C(h)  < 0$. To achieve this we 
invoke Lemmas \ref{lem:16} and \ref{lem:17}, which describe the behavior of $\Pr_{h\sim \tilde{\mathcal{D}}_h}[h = k] \Delta_C(h) -\sum_{k=0}^d\Pr_{h\sim \tilde{\mathcal{U}}_h}[h = k] \Delta_C(h)$ and $\Delta_C(k)$, respectively. Using these lemmas we obtain:
% \begin{align*}
% &\sum_{h=k}\Pr_{\tilde{\mathcal{D}}_h}[h=k]\Delta(k) - \sum_{h=k}\Pr_{\U}[h=k]\Delta(k)\\
% &\quad\le \big(\Pr_{\tilde{\mathcal{D}}_h}[h = 0]-\Pr_{\U}[h=0]\big)\Delta(0)+\sum_{k>0}\binom{d}{k}\big(\Pr_{\tilde{\mathcal{D}}_h}[h=0]-\Pr_{\U}[h=0]\big)\Delta(k)\\
% &\quad\le \big(\Pr_{\tilde{\mathcal{D}}_h}[h = 0]-\Pr_{\U}[h=0]\big)\sum_{k\ge0}\binom{d}{k}\Delta(k)
% \end{align*}
% Now consider the terms $\Pr_{\tilde{\mathcal{D}}_h}[h=k]-\Pr_{\U}[h=k]$. For $v\in \uH_d$, let $p_v = \Pr_{D}(x = i)$. We have 
% \begin{align*}
% \Pr_{\tilde{\mathcal{D}}_h}[h=0]-\Pr_{\U}[h=0] &= \sum_{v\in \uH_d} (p_v-\frac{1}{2^d})^2\ge 0\\
% \Pr_{\tilde{\mathcal{D}}_h}[h=k]-\Pr_{\U}[h=k] &= \big(\Pr_{\tilde{\mathcal{D}}_h}[h=k]-\binom{d}{k}\Pr_{\tilde{\mathcal{D}}_h}[h=0]\big)\\
% &\quad-\big(\Pr_{\U}[h=k]-\binom{d}{k}\Pr_{\U}[h=0]\big)+\binom{d}{k}\big(\Pr_{\tilde{\mathcal{D}}_h}[h=0]-\Pr_{\U}[h=0]\big)\\
% &= \big(\Pr_{\tilde{\mathcal{D}}_h}[h=k]-\binom{d}{k}\Pr_{\tilde{\mathcal{D}}_h}[h=0]\big)+\binom{d}{k}\big(\Pr_{\tilde{\mathcal{D}}_h}[h=0]-\Pr_{\U}[h=0]\big)\\
% &= -\sum_{i, j\in \uH_d, h(i, j) = k} (p_v-p_j)^2+\binom{d}{k}\sum_{v\in \uH_d} (p_v-\frac{1}{2^d})^2\le 0
% \end{align*}
% \end{proof}
{
\begin{align}
&\sum_{k=0}^d\Pr_{h\sim \tilde{\mathcal{D}}_h}[h=k]\Delta_C(k) - \sum_{k=0}^d\Pr_{h\sim \tilde{\mathcal{\U}}_h} [h=k]\Delta_C(k) \nonumber \\ %\sum_{k=0}^d\Pr_{Y_t\sim \U}[h(y_0,Y_t)=k]\Delta_C(k) \nonumber \\
&\quad\le \big(\Pr_{h\sim\tilde{\mathcal{D}}_h}[h = 0]-\Pr_{h\sim \tilde{\mathcal{\U}}_h}[h=0]\big)\Delta_C(0)\nonumber \\
&\quad \quad +\sum_{k>0}\binom{d}{k}\big(\Pr_{h\sim \tilde{\mathcal{D}}_h}[h=0]-\Pr_{h\sim \tilde{\mathcal{\U}}_h}[h=0]\big)\Delta_C(k) \label{eq:eq}\\
&\quad= \big(\Pr_{h\sim\tilde{\mathcal{D}}_h}[h = 0]-\Pr_{h\sim \tilde{\mathcal{\U}}_h}[h=0]\big)\sum_{k\ge0}\binom{d}{k}\Delta_C(k) \label{eq:333}\\
&\quad < 0 \label{eq:0}
\end{align}
where \eqref{eq:eq} holds by Lemma \ref{lem:16}-3, \eqref{eq:333} follows by simply combining terms, and \eqref{eq:0} holds by Lemma \ref{lem:16}-2, which states that $\Pr_{h\sim\tilde{\mathcal{D}}_h}[h = 0]-\Pr_{h\sim \tilde{\mathcal{\U}}_h}[h=0] > 0$, and Lemma
% (strict for $\tilde{\mathcal{D}}_h$ induced by a non-uniform distribution) 
\ref{lem:17}, which states that $\sum_{k\ge0}\binom{d}{k}\Delta_C(k) < 0$.
% (strict for $\tilde{\mathcal{D}}_h$ induced by a non-uniform distribution), 
This completes  the proof.
}
\end{proof}
% \end{proof}

% We now need the following properties of distributions on the hypercube:
\begin{lemma} \label{lem:16}
For any the distribution $\tilde{\mathcal{D}}_h$ on $[d]$ induced by any non-uniform distribution $\mathcal{D}$ on $\mathcal{H}_d$, the following are true:
\begin{enumerate}
\item $\Pr_{h\sim \tilde{\mathcal{\U}}_h}[h=k] = \binom{d}{k}\Pr_{h\sim \tilde{\mathcal{\U}}_h}[h=0]$
\item $\Pr_{h\sim \tilde{\mathcal{D}}_h}[h = 0] - \Pr_{h\sim \tilde{\mathcal{\U}}_h}[h=0] > 0$ 
% with equality iff ${\mathcal{D}}= \U$ %{\color{blue}$\tilde{\mathcal{D}}_h = \U$ uniform on $[d]$? or ${\mathcal{D}}= \U$}.
% \item $\Pr_{\tilde{\mathcal{D}}_h}[h = k] - \Pr_{\U}[h = k]\ge -\binom{d}{k}\big(\Pr_{\tilde{\mathcal{D}}_h}[h=0]-\Pr_{\U}[h=0]\big)$
\item $\Pr_{h\sim \tilde{\mathcal{D}}_h}[h = k] - \Pr_{Y\sim \U}[h=k]< \binom{d}{k}\big(\Pr_{h\sim \tilde{\mathcal{D}}_h}[h=0]-\Pr_{Y\sim \U}[h(y,Y)=k]\big)$
\end{enumerate}
\end{lemma}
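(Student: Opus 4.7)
The plan is to prove the three parts in order, each via a short direct computation tied to the structure of $\mathcal{H}_d$ as a Hamming space.

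For part 1, I would observe that under the uniform distribution $\mathcal{U}$, the samples $y, \bar{y}$ are independent uniform on $\mathcal{H}_d$, so coordinate-wise each pair $(y_j, \bar y_j)$ disagrees independently with probability $1/2$. Hence $h(y,\bar y) \sim \text{Binomial}(d, 1/2)$, which gives $\Pr_{\tilde{\mathcal{U}}_h}[h=k] = \binom{d}{k} 2^{-d}$ and in particular $\Pr_{\tilde{\mathcal{U}}_h}[h=0] = 2^{-d}$. The ratio identity $\Pr_{\tilde{\mathcal{U}}_h}[h=k] = \binom{d}{k}\Pr_{\tilde{\mathcal{U}}_h}[h=0]$ then follows.

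For part 2, I would rewrite
\begin{equation*}
\Pr_{\tilde{\mathcal{D}}_h}[h=0] = \Pr_{y,\bar y\sim\mathcal{D}}[y=\bar y] = \sum_{v\in \mathcal{H}_d} \mathcal{D}(v)^2,
\end{equation*}
and apply Cauchy--Schwarz (equivalently, Jensen on $x\mapsto x^2$) to conclude $\sum_v \mathcal{D}(v)^2 \geq (\sum_v \mathcal{D}(v))^2/2^d = 2^{-d}$, with equality iff $\mathcal{D}$ is uniform. Since $\mathcal{D}$ is non-uniform by hypothesis, the inequality is strict, which is exactly the claim.

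For part 3, the key identity is $\Pr_{\tilde{\mathcal{D}}_h}[h=k] = \sum_{v,u\in\mathcal{H}_d:\, h(v,u)=k} \mathcal{D}(v)\mathcal{D}(u)$, which I would bound by AM--GM applied termwise: $\mathcal{D}(v)\mathcal{D}(u) \leq \tfrac{1}{2}(\mathcal{D}(v)^2 + \mathcal{D}(u)^2)$. Because every vertex $v$ has exactly $\binom{d}{k}$ neighbors at Hamming distance $k$, summing yields $\Pr_{\tilde{\mathcal{D}}_h}[h=k] \leq \binom{d}{k}\sum_v \mathcal{D}(v)^2 = \binom{d}{k}\Pr_{\tilde{\mathcal{D}}_h}[h=0]$. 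Subtracting the identity from part 1 gives the stated inequality, with RHS $\binom{d}{k}(\Pr_{\tilde{\mathcal{D}}_h}[h=0] - \Pr_{\tilde{\mathcal{U}}_h}[h=0])$ (noting that the $[h=k]$ in the stated RHS appears to be a typo for $[h=0]$).

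The main obstacle I anticipate is justifying the \emph{strict} inequality in part 3: AM--GM is strict only on pairs with $\mathcal{D}(v)\neq \mathcal{D}(u)$, so lifting ``$\mathcal{D}$ non-uniform'' to ``strict at every distance $k$'' requires a short combinatorial argument on the distance-$k$ Cayley graph of $\mathcal{H}_d$ (connected for $1\leq k \leq d-1$; degenerate for $k=d$ with $d$ even, where it is a perfect matching). Fortunately, the downstream use of part 3 in the proof of Lemma~\ref{thm:uniform} proceeds through inequality~\eqref{eq:eq} and only requires the non-strict $\leq$ version; the final strictness comes from multiplying by the strictly positive factor from part 2 and the strictly negative factor from Lemma~\ref{lem:17}. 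So even if strictness in part 3 cannot be cleanly established for the degenerate $k$, the overall argument still goes through.
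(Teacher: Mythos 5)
Your proof is correct and takes essentially the same route as the paper's: parts 1 and 2 are the same counting and second-moment computations (binomial symmetry, $\Pr_{\tilde{\mathcal{D}}_h}[h=0]=\sum_v p_v^2$), and your termwise AM--GM in part 3 is algebraically identical to the paper's step of discarding the nonnegative term $\sum_{v,u:h(v,u)=k}(p_v-p_u)^2$. Your caveat about strictness in part 3 is well taken --- the paper's justification only invokes $p_v\neq p_u$ for \emph{some} pair without checking that such a pair lies at distance exactly $k$, and for even $k$ a distribution constant on each weight-parity class gives equality --- but, as you note, only the non-strict bound is used in \eqref{eq:eq}, with the final strictness supplied by part 2 and Lemma \ref{lem:17}, so the downstream argument is unaffected.
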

\begin{proof}
 For any vertex $v\in \mathcal{H}_d$, we let $p_v := \mathbb{P}_{y\sim \mathcal{D}}[y = v]$ for ease of notation.
 
\begin{enumerate}
\item Note that for any $k\in [d]$, 
\begin{align}
    \Pr_{h\sim \tilde{\mathcal{\U}}_h}[h=k] = \Pr_{y,y'\sim {\mathcal{\U}}}[h(y,y')=k] = \Pr_{y\sim {\mathcal{\U}}}[h(w,y)=k] 
\end{align}
for any fixed vertex $w\in \mathcal{H}_d$ by the symmetry of the uniform distribution.
Then 
% --------------------
% \begin{align*}
% \Pr_{Y\sim \U}[h(y,Y)=k] &= \sum_{v, u\in \uH: h(v, u) = k} p_vp_u\\
% & = \sum_{v\in \uH_d} \quad \sum_{u\in \uH_d : h(v, u)= k}p_vp_u = d\binom{d}{k}\frac{1}{2^{2d}}
% \end{align*} while $\Pr_{Y\sim \U}[h(y,Y)=0] = \sum_{v\in \uH_d} p_v^2 = d\frac{1}{2^{2d}}$.
{
\begin{align*}
\Pr_{h\sim \tilde{\mathcal{\U}}_h}[h=k] &= \Pr_{y\sim {\mathcal{\U}}}[h(w,y)=k]
\\                
&= \sum_{v\in \uH_d: h(w, v) = k} \Pr_{y\sim {\mathcal{\U}}}[y=v] \\
&= \sum_{v\in \uH_d: h(w, v) = k} \frac{1}{2^d} \\
% & = \sum_{v\in \uH_d} \quad \sum_{u\in \uH_d : h(v, u)= k}  p_u \\
&= \binom{d}{k}\frac{1}{2^{d}},
\end{align*} which implies that $\Pr_{h\sim \tilde{\U}_h}[h=0] =  \frac{1}{2^{d}}$.
}

\item Using the above observation that $\Pr_{h\sim \tilde{\U}_h}[h=0] =  \frac{1}{2^{d}}$, we have
\begin{align*}
\Pr_{h\sim \tilde{\mathcal{D}}_h}[h=0]-\Pr_{h\sim \tilde{\U}_h}[h=0] %&= \sum_{v\in \uH_d}\big(p_v^2-\frac{1}{2^{2d}}\big)\\
&= \sum_{v\in \uH_d}\big(p_v^2-\frac{1}{2^{2d}}\big)\\
&= \sum_{v\in \uH_d}\big(p_v^2-\frac{2p_v}{2^d}+\frac{1}{2^{2d}}\big)+\sum_{v\in \uH_d}\big(-\frac{2}{2^{2d}}+\frac{2p_v}{2^d}\big)\\
&= \sum_{v\in \uH_d}\big(p_v-\frac{1}{2^{d}}\big)^2\\
&> 0
\end{align*}
where the strict inequality holds since $\mathcal{D}$ is not uniform.

\item We argue similarly as in the proofs of the previous two statements. We have
\begin{align}
&2\left(\Pr_{h\sim \tilde{\mathcal{D}}_h}[h=k]-\Pr_{h\sim \tilde{\mathcal{\U}}_h}[h=k]\right)\\
&\quad= 2\sum_{v, u\in \uH_d, h(v, u) = k}\big(p_v p_u-\frac{1}{2^{2d}}\big)\nonumber\\
&\quad= \sum_{v, u\in \uH_d, h(v, u) = k}\big(-p_v^2+2p_vp_u-p_u^2\big)+\sum_{v, u\in \uH_d, h(v, u) = k}\big(p_v^2+p_u^2-\frac{2}{2^{2d}}\big)\label{eq:2}\\
&\quad= -\sum_{v, u\in \uH_d, h(v, u) = k}\big(p_v-p_u\big)^2+\sum_{v, u\in \uH_d, h(v, u) = k}\big(p_v^2-\frac{1}{2^{2d}}+p_u^2-\frac{1}{2^{2d}}\big)\nonumber\\
&\quad= -\sum_{v, u\in \uH_d, h(v, u) = k}\big(p_v-p_u\big)^2+2\binom{d}{k}\sum_{v\in \uH_d}\big(p_v^2-\frac{1}{2^{2d}}\big)\label{eq:22}\\
% &\quad= -\sum_{v, u\in \uH_d, h(v, u) = k}\big(p_v-p_u\big)^2+\sum_{v, u\in \uH_d, h(v, u) = k}\big(p_v^2-\frac{1}{2^{2d}}+p_u^2-\frac{1}{2^{2d}}\big)\\
% &\quad= -\sum_{v, u\in \uH_d, h(v, u) = k}\big(p_v-p_u\big)^2+2\binom{d}{k}\sum_{v\in \uH_d}\big(p_v^2-\frac{1}{2^{2d}}\big)\\
&\quad= -\sum_{v, u\in \uH_d, h(v, u) = k}\big(p_v-p_u\big)^2+2\binom{d}{k}\left(\Pr_{h\sim \tilde{\mathcal{D}}_h}[h=0]-\Pr_{h
\sim \tilde{\mathcal{\U}}_h}[h=0]\right) \nonumber\\%\label{eq:222} \\
&\quad< 2\binom{d}{k}\left(\Pr_{h\sim \tilde{\mathcal{D}}_h}[h=0]-\Pr_{Y\sim \U}[h(y,Y)=0]\right) \label{eq:2222}
\end{align}
where \eqref{eq:2} is obtained by adding and subtracting $p_v^2+p_u^2$, \eqref{eq:22} follows by the symmetry of the hypercube and the fact that for every $v\in\mathcal{H}_d$, there are $ \binom{d}{k}$ vertices $u\in \uH_d$ satisfying $h(v,u)=k$, and \eqref{eq:2222} follows by the fact that $p_v \neq p_u$ for some $v,u\in \mathcal{H}_d$ for all non-uniform distributions $\mathcal{D}$.
\end{enumerate}
\end{proof}

% Finally we have the following
\begin{lemma}\label{lem:17}
If $\beta > c\log d$ for an absolute constant $c$,  $d>3$ and $C>1$, then $\sum_{k\ge 0}\binom{d}{k}\Delta_C(k)< 0$.
\end{lemma}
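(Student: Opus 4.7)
The plan is to upper bound the sum $\sum_{k \geq 0}\binom{d}{k}\Delta_C(k)$ by isolating the boundary term $\Delta_C(0)$, which is manifestly strictly negative, and then arguing that the remaining contributions from $k \geq 1$ are small in absolute value because $\beta > c\log d$ produces strong exponential decay. First, I would establish that $\Delta_C(0)$ is bounded above by an explicit strictly negative constant. When $h = 0$, the transition kernel in \eqref{h_zero} forces $s_i \in \{0,1,2\}$ with $\Pr[s_i = 0] = \tfrac{1}{4} + \tfrac{1}{4d}$, so every value of $s_i \geq 1$ strictly decreases the argument of the logarithm. This gives $\Delta_C(0) \leq -(\tfrac{3}{4} - \tfrac{1}{4d})\log(1 + 1/C) + O(e^{-2\beta}/C)$, a constant of order $-\log(1 + 1/C)$ independent of $d$.

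Second, for $k \geq 1$ I would control $\Delta_C(k)$ using the elementary inequality $\log(1+z) \leq z$, yielding
\[
\Delta_C(k) \;=\; \mathbb{E}_{s_i}\!\left[\log\!\Bigl(1 + \tfrac{e^{-2\beta(k+s_i)} - e^{-2\beta k}}{C + e^{-2\beta k}}\Bigr)\right] \;\leq\; \frac{e^{-2\beta k}}{C}\Bigl(\mathbb{E}_{s_i}[e^{-2\beta s_i}\mid h = k] - 1\Bigr).
\]
The moment generating function $\mathbb{E}_{s_i}[e^{-2\beta s_i}\mid h=k]$ can be read off directly from the transition kernel and is a finite polynomial in $e^{-2\beta}$ and $e^{+2\beta}$; the factor $e^{-2\beta k}$ in front provides exponential decay in $k$.

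Third, I would aggregate over $k \geq 1$. Because $\binom{d}{k}e^{-2\beta k}$ is summable via the binomial identity $\sum_{k \geq 0}\binom{d}{k} e^{-2\beta k} = (1 + e^{-2\beta})^d$, the tail sum $\sum_{k \geq 1}\binom{d}{k} e^{-2\beta k}$ is at most $(1 + e^{-2\beta})^d - 1 \leq d e^{-2\beta}(1 + e^{-2\beta})^{d-1}$. Taking $c$ large enough (say $c \geq 2$) in $\beta > c\log d$ makes this tail $o(1)$, and in particular smaller in magnitude than the constant $|\Delta_C(0)|$. Combining the two bounds then gives $\sum_{k \geq 0}\binom{d}{k}\Delta_C(k) < 0$.

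The main obstacle I foresee is the handling of terms with $k$ close to $d$: there the factor $\mathbb{E}_{s_i}[e^{-2\beta s_i}\mid h=k]$ is actually large, because $s_i$ is biased toward negative values at large $k$, so $e^{-2\beta s_i}$ can be exponentially large in $\beta$. These terms a priori threaten to spoil the estimate above. I expect they need to be treated by a symmetry argument that pairs $k$ with $d-k$, exploiting that the binomial weights $\binom{d}{k} = \binom{d}{d-k}$ and the kernel is symmetric under $h \mapsto d - h$, so the problematic $k \approx d$ mass is balanced against the decaying $k \approx 0$ mass (where $\Delta_C$ is negative). Alternatively, one can exploit convexity of $f(x) = \log(C + e^{-2\beta x})$ together with the drift of the kernel toward the mean $d/2$ to argue directly that $\mathbb{E}_{s_i}[f(k+s_i)] \leq f(k)$ on precisely the range where the binomial weights are largest, so that the positive $\Delta_C(k)$ at the tails carry negligible total weight.
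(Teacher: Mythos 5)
There is a genuine gap, and it sits in your step 3: the terms $k=1$ and $k=2$ are not a small tail, and no choice of $c$ in $\beta>c\log d$ makes them one. The exponential decay you invoke is destroyed precisely by the transitions that return the Hamming distance to $0$: from $h=1$ the kernel moves to $0$ with probability $\tfrac{1}{2d}$, so $\Delta_C(1)\approx\tfrac{1}{2d}\log(1+\tfrac1C)$ up to $O(e^{-2\beta}/C)$ corrections, and hence $\binom{d}{1}\Delta_C(1)\approx\tfrac12\log(1+\tfrac1C)$; likewise the $2\to 0$ transition has probability $\tfrac{1}{2d^2}$, giving $\binom{d}{2}\Delta_C(2)\approx\tfrac{d-1}{4d}\log(1+\tfrac1C)$. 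In the language of your step 2, the factor $\mathbb{E}[e^{-2\beta s_i}\mid h=k]-1$ contains contributions of size $e^{2\beta}/(2d)$ at $k=1$ and $e^{4\beta}/(2d^2)$ at $k=2$, which exactly cancel the prefactor $e^{-2\beta k}$, while $\binom{d}{k}$ cancels the $d^{-k}$; your estimate $\sum_{k\ge1}\binom{d}{k}e^{-2\beta k}=o(1)$ simply drops this dominant part. Worse for your plan, the leading contributions cancel \emph{exactly}: the coefficient of $\log(1+\tfrac1C)$ in $\sum_k\binom{d}{k}\Delta_C(k)$ is $\tfrac{-3d+1}{4d}+\binom{d}{1}\tfrac{1}{2d}+\binom{d}{2}\tfrac{1}{2d^2}=0$. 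This cancellation is forced by the stationarity of the binomial law $\binom{d}{k}2^{-d}$ under the Hamming-distance chain (the same fact invoked in \eqref{eq:ww}), so any argument that treats $\Delta_C(0)$ as a negative main term of order $\log(1+\tfrac1C)$ and the rest as an error term cannot yield the strict inequality: the entire content of the lemma lives in the exponentially small ($e^{-2\beta}/C$-level) corrections.

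This is why the paper's proof does not isolate $\Delta_C(0)$ at all: it expands $\Delta_C(0)$, $\Delta_C(1)$, $\Delta_C(2)$ exactly against the binomial weights, regroups by which $\log(1+e^{-2\beta j}/C)$ term appears, verifies that the $\log(1+\tfrac1C)$ coefficients vanish identically, and then extracts the sign from the next order, playing the negative coefficient $-\tfrac{(d-1)(d-2)}{4d}$ of $\log(1+e^{-2\beta}/C)$ against the $k\ge3$ remainder using $\beta>c\log d$. Your contingency plans do not repair the gap: the pairing $k\leftrightarrow d-k$ and the convexity/drift argument are aimed at $k$ near $d$, where the terms are doubly suppressed (by $e^{-2\beta k}$ and by the small binomial weight) and were never the problem; the problem is at $k\in\{1,2,3\}$. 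Note also, as a caution if you redo the bookkeeping at the correct order, that the $\binom{d}{k}$ weights on the $k\ge3$ terms make those comparable in size to the negative $O(de^{-2\beta}/C)$ term, so the constants there must be tracked exactly rather than absorbed into crude $O(\cdot)$ bounds.
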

\begin{proof}
According to the transition matrix of $h$ for the case that $h=0$ we know that $s$ could be either 0, 1, or 2, with probabilities denoted in \eqref{h_zero}. Hence, we can simplify the expression for $\Delta_C(0) $ as
\begin{align*}
\Delta_C(0) 
&= \log\bigg(1+\frac{1}{C}\bigg)\frac{d+1}{4d}+\log\bigg(1+\frac{e^{-2\beta}}{C}\bigg)\frac{1}{2}+\log\bigg(1+\frac{e^{-4\beta}}{C}\bigg)\frac{d-1}{4d}-\log\bigg(1+\frac{1}{C}\bigg)\\
&=-\frac{3d-1}{4d}\log\bigg(1+\frac{1}{C}\bigg)+\log\bigg(1+\frac{e^{-2\beta}}{C}\bigg)\frac{1}{2}+\log\bigg(1+\frac{e^{-4\beta}}{C}\bigg)\frac{d-1}{4d}%\\
% &\le -\frac{3d-1}{4d}\log(1+\frac{1}{C})+\frac{e^{-2\beta}}{2C}+\frac{e^{-4\beta}}{4C}
\end{align*}
We can similarly compute $\Delta_C(1)$, $\Delta_C(2), \ldots$ to obtain:
    {
    \begin{align}
    &\sum_{k\ge 0}\binom{d}{k}\Delta_C(k)\nonumber\\
    &\quad=\log\bigg(1+\frac{1}{C}\bigg)\frac{-3d+1}{4d}+\log\bigg(1+\frac{e^{-2\beta}}{C}\bigg)\frac{1}{2}+\log\bigg(1+\frac{e^{-4\beta}}{C}\bigg)\frac{d-1}{4d}\nonumber\\
    &\quad \quad + \binom{d}{1}\bigg(\log(1+\frac{1}{C})\frac{1}{2d}+\log\bigg(1+\frac{e^{-2\beta}}{C}\bigg)\frac{-3d^2+3d-2}{4d^2} \nonumber \\
    &\quad \quad \quad \quad \quad \quad +\log\bigg(1+\frac{e^{-4\beta}}{C}\bigg)\frac{d-1}{2d}+\log(1+\frac{e^{-6\beta}}{C})\frac{(d-1)(d-2)}{4d^2}\bigg)\nonumber\\
    &\quad\quad+\binom{d}{2}\bigg(\log\left(1+\frac{1}{C}\right)\frac{1}{2d^{2}}\nonumber \\
    &\quad\quad\quad \quad \quad \quad +\log\left(1+\frac{e^{-2\beta}}{C}\right)\frac{1}{d}+\log\left(1+\frac{e^{-4\beta}}{C}\right)\left(-\frac{3}{4}+\frac{2\left(d-1\right)+3\left(d-2\right)}{4d^{2}}\right)\nonumber\\
    &\quad\quad\quad \quad \quad \quad +\log\left(1+\frac{e^{-6\beta}}{C}\right)\frac{\left(d-2\right)}{2d}+\log\left(1+\frac{e^{-8\beta}}{C}\right)\frac{\left(d-2\right)\left(d-3\right)}{4d^{2}}\bigg) \nonumber\\
    &\quad \quad +\sum_{k> 2}\binom{d}{k}\Delta(k)\nonumber\\
    &\quad=\log\bigg(1+\frac{1}{C}\bigg)\bigg(\frac{-3d+1}{4d}+\frac{\binom{d}{1}}{2d}+\frac{\binom{d}{2}}{2d^2}\bigg)+\log\bigg(1+\frac{e^{-2\beta}}{C}\bigg)\bigg(\frac{1}{2}+\binom{d}{1}\frac{-3d^2+3d-2}{4d^2}+\frac{\binom{d}{2}}{d}\bigg)\nonumber\\
    &\quad\quad+\log\bigg(1+\frac{e^{-4\beta}}{C}\bigg)\bigg(\frac{d-1}{4d}+\binom{d}{1}\frac{d-1}{2d}+\binom{d}{2}\left(-\frac{3}{4}+\frac{2\left(d-1\right)+3\left(d-2\right)}{4d^{2}}\right)\bigg)\nonumber\\
    &\quad\quad + \log\bigg(1+\frac{e^{-6\beta}}{C}\bigg)\bigg(\binom{d}{1}\frac{(d-1)(d-2)}{4d^2}+\binom{d}{2}\frac{\left(d-2\right)}{2d}\bigg)\nonumber\\ 
    &\quad\quad+\log\left(1+\frac{e^{-8\beta}}{C}\right)\binom{d}{2}\frac{\left(d-2\right)\left(d-3\right)}{4d^{2}}+\sum_{k> 2}\binom{d}{k}\Delta(k)\nonumber\\
    &\quad=-\log\bigg(1+\frac{e^{-2\beta}}{C}\bigg)\bigg(\frac{d^2 -3d +2}{4d}\bigg)\nonumber\\
    &\quad\quad+\log\bigg(1+\frac{e^{-4\beta}}{C}\bigg)\bigg(\frac{d-1}{4d}+\binom{d}{1}\frac{d-1}{2d}+\binom{d}{2}\left(-\frac{3}{4}+\frac{2\left(d-1\right)+3\left(d-2\right)}{4d^{2}}\right)\bigg)\nonumber\\
    &\quad\quad + \log\bigg(1+\frac{e^{-6\beta}}{C}\bigg)\bigg(\binom{d}{1}\frac{(d-1)(d-2)}{4d^2}+\binom{d}{2}\frac{\left(d-2\right)}{2d}\bigg)\nonumber\\ 
    &\quad\quad+\log\left(1+\frac{e^{-8\beta}}{C}\right)\binom{d}{2}\frac{\left(d-2\right)\left(d-3\right)}{4d^{2}}+\sum_{k> 2}\binom{d}{k}\Delta(k)\nonumber\\
    &\quad \leq  - \frac{e^{-2\beta}}{C} \frac{d^2 -3d +2}{8d} + \frac{c'}{C} d^2 e^{-4\beta} +  \sum_{k> 2}\binom{d}{k}\Delta(k) \label{eq:mk}
    % &\quad\le-\log\bigg(1+\frac{e^{-\beta}}{C}\bigg)\frac{d^2-3d+2}{4}-\log\bigg(1+\frac{e^{-2\beta}}{C}\bigg)\frac{4d^2-6d+3}{12}+\log\bigg(1+\frac{e^{-3\beta}}{C}\bigg)\frac{d^2+d}{4}\nonumber\\ &\quad\quad+\log\left(1+\frac{e^{-4\beta}}{C}\right)\frac{d^2}{8}+\sum_{k> 2}\binom{d}{k}\Delta(k)\nonumber\\
    % % &\quad\le-\log(1+\frac{e^{-\beta}}{C})\frac{d^2-3d+2}{4}-\log(1+\frac{e^{-2\beta}}{C})\frac{4d^2-6d+3}{12}+\log(1+\frac{e^{-3\beta}}{C})\frac{d^2+d}{4}\\ &\quad\quad+\log\left(1+\frac{e^{-4\beta}}{C}\right)\frac{d^2}{8}+\sum_{k> 2}\binom{d}{k}\Delta(k)\\
    % &\quad\le-\frac{e^{-\beta}}{2C}\frac{d^2-3d+2}{4}-\frac{e^{-2\beta}}{2C}\frac{4d^2-6d+3}{12}+\frac{e^{-3\beta}}{C}\frac{d^2+d}{4}+\frac{e^{-4\beta}}{C}\frac{d^2}{8}\nonumber\\
    % &\quad\quad+\sum_{k> 2}d^k \left(\frac{e^{-\beta\cdot\left(k-2\right)}}{C}\frac{k^2}{4d^{2}}+\frac{e^{-\beta\left(k-1\right)}}{C}\frac{k}{2d}+\frac{e^{-\beta\left(k+1\right)}}{2C}+\frac{e^{-\beta\left(k+2\right)}}{4C}\right) 
    \end{align}}
    where in \eqref{eq:mk} we have used the numerical inequalities $-\log(1+x)\leq -\frac{x}{2}$ for $x \in [0,1]$ and $\log(1+x)\leq x$, and  $\beta > c \log d$, and $c'$ is a sufficiently large constant.
    
    % the terms with $e^{-\beta}$ are dominant for sufficiently large constant $c$. In particular, 
    % first term in \eqref{last1} is dominant for sufficiently large constant $c$, and this term is negative since $d>3$ ensures $d^2 - 3d + 2$ is positive.

    For $k>2$, we again use $\log(1+x)\leq x$ to obtain
\begin{align}
\Delta(k)
&\le\frac{e^{-2\beta\cdot\left(k-2\right)}}{C}\frac{\left(k-1\right)k}{4d^{2}}+\frac{e^{-2\beta\left(k-1\right)}}{C}\frac{k}{2d}-\frac{1}{2}\log\left(1+\frac{e^{-2\beta k}}{C}\right)+\frac{e^{-2\beta\left(k+1\right)}}{2C}+\frac{e^{-2\beta\left(k+2\right)}}{4C} \nonumber\\
&\leq \frac{e^{-2\beta\cdot\left(k-2\right)}}{C}\frac{\left(k-1\right)k}{4d^{2}}+ \frac{c''}{C} e^{-2\beta(k-1)}\label{k}
% &\le  \frac{e^{-2\beta\left(k-2\right)}}{C}\frac{k^2}{4d^{2}}+\frac{e^{-2\beta\left(k-1\right)}}{C}\frac{k}{2d}+\frac{e^{-2\beta\left(k+1\right)}}{2C}+\frac{e^{-2\beta\left(k+2\right)}}{4C}\\
% &\le  \frac{e^{-2\beta(k-2)}}{C}\left(\frac{k^2}{4d^{2}}+\frac{ke^{-2\beta}}{2d}+\frac{e^{-4\beta}}{2C}+\frac{e^{-6\beta}}{4C}\right)\\
% &\le  \frac{e^{-2\beta(k-2)}}{C}\left(\frac{k^2}{4d^{2}}+\frac{k}{2d^5}+\frac{1}{2Cd^8}+\frac{1}{4Cd^{12}}\right)
\end{align}
for an absolute constant $c''$. Combining this bound with \eqref{eq:mk} yields
\begin{align}
  \sum_{k\ge 0}\binom{d}{k}\Delta_C(k)&\le  - \frac{e^{-2\beta}}{C} \frac{d^2 -3d +2}{8d} + \frac{c'}{C} d^2 e^{-4\beta} +    \sum_{k=3}^d\frac{e^{-2\beta\cdot\left(k-2\right)}}{C}\frac{\left(k-1\right)k}{4d^{2}}+ \frac{c''}{C} e^{-2\beta(k-1)} \nonumber \\
  &\leq  \frac{e^{-2\beta}}{C} \left(-\frac{d^3 -3d^2 +2d}{8d^2} + \frac{12}{8d^2} \right) + \frac{c'''}{C} d^2 e^{-4\beta}  \label{last2}\\
  &< 0 \label{last3}
\end{align}
%     \begin{align}
%     \sum_{k\ge 0}\binom{d}{k}\Delta_C(k)&\le -\frac{e^{-\beta}}{2C}\frac{d^2-3d+2}{4} + \frac{9}{4C} d {e^{-\beta}} + \frac{c'}{C} d^3 e^{-2\beta} \label{last2}  \\
%     &= \frac{e^{-\beta}}{C}\left( -\frac{d^2-21d+2}{8} \right) + \frac{c'}{C} d^3 e^{-2\beta}\nonumber \\
%     &< 0 \label{last3}
%     % \leq \frac{c'}{C} d^2 e^{-\beta} +\frac{c'}{C}\sum_{k> 2} k^2 d^{k-2} {e^{-\beta\cdot\left(k-2\right)}} \label{last2} \\
%     % &\leq -\frac{c'}{C} d^2 e^{-\beta} +\frac{c''}{C} d {e^{-\beta}} + \frac{c''}{C} d^3 {e^{-2\beta}} \label{last3} \\
% \end{align}
where \eqref{last2} holds for an absolute constant $c'''$,  and \eqref{last3} holds for a sufficiently large constant $c$ and $d>3$, where throughout we have used $\beta > c \log d$.
\end{proof}

\begin{lemma} \label{lem:stationary}
    $\{\mathcal{D}_t\}_t $ converges to  $\mathcal{U}_d$.
\end{lemma}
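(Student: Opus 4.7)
The plan is to invoke the standard convergence theorem for finite-state Markov chains: an irreducible, aperiodic Markov chain on a finite state space converges to its unique stationary distribution from any initial distribution. So I would verify three things in order: (i) the described transition defines a valid time-homogeneous Markov chain on $\mathcal{H}_d$, (ii) this chain is irreducible and aperiodic, and (iii) $\mathcal{U}_d$ is a stationary distribution. The convergence $\mathcal{D}_t \to \mathcal{U}_d$ then follows immediately.

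First I would set up the transition kernel $P$ explicitly: from state $u \in \mathcal{H}_d$, with probability $1/2$ we stay at $u$, and with probability $1/2$ we choose $j \in [d]$ uniformly and move to the neighbor obtained by flipping the $j$-th coordinate of $u$. Thus $P(u,u) = 1/2$ and $P(u,v) = 1/(2d)$ whenever $h(u,v)=1$, with $P(u,v) = 0$ otherwise. Since the evolution of each $y^t_i$ in the main proof is by this rule and is independent of the sample index $i$, this is exactly the marginal chain governing $\mathcal{D}_t$.

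Aperiodicity is immediate from $P(u,u) = 1/2 > 0$ at every state, so $\gcd\{t : P^t(u,u) > 0\} = 1$. Irreducibility follows because any $v \in \mathcal{H}_d$ can be reached from any $u \in \mathcal{H}_d$ in exactly $h(u,v) \le d$ steps by flipping the differing coordinates one at a time, each step having positive probability, so $P^{h(u,v)}(u,v) \ge (2d)^{-h(u,v)} > 0$.

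To check that $\mathcal{U}_d$ is stationary I would verify $(\mathcal{U}_d P)(v) = \mathcal{U}_d(v) = 2^{-d}$ for every $v \in \mathcal{H}_d$. Writing
\begin{equation*}
(\mathcal{U}_d P)(v) = \sum_{u \in \mathcal{H}_d} \mathcal{U}_d(u) P(u,v) = \frac{1}{2^d}\Bigl(P(v,v) + \sum_{u : h(u,v)=1} P(u,v)\Bigr) = \frac{1}{2^d}\Bigl(\tfrac{1}{2} + d \cdot \tfrac{1}{2d}\Bigr) = \frac{1}{2^d},
\end{equation*}
confirms stationarity (equivalently, one can observe the chain is reversible w.r.t.\ $\mathcal{U}_d$ since $P(u,v) = P(v,u)$ by symmetry of bit-flipping). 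Having established irreducibility, aperiodicity, and that $\mathcal{U}_d$ is stationary, the standard convergence theorem for finite Markov chains (e.g., Levin--Peres--Wilmer) yields $\|\mathcal{D}_t - \mathcal{U}_d\|_{TV} \to 0$ as $t \to \infty$, which is the claim. None of the steps is a real obstacle; the only thing to be careful about is to state the convergence notion (total variation, or equivalently pointwise on the finite state space) that is being used, since the paper writes the conclusion simply as $\mathcal{D}_t \to \mathcal{U}_d$.
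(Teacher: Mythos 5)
Your proposal is correct and follows essentially the same route as the paper, which likewise cites aperiodicity, irreducibility over the finite state space $\mathcal{H}_d$, and the symmetry of the transition kernel (implying the uniform distribution is stationary) before invoking the standard convergence theorem; you have simply spelled out the verifications that the paper leaves as a one-line appeal to a textbook reference. No gap to report.
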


\begin{proof}
    The transition kernel of  $\mathcal{D}_t $ is aperiodic and irreducible over a finite state space, and has a symmetric transition kernel, so it must converge to the uniform distribution \citep{bremaud}.
\end{proof}

\subsection{Proof of Theorem \ref{thm:uniformandclean}} \label{app:uniform-b}

\noindent Now using the above results, we prove the main claim of Theorem \ref{thm:uniformandclean}.
\vspace{0.1in}

\begin{proof} 
%Recall the definitions of the functions ${{\mathcal{L}}}(g)$ in \eqref{sth} and ${{\tilde{\mathcal{L}}}}(g)$ in \eqref{tilde_loss}. Note that since we always have $g(x)^\top g(x^+) \le d$, we can argue that for any $g$ we have $\mathcal{L}(g)\ge \tilde{\mathcal{L}}(g)$. Indeed, the equality holds when $g$ is a clean representation. Moreover, for any non-clean representation $g$, we know that  there exists at least one image $x$ for which its representation $g(x)$ is not exactly aligned with the representation of one of its augmented images. Therefore, the first term of the InfoNCE loss, i.e., $-\underbrace{ \beta \mathbb{E}_{x,x^+} [g(x)^\top g(x^+) ]}_{\text{alignment}} $, for that representation would be strictly larger than $- \beta d$. Hence, for non-clean $g$ we have $\mathcal{L}(g)> \tilde{\mathcal{L}}(g)$.
%
% {\color{red} change clean $g$ to cluster-preserving in this proof. Or use the language `$g$ composed of clean functions'.} 
Note that the InfoNCE loss can be written as 
\begin{align}\label{sth-supp}
{{\mathcal{L}}}(g) &=\underset{x,x^+,\{x_i^-\}_{\ell}}{\mathbb{E}} \bigg[  \log\bigg(1+ \sum_{i= 1}^\ell e^{\beta g(x)^\top \left(g(x_{i}^-) - g(x^+)\right)} \bigg)\bigg] 
% \nonumber \\
% & = - \beta \mathbb{E}_{x,x^{+}} [g(x)^\top g(x^{+})] + \underset{x,x^+,\{x_i^-\}_{\ell}}{\mathbb{E}} \bigg[  \log\bigg( e^{\beta g(x)^\top g(x^{+})}+ \sum_{i= 1}^\ell e^{\beta g(x)^\top g(x_{i}^-) } \bigg)\bigg]
\end{align}
Considering that we search over representations composed of clean functions, we know that for all $g\in \mathcal{G}_c$, the term $g(x)^\top g(x^+)$ is exactly equal to $ d$. Hence, the optimizing $\mathcal{L}(g)$ over $\mathcal{G}_c$ simplifies to minimizing 
\begin{align}\label{tilde_loss-supp}
{{\hat{\mathcal{L}}}}(g) & \coloneqq \underset{x,x^+,\{x_i^-\}_{\ell}}{\mathbb{E}} \bigg[  \log\bigg(1+ \sum_{i= 1}^\ell e^{\beta g(x)^\top g(x_{i}^-) - \beta d } \bigg)\bigg] 
% \nonumber \\
% &= - \beta d + \underset{x,x^+,\{x_i^-\}_{\ell}}{\mathbb{E}} \bigg[  \log\bigg(e^{\beta d}+ \sum_{i= 1}^\ell e^{\beta g(x)^\top g(x_{i}^-) } \bigg)\bigg]
\end{align}

{\color{black} Below, we use the term `clean representation' to indicate that the the representation is composed of clean functions, and a non-clean representation if at least one of the functions in the representation is not clean.}
Recall the definitions of the functions ${{\mathcal{L}}}(g)$ in \eqref{sth-supp} and ${{\hat{\mathcal{L}}}}(g)$ in \eqref{tilde_loss-supp}. Note that since we always have $g(x)^\top g(x^+) \le d$, we can argue that for any $g$ we have $\mathcal{L}(g)\ge \hat{\mathcal{L}}(g)$. Indeed, the equality holds when $g$ is a clean representation. Moreover, for any non-clean representation $g$, we know that  there exists at least one image $x$ for which its representation $g(x)$ is not exactly aligned with the representation of one of its augmented images $x^{+}$. Therefore, for that pair $(x,x^{+})$,  $- \beta g(x)^\top g(x^+) > - \beta d$. Hence, for some sample $x,x^{+},\{x_i^-\}_{\ell}$ with positive mass, we have: $\beta g(x)^\top  ( g(x_i^-) -  g(x^{+}))  > \beta g(x)^\top g(x_i^-) - \beta d $. Therefore, for non-clean $g$ we have $\mathcal{L}(g)> \hat{\mathcal{L}}(g)$ from \eqref{sth-supp} and \eqref{tilde_loss-supp}.

Moreover, according to the result of Lemma \ref{thm:uniform}, we know that the minimizer of the loss function $\hat{\mathcal{L}}$ is a uniform representation, thus for any non-uniform representation $g'$ and uniform representation $g''$ we have $\hat{\mathcal{L}}(g')> \hat{\mathcal{L}}(g'')$.

%Consider the related loss given below, where the positive inner product is replaced by its upper bound: 
%\begin{equation}\label{eq:Ltilde}
%\tilde{\mathcal{L}}(g) \coloneqq \underset{x,\{x_i^-\}_{\ell}}{\mathbb{E}}[  \log(1+ \sum_{i= 1}^\ell e^{\beta \left(g(x)^\top g(x_{i}^-) - d\right)} ) ]
%\end{equation}
   
Considering these two observations, we show that a uniform representation composed of clean functions, denoted by $g^*$, is an optimal solution of the loss ${\mathcal{L}}$. We consider the following two cases:

\textbf{Case 1:} If the representation $g'$ is not composed of clean functions, then we have
 \begin{align}
       {\mathcal{L}}(g') &\stackrel{(a)}{>} \hat{\mathcal{L}}(g') \stackrel{(b)}{\geq} \hat{\mathcal{L}}(g^*) \stackrel{(c)}{=}{\mathcal{L}}(g^*)  \nonumber
   \end{align}
 where $(a)$ holds with strict inequality since $g'$ is not clean (discussion above), $(b)$ holds as $g^*$ with a uniform distribution is an optimal solution of  $\hat{\mathcal{L}}$ (Lemma \ref{thm:uniform}), and $(c)$ holds because $g^*$ is composed of clean functions.
%% (Theorem \ref{thm:positive-clean}).
    
\textbf{Case 2:} If the representation $g'$ is composed of clean functions, but is not uniform, then we have 
 \begin{align}
       {\mathcal{L}}(g') &\stackrel{(a)}{\geq } \hat{\mathcal{L}}(g') \stackrel{(b)}{>} \hat{\mathcal{L}}(g^*) \stackrel{(c)}{=}{\mathcal{L}}(g^*)  \nonumber
   \end{align}
      where $(a)$ holds based on the definitions of ${\mathcal{L}}$ and $\hat{\mathcal{L}}$,  $(b)$~holds since $g'$ is not uniform  and $g^*$ is uniform (discussion above), and $(c)$~holds because $g^*$ is  composed of clean functions.

%% \textbf{Sanjay: The theorems above refer to commented parts. After the reorg of Lemma~5, we need to fix these. Case 1 -- (a from Assumption; b, c from Lemma, also clean vs cluster-preserving language}

    Combining these two cases, we obtain that the representation $g^*$ minimizes $\mathcal{L}(\cdot)$ if and only if $g^*$ is composed of clean functions and uniform. Furthermore, by Assumption~\ref{assump:spikyaug}, $g^*$ being composed of clean functions implies that is it cluster-preserving.
% Suppose that $g'\in \arg \min_{g \in \mathcal{G}} \mathcal{L}(g)$ is not clean.
   % Theorem \ref{thm:uniform} implies that all $\tilde{g} \in \arg \min_{g \in \mathcal{G}} \tilde{\mathcal{L}}(g)$ are uniform.  Further,
   % Assumption \ref{assump:spiky} implies that for any representation $g \in \mathcal{G}$ that is not clean,  ${\mathcal{L}}(g) > \tilde{\mathcal{L}}(g)$, whereas ${\mathcal{L}}(g) = \tilde{\mathcal{L}}(g)$ for all clean $g\in \mathcal{G}$. 
 %  Let $g*$ be clean and uniform and suppose $g'\in \arg \min_{g \in \mathcal{G}} \mathcal{L}(g)$ is  not clean and/or not uniform. Then 
 %  \begin{align}
  %     {\mathcal{L}}(g') &\stackrel{(a)}{\geq} \tilde{\mathcal{L}}(g') \stackrel{(b)}{\geq} \tilde{\mathcal{L}}(g^*) \stackrel{(c)}{=}{\mathcal{L}}(g^*)  \nonumber
 %  \end{align}
\end{proof}

\section{Agnostic Case}\label{appendix:agnostic}

In this section we prove Theorem \ref{thm:agnostic}.

\begin{theorem}[Theorem \ref{thm:agnostic} Restated] \label{thm:agnostic-app}
% Let $\ell = \frac{C}{\epsilon} d 2^d $ for an absolute constant $C$ and $\beta \geq c_3 2^d$. 
Suppose Assumptions \ref{assump:exist} and \ref{assump-b} hold and $g= [f_1,...,f_d]$ is \textbf{not} cluster-preserving with 
$\min_{j\in [d]} \min_{\mathbf{c} \in \Sigma_{f_j}} \mathbb{P}_{x,x'\sim D}[x, x' \in \Gamma_{\mathbf{c}},f_j(x)\neq f_j(x')] \geq \epsilon >0$. Let $\ell \geq \frac{ c  }{\epsilon} d 2^d $, $\beta \geq {c}\log(\frac{c}{\epsilon}) 2^d$ for a sufficiently large constant $c>1$.
% $\min_{j\in [d]} \min_{\mathbf{c} \in \Sigma_{f_j}} \|\{x^+\in A(x): x\in \Gamma_{\mathbf{c},\circ}, \; f_j(x)\ne f_j(x^+)\}\| \geq \epsilon >0$. 
Moreover, suppose $g$
 is close to a uniform
representation in the sense that $\mathbb{P}_{x\sim D_\circ}[g(x)=v] \geq \frac{10}{ c d2^d}$ 
or $\mathbb{P}_{x\sim D_\circ}[g(x)=v]  \leq \frac{ \epsilon}{ 100 c d 2^{2d}} $
% $\|Q_v\|_\circ > \frac{\log(\eta)}{c 2^d}$ or $\|Q_v\|_\circ \leq \frac{c'}{ \eta d 2^{2d}}$ 
for all $v\in \mathcal{H}_d$.
Then $g$ is \textbf{not} a minimizer of the InfoNCE loss. 
% $\mathcal{L}(g)> \mathcal{L}(g')$ for some nearby representation $g'$. 
% Consider the InfoNCE objective \ref{eq:InfoNCE} when $l \ge m \log m$. 
% The optimal representation is clean and such that it maximizes the entropy of the downstream representations to within an additive error.
\end{theorem}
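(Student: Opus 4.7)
The plan is to prove the theorem by constructing an explicit perturbation $g'$ of $g$ that achieves smaller InfoNCE loss, contradicting $g$ being a minimizer. Since $g=[f_1,\ldots,f_d]$ is not cluster-preserving, WLOG at least one coordinate, say $f_1$, fails to preserve some cluster $\Gamma_{\mathbf{c}}$. I would define $f_1' := f_1^{(\mathbf{c},\sigma)}$ (with $\sigma \in \{-1,1\}$ chosen to minimize the perturbation as in Definition~\ref{defn:reg}), which lies in $\mathcal{F}$ by Assumption~\ref{assump:exist}, and set $g' := [f_1',f_2,\ldots,f_d]$. Then $g'$ and $g$ agree on every coordinate except the first, and even the first coordinate differs only on $\Gamma_{\mathbf{c}}$.

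Next, I would decompose $\mathcal{L}(g) - \mathcal{L}(g') = [\mathcal{L}^+(g)-\mathcal{L}^+(g')] - [\mathcal{L}^-(g')-\mathcal{L}^-(g)]$ and control each piece separately. For the alignment part, noting that $g(x)^\top g(x^+)$ and $g'(x)^\top g'(x^+)$ differ by exactly $\pm 2$ precisely when one of $x,x^+$ lies in $\Gamma_{\mathbf{c}}$ and $f_1$ separates it from the other, I would get $\mathcal{L}^+(g)-\mathcal{L}^+(g') = 2\beta \|R\|$, where $R$ is the set of augmentations inside $\Gamma_{\mathbf{c}}$ that $f_1$ misclassifies relative to their natural image. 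By Assumption~\ref{assump-b}, $\|R\|\geq \delta \|E\|_\circ$ with $\delta \geq 0.4$, which will serve as the target to beat.

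For the uniformity part, I would introduce the partition $\{Q_v\}_{v\in\mathcal{H}_d}\cup \{E_v\}_{v\in \mathcal{H}_d}$ with $Q_v=\{x:g(x)=g'(x)=v\}$ and $E_v=\{x:g(x)=v\neq g'(x)\}$, and telescope the log-sum-exp differences over events indexed by which set $x$ lies in. I would identify the ``safe'' set $Z = \{Q_v : f_1'(x)\neq f_1'(x^-)\;\forall x\in Q_v, x^-\in E\}$, and observe that when $x\in Q_v\in Z$ the negative-sample similarities weakly decrease from $g$ to $g'$, so those contributions are non-positive. The remaining contributions split into two cases: (i) $x\in Q_v\notin Z$ with at least one negative sample also in $Q_v$, where both log-sums are dominated by the shared $e^{\beta d}$ term and $\log(1+x)\leq x$ gives a bound $O(\|E\|_\circ)$ independent of $\beta$; (ii) $x\in Q_v\notin Z$ with no negative samples in $Q_v$, where the dominant terms can differ by a factor of $e^{2\beta}$ and require finer control. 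A symmetric analysis handles $x\in E_v$.

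The main obstacle is case (ii). Here I would leverage the near-uniformity hypothesis: since every ``active'' vertex $v$ has $\mathbb{P}[g(x)=v]\gtrsim \tfrac{1}{cd 2^d}$ and $\mathbb{P}[g(x)=v]=\|Q_v\|_\circ$ for $Q_v\notin Z$ (because $E_v=\emptyset$ for such $v$, using that $f_1$ must separate $Q_v$ from $E$), the probability that $x\in Q_v\notin Z$ yet none of the $\ell$ negative samples land in $Q_v$ is at most $(1-\|Q_v\|_\circ)^\ell \leq \exp(-\ell/(cd 2^d))$, which is exponentially small once $\ell\gtrsim \tfrac{c}{\epsilon}d 2^d$. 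Combined with the factor $\mathbb{P}[x\in Q_v\notin Z, x^+\in E]$ scaling with $\|R\|$ in the sub-case where no negative samples lie in $E$, and $\mathbb{P}[\text{some negative sample in } E]\leq \ell\|E\|_\circ$ paired with the exponential decay in the other sub-case, these contributions are $o(\|R\|)$ even after being multiplied by $2\beta$ with $\beta\gtrsim \log(c/\epsilon)2^d$. Summing over $v$ and combining with the case (i) bound yields $\mathcal{L}^-(g')-\mathcal{L}^-(g) < 2\beta \|R\| = \mathcal{L}^+(g)-\mathcal{L}^+(g')$, so $\mathcal{L}(g)>\mathcal{L}(g')$ and $g$ cannot be a minimizer. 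The delicate step is balancing the exponential blowup from the $e^{\beta d}$ terms against the exponential rarity guaranteed by near-uniformity, which is exactly what the scalings of $\ell$ and $\beta$ in the statement are engineered to achieve.
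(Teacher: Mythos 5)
Your proposal follows essentially the same route as the paper's own proof: the same single-coordinate perturbation $g'=[f_1^{(\mathbf{c},\sigma)},f_2,\dots,f_d]$, the same $2\beta\|R\|$ gain in the alignment term, the same $Q_v/E_v$ partition with the set $Z$, the same case split on whether negative samples land in $Q_v$, and the same use of near-uniformity (via $E_v=\emptyset$ when $Q_v\notin Z$) to make the $e^{2\beta}$-blowup events exponentially rare, so that $\mathcal{L}^-(g')-\mathcal{L}^-(g)<2\beta\|R\|$. The only ingredient you leave implicit that the paper makes explicit is the lower bound $\|R\|=\Omega(\delta\epsilon)$ (the paper's Claim relating $\epsilon$ to $\|R\|$), which is what lets the $\epsilon$-scaled error terms from the rare events be absorbed into $o(\beta\|R\|)$.
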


\begin{proof}
First we recall notations: for a set of images $B\subseteq D$, we employ the notations  $\|B\|_\circ \coloneqq \mathbb{P}_{x \sim D_{\circ}}[x \in B]$ and $\|B\| \coloneqq \mathbb{P}_{x \sim D\setminus D_{\circ}}[x \in B]$. Also, we let $\Gamma_{\mathbf{c},\circ}\coloneqq \Gamma_{\mathbf{c}}\cap D_\circ$.

{\color{black}
As discussed in the proof sketch,  we construct a representation $g'$ that is close to  $g$ by changing one coordinate of $g$ such that it preserves one additional cluster, and show that the resulting $g'$ achieves smaller InfoNCE loss than $g$. 
% We construct $g'$ by changing only one coordinate of $g$ such that it preserves one cluster that it intersects. 

Suppose WLOG that $f_1$ does not preserve the cluster $\Gamma_{\mathbf{c}}$ for some $\mathbf{c}\in C$. That is, $\exists x,x'\in \Gamma_{\mathbf{c}}$ such that $f_1(x)\neq f_1(x')$.
% has largest misclassification of positive samples on its most misclassified cluster among all $d$ classifiers in $g$, specifically that $1 \in \arg\max_{j\in [d]} \max_{\mathbf{c}' \in \Sigma_{f_j}} \|\{x^+\in Q(x): x\in \Gamma_{\mathbf{c}',\circ}, \; f_j(x)\ne f_j(x^+)\}\|$. 
% meaning Suppose WLOG  that $1 \in \arg\max_{j\in [d]} \max_{\mathbf{c} \in \Sigma_{f_j}} \|\{x^+\in Q(x): x\in \Gamma_{\mathbf{c},\circ}, \; f_j(x)\ne f_j(x^+)\}\|$, meaning 
Further, let $f_1^{(\mathbf{c},\sigma)}$ be the smallest perturbation of $f_1$ that preserves $\Gamma_{\mathbf{c}}$. Specifically, $f^{({\mathbf{c}},\sigma)}(x) \coloneqq \begin{cases} f(x) & x \notin \Gamma_{\mathbf{c}} \\
    \sigma & x \in \Gamma_{\mathbf{c}} \\
    \end{cases}$, where $\sigma\in \argmin_{\sigma' \in \{-1,1\}} \Vert \{x \in \Gamma_{\mathbf{c},\circ} \mid f_1(x) \ne f_1^{({\mathbf{c}},\sigma')} (x)\}\Vert_\circ$.

% $ \sigma \in \arg \min_{\sigma' \in \{-1,1\}} \Vert \{x \in D_\circ \mid f_1(x) \ne f_{1}^{(\mathbf{c},\sigma')} (x)\}\Vert_\circ $, using the notation defined in Assumption \ref{assump-b}, and 
Denote $f_1'= f_1^{(\mathbf{c},\sigma)}$.  {By Assumption \ref{assump:exist}, $f_1' \in \mathcal{F}$.} 
Construct $g' =[f_1',f_2,\dots,f_d] \in \mathcal{G}$, and note that $g'$ is equivalent to $g$ on all but one coordinate, and the one differing coordinate differs only on one cluster, for which this coordinate preserves the cluster in $g'$ but does not preserve it in $g$.
}

% Since $g= [f_1,\dots,f_d]$ is not clean, there exists $j\in [d]$ such that $f_j$ is not clean. WLOG let this $j=1$. Let $S$ denote one of the clusters that $f_1$ intersects. 
% Suppose WLOG that $1 \in \arg\max_{j\in [d]} \Pr_{x\sim \mathcal{D}_\circ,x^+\sim  \kappa(x)}[f_j(x)\ne f_j(x^+)]$, and let $ \mathbf{c}',\sigma' \in  \min_{\mathbf{c}  \in \Sigma_{f_1}} \min_{\sigma \in \{-1,1\}} \Vert \{x \in \mathcal{D}_\circ \mid f_1(x) \ne f_{1}^{(\mathbf{c},\sigma)} (x)\}\Vert_\circ $, using the notation defined in Assumption \ref{assump-b}.  {By Assumption \ref{assump:exist}, $f_1^{(\mathbf{c}',\sigma')}\in \mathcal{F}$.} 
% Construct $g' =[f_1^{(\mathbf{c}',\sigma')},f_2,\dots,f_d]$. %where $f_1'(x) = f_1(x)$ for all $x \notin S$. 
% % % Define the set $B := \{x \in \mathcal{D}_\circ \cup \mathcal{D}: f_1(x) \neq f_1'(x) \} \subset S$. 
% % Let $f_1'(x) = 1$ or $f_1'(x) = -1$ for all $x \in S$, whichever induces smaller $\| \{x : f_1'(x) \neq f_1(x) \} \|$. 

% Now for the negative terms. Consider a cluster corresponding to $v \in \uH^d$. 

% ($B$ is deterministic, $n_2$ is random). 
% Also note that here $Q_v$  is totally unrelated  to the set $A$ in the picture.

We show that $\mathcal{L}(g)- \mathcal{L}(g')>0$, where
\begin{align}
    \mathcal{L}(g)- \mathcal{L}(g') = \mathcal{L}^+(g) - \mathcal{L}^+(g') + \mathcal{L}^-(g) - \mathcal{L}^-(g').
\end{align}
where 
\begin{align*}
\mathcal{L}^+(g) &:= -\beta \mathbb{E}_{x, x^+}\left[g(x)^\top g(x^+)\right] \nonumber \\
\mathcal{L}^-(g) 
&:= \mathbb{E}_{x,x^+, \{x^-_i\}_{\ell}} \bigg[\log\bigg({ e^{\beta g(x)^\top g(x^+)}\!+\!\sum_{i=1}^\ell e^{\beta g(x)^\top g(x^-_{i})}  } \bigg)\bigg] \nonumber
\end{align*}
where $\mathcal{L}^+$ and $\mathcal{L}^-$ respectively correspond to the alignment and uniformity losses in \ref{eq:InfoNCE}. We will refer to these losses as the positive and negative losses, respectively.
We show $\mathcal{L}(g)- \mathcal{L}(g')>0$ by showing 
\begin{align}
    \mathcal{L}^+(g) - \mathcal{L}^+(g') > \mathcal{L}^-(g') - \mathcal{L}^-(g)  \label{show}
\end{align}
by first computing the LHS explicitly, then upper bounding the RHS. To do this, we define a partitioning of the set of images $D$ as follows.
For all $v \in \mathcal{H}_d$, define the sets $Q_v := \{x \in  D : g(x)=v, g'(x)=v \}$ and $E_v := \{x \in  D : g(x)= v, g'(x)\neq v \}$. 
In other words $Q_v \cup E_v$ is the set of images that $g$ maps to $v$, and $Q_v$ is the subset of this set which $g'$ also maps to $v$, while $E_v$ is the subset which $g'$ does not map to $v$.  
% $\bar{A}_{v} = \{j\mid g(x_j^-) = v\}$ with $n_{1,v} := |\bar{A}_v| $ (random), 
% Note that $\|Q_v\|_\circ = D_g(v)$. 
Let  $Q:= \cup_{v \in \mathcal{H}_d} Q_v$ and $E:= \cup_{v \in \mathcal{H}_d} E_v$. Observe that  $Q\cup E= D$, and each of the $Q_v$ and $E_v$'s are disjoint, so they form a partition of $D$. 
Further note that $\|E\|_\circ = \Delta_{f_1,\mathbf{c}}\coloneqq \min_{\sigma' \in \{-1,1\}} \Vert \{x \in \Gamma_{\mathbf{c},\circ} \mid f_1(x) \ne f_1^{({\mathbf{c}},\sigma')} (x)\}\Vert_\circ $, as defined in Definition \ref{defn:reg}.

Now we consider the difference in  the positive losses. For all augmentations $x^+ \in {D}\setminus D_\circ$, define $\tilde{\mathcal{A}}^{-1}(x^+)$ as the natural image from which the augmentation was derived, i.e. \\
$ \tilde{\mathcal{A}}^{-1}(x^+) = x \iff \mathcal{A}(x) = x^+$ for some $\mathcal{A}\in \Lambda$.
Moreover, define $R := \{ x^+ \in E\setminus D_\circ : \tilde{\mathcal{A}}^{-1}(x^+) \in Q  \} \cup \{ x^+ \in Q\setminus D_\circ : \tilde{\mathcal{A}}^{-1}(x^+) \in E\} = \{x^+ \in \Gamma_{\mathbf{c}}:f_1(\tilde{\mathcal{A}}^{-1}(x^+)) \neq f_1(x^+)  \}$ as the set of augmentations in $\Gamma_{\mathbf{c}}$ that $f_1$ classifies incorrectly. For any event $B$, let $\chi B$ denote the indicator random variable for $B$, i.e. $\chi B = 1$ if $B$ occurs and $\chi B = 0$ otherwise. Using this notation and the construction of $g'$ we can write the difference in positive losses as:
\begin{align*}
\mathcal{L}^+(g) - \mathcal{L}^+(g') &= 
\beta \mathbb{E}_{x, x^+}\left[g'(x)^\top g'(x^+) - g(x)^\top g(x^+)\right] \nonumber \\
&= \beta \mathbb{E}_{x, x^+}\left[f_1'(x) f_1'(x^+) - f_1(x) f_1(x^+)\right] \nonumber \\
&= \beta \sum_{v \in \mathcal{H}_d}\mathbb{E}_{x, x^+}\big[\chi\{x \in Q_v\}(f_1'(x) f_1'(x^+) - f_1(x) f_1(x^+)) \nonumber \\
&\quad \quad \quad \quad \quad \quad \quad + \chi\{x \in E_v\}(f_1'(x) f_1'(x^+) - f_1(x) f_1(x^+))\big] \nonumber \\
&= 2\beta \sum_{v \in \mathcal{H}_d}\mathbb{E}_{x, x^+}\big[\chi\{x \in Q_v\}\chi\{x^+ \in E\} + \chi\{x \in E_v\}\chi\{x^+ \in Q\}\big] \nonumber \\
&= 2\beta \sum_{v \in \mathcal{H}_d}\big(\Pr_{x \sim D_\circ, x^+\sim A(x)}\left[x\in Q_v, x^+\in E\right]\nonumber \\
&\quad \quad \quad \quad \quad \quad +\Pr_{x \sim D_\circ, x^+\sim A(x)}\left[x\in E_v, x^+\in Q\right] \big)\nonumber \\
&= 2 \beta \|R\|
\end{align*}
where in the last equality we have used that all augmentation sets are of equal size.
% By Assumption \ref{assump-b}, we have $\|R\| \geq \delta \|E\|_\circ$.
% Meanwhile 
% \begin{align*}
% \underset{x, x^+}{E}[\beta g'(x)g'(x^+)] &\le  \beta d \Pr[g(x)= g(x^+)] + \beta(d-2)\Pr[g(x)\ne g(x^+)] \\
% &= \beta(d-2) + 2\beta\left(1-\left(\Pr\left[x\in Q, x^+\in E\right]+\Pr\left[x\in E, x^+\in Q\right]\right)\right)
% \end{align*}
% The difference is
% \begin{align}
% \underset{x, x^+}{E}[\beta \left(g'(x)g'(x^+) - g(x)g(x^+)\right)] \le 2\beta\left(\Pr\left[x\in Q, x^+\in E\right]+\Pr\left[x\in E, x^+\in Q\right]\right) 
% &= 2\beta \Delta_f^{new}
% \end{align}{\color{red}(A and B are as in the picture), and $\Delta_f^{new}$ defined as the fraction of natural images whose augmentation sets are split by $f$.}

Now we consider the negative losses. 
We first decompose the negative loss as
\begin{align*}
\mathcal{L}^-(g) 
&= \mathbb{E}_{x,x^+, \{x^-_i\}_{\ell}}\bigg[ \log\bigg({ e^{\beta g(x)^\top g(x^+)}\!+\!\sum_{i=1}^\ell e^{\beta g(x)^\top g(x^-_{i})}  } \bigg)\bigg] \nonumber \\
&= \sum_{v \in \uH_{d}} \mathbb{E}_{x,x^+, \{x^-_i\}_{\ell}} \bigg[\chi \{x \in Q_v\}  \log\bigg({ e^{\beta g(x)^\top g(x^+)}\!+\!\sum_{i=1}^\ell e^{\beta g(x)^\top g(x^-_{i})}  } \bigg) \bigg] \nonumber \\
&\quad + \sum_{v \in \uH_{d}} \mathbb{E}_{x,x^+, \{x^-_i\}_{\ell}} \bigg[\chi \{x \in E_v\}  \log\bigg({ e^{\beta g(x)^\top g(x^+)}\!+\!\sum_{i=1}^\ell e^{\beta g(x)^\top g(x^-_{i})}  } \bigg) \bigg] \nonumber \\
&= \sum_{v \in \uH_{d}} \mathcal{L}^-_{Q_v}(g)  +  \mathcal{L}^-_{E_v}(g) 
\end{align*}
where $\mathcal{L}^-_{Q_v}(g)  := \mathbb{E}_{x,x^+, \{x^-_i\}_{\ell}} \left[\chi \{x \in Q_v\}  \log\big({ e^{\beta g(x)^\top g(x^+)}\!+\!\sum_{i=1}^\ell e^{\beta g(x)^\top g(x^-_{i})}  } \big)\right] $ and \\
$\mathcal{L}^-_{E_v}(g)  := \mathbb{E}_{x,x^+, \{x^-_i\}_{\ell}} \left[\chi \{x \in E_v\}  \log\big({ e^{\beta g(x)^\top g(x^+)}\!+\!\sum_{i=1}^\ell e^{\beta g(x)^\top g(x^-_{i})}  } \big)\right] $.
Note that we need to upper bound \begin{align}\mathcal{L}^-(g') - \mathcal{L}^-(g) = \sum_{v \in \uH_{d}} \mathcal{L}^-_{Q_v}(g') - \mathcal{L}^-_{Q_v}(g)  +  \mathcal{L}^-_{E_v}(g') -\mathcal{L}^-_{E_v}(g).\label{gn}\end{align} 
% \begin{enumerate}
% \item In the event that $\{x\in Q_v\}$, we have 
We  analyze $\mathcal{L}^-_{Q_v}(g') - \mathcal{L}^-_{Q_v}(g) $ and $\mathcal{L}^-_{E_v}(g') -\mathcal{L}^-_{E_v}(g)$ separately for every $v \in \mathcal{H}_d$. 
To do so, we define additional notations. For a batch of negative samples $\{x_{i}^-\}_{i=1}^\ell$ and a vertex $v\in \mathcal{H}_d$, let $n_{1,v}:= \sum_{i=1}^\ell \chi\{ x_{i}^- \in Q_v \}$, $n_{2}:= \sum_{i=1}^\ell \chi\{ x_{i}^- \in E \}$, $n_{3,v}:= \sum_{i=1}^\ell \chi\{ x_{i}^- \in E_v \}$ and $n_{4}:= \sum_{i=1}^\ell \chi\{ x_{i}^- \in Q \}$.
Using the fact that $g(x)^\top g(x_i^-) = d $ for all $x,x_i^- \in Q_v$, we have
\begin{align*}
\mathcal{L}^-_{Q_v}(g) 
&= \mathbb{E}_{x,x^+, \{x^-_i\}_{\ell}} \bigg[ \chi \{x \in Q_v\}  \log\bigg({ e^{\beta g(x)^\top g(x^+)}\!+\!n_{1,v}e^{\beta d}+\sum_{x_i^{-}\notin {Q}_v} e^{\beta g(x)^\top g(x^-_{i})}  } \bigg) \bigg]\\
&= \mathbb{E}_{x,x^+, \{x^-_i\}_{\ell}} \bigg[ \chi \{x \in Q_v\} \nonumber \\
&\quad \quad \log\bigg({ e^{\beta g(x)^\top g(x^+)}+n_{1,v}e^{\beta d}+ \sum_{x_i^{-} \in E}e^{\beta g(x)^\top g(x^-_{i})} 
 +\sum_{x_i^{-}\notin {Q}_v\cup E} e^{\beta g(x)^\top g(x^-_{i})}  } \bigg)\bigg] 
\end{align*}
% For all $x_{i}^-\in B$ and $x\in Q$, we know that $g'(x)^\top g'(x_i^-) = g(x)^\top g(x_i^-) + 2\beta$
Next, using the fact that %$g'(x)^\top g'(x_i^-) = g(x)^\top g(x_i^-)$ for all $x, x_i^- \in Q$ and 
$g'(x)=g(x)$ for all $x \notin  E$, we have
\begin{align*}
\mathcal{L}^-_{Q_v}(g') & = \mathbb{E}_{x,x^+, \{x^-_i\}_{\ell}}  \bigg[\chi \{x \in Q_v\}  \log\bigg({ e^{\beta g'(x)^\top g'(x^+)}+ n_{1,v}e^{\beta d}+\sum_{x_i^{-}\notin {Q}_v} e^{\beta g'(x)^\top g'(x^-_{i})}  } \bigg)\bigg] \nonumber \\
& =  \mathbb{E}_{x,x^+, \{x^-_i\}_{\ell}} \bigg[ \chi \{x \in Q_v\} \nonumber \\
&\quad \quad \log\bigg({ e^{\beta g'(x)^\top g'(x^+)}+ n_{1,v}e^{\beta d}+ \sum_{x_i^{-}\in E} e^{\beta g'(x)^\top g'(x^-_{i})}  +\sum_{x_i^{-}\notin {Q}_v\cup E} e^{\beta g(x)^\top g(x^-_{i})}  } \bigg)\bigg] \nonumber 
% &=\mathbb{E}_{x,x^+, \{x^-_i\}_{\ell}}  \chi \{x \in Q_v\} \log\bigg({ e^{\beta g'(x)^\top g'(x^+)}+ (n_{1,v} + n_2)e^{\beta d}+\sum_{x_i^{-}\not\in {Q}_v\cup E} e^{\beta g(x)^\top g(x^-_{i})}  } \bigg)
\end{align*}
Before analyzing $\mathcal{L}^-_{E_v} $, we first prove the following claims. 
\begin{claim} \label{clm:1} For  all $x,x^-\in E_v$, $g'(x)=g'(x^-)$.
\end{claim}
\begin{proof}
    By construction of $g'$, $g'(x)$ agrees with $g(x)$ on all but the first coordinate. Thus, $g(x)\neq g'(x) \implies f_1(x) = -f_1'(x)$. Consider any $x,x^-\in E_v$.  By definition of $E_v$, $g(x)=g(x^-)=v$, and $g'(x)\neq g(x)$. Let $v=[v_1,v_2,\dots,v_d]$, then we have $g'(x)= g'(x^-)=[-v_1,v_2,\dots,v_d]$.
\end{proof}
\begin{claim} \label{clm:2} For all $v\in \mathcal{H}_d$ and all $x\in E_v$, $x^-\in E$, $g'(x)^\top g'(x^-) = g(x)^\top  g(x^-)$.
\end{claim}
\begin{proof}
Consider any $x\in E_v, x^-\in E$. As above, observe that the $j$-th coordinates of $g(x)$ and   $g'(x)$ are the same for all $j>1$ (and likewise for $g(x^-)$ and   $g'(x^-)$) by construction of $g$. Moreover, $f_1(x) = -f_1'(x)$ and $f_1(x^-) = -f_1'(x^-)$ by definition of $E$. Thus $g'(x)^\top g'(x^-) - g(x)^\top  g(x^-) = f_1'(x)f_1'(x^-) - (-f_1'(x))(-f_1'(x^-)) = 0$.
\end{proof}
\begin{claim} \label{clm:3} For all $v\in \mathcal{H}_d$ and all $x\in E_v$, $x^-\in Q$, $g'(x)^\top g'(x^-) \neq g(x)^\top  g(x^-)$.
\end{claim}
\begin{proof}
By definition of $E_v$, $g'(x) \neq g(x)$ for all $x \in E_v$, and by definition of $Q$, $g'(x^-) = g(x^-)$ for all $x^- \in Q$. Thus $g'(x)^\top g'(x^-) \neq g(x)^\top  g(x^-)$.
\end{proof}

Next we can decompose  $\mathcal{L}^-_{E_v}(g)$ as follows, using the fact that $g(x)^\top g(x_i^-) = d $ for all $x,x_i^- \in E_v$.
\begin{align*}
\mathcal{L}^-_{E_v}(g) 
&= \mathbb{E}_{x,x^+, \{x^-_i\}_{\ell}} \left[ \chi \{x \in E_v\}  \log\bigg({ e^{\beta g(x)^\top g(x^+)}\!+\!n_{3,v}e^{\beta d}+\sum_{x_i^{-}\not\in {E}_v} e^{\beta g(x)^\top g(x^-_{i})}  } \bigg)\right] \\
&= \mathbb{E}_{x,x^+, \{x^-_i\}_{\ell}}\bigg[  \chi \{x \in E_v\} \nonumber \\
&\quad \quad \quad \log\bigg({ e^{\beta g(x)^\top g(x^+)}+n_{3,v}e^{\beta d}+ \sum_{x_i^{-} \in Q}e^{\beta g(x)^\top g(x^-_{i})} 
 +\sum_{x_i^{-}\not\in {E}_v\cup Q} e^{\beta g(x)^\top g(x^-_{i})}  } \bigg)\bigg] 
\end{align*}
Next we use Claims \ref{clm:1}, \ref{clm:2} and \ref{clm:3} to obtain
\begin{align*}
\mathcal{L}^-_{E_v}(g') & = \mathbb{E}_{x,x^+, \{x^-_i\}_{\ell}} \bigg[ \chi \{x \in E_v\}  \log\bigg({ e^{\beta g'(x)^\top g'(x^+)}+ n_{3,v}e^{\beta d}+\sum_{x_i^{-}\not\in {E}_v} e^{\beta g'(x)^\top g'(x^-_{i})}  } \bigg)\bigg] \nonumber \\
& =  \mathbb{E}_{x,x^+, \{x^-_i\}_{\ell}}\bigg[  \chi \{x \in E_v\} \\
&\quad \quad \quad \log\bigg({ e^{\beta g'(x)^\top g'(x^+)}+ n_{3,v}e^{\beta d}+ \sum_{x_i^{-}\in  Q} e^{\beta g'(x)^\top g'(x^-_{i})}  +\sum_{x_i^{-}\not\in {E}_v\cup Q} e^{\beta g(x)^\top g(x^-_{i})}  } \bigg)\bigg] \nonumber 
% &=\mathbb{E}_{x,x^+, \{x^-_i\}_{\ell}}  \chi \{x \in Q_v\} \log\bigg({ e^{\beta g'(x)^\top g'(x^+)}+ (n_{1,v} + n_2)e^{\beta d}+\sum_{x_i^{-}\not\in {Q}_v\cup E} e^{\beta g(x)^\top g(x^-_{i})}  } \bigg)
\end{align*}
Next, define $Z := \{x\in Q: f_1'(x)\neq f_1'(x^-), \forall x^- \in E \} \subseteq Q$. 
% $Z \coloneqq \{Q_{v}: f'_{1}(x)\neq f'_1(x^-) \; \forall x \in Q_{v}, x^-\in E\}$
as the set of images that $f_1'$ labels differently than it does the samples in $E$ (note that $f_1'(x)=f_1'(x^-)$ for all $x,x^- \in E$, and $f_1'(x)=f_1'(x^-)$ for all $x,x^- \in Q_v$, so the $\forall$ condition in the definition of $Z$ could be replaced with `for some'). Also note that the definition of $Z$ here differs slightly from the one  in the proof sketch in Section \ref{section:agnostic} for ease of notation.
%Also note that $Z$ is defined a set of images here, wher

% $Z := \{x: f_1'(x) = f_1(x), f_1'(x)\neq f_1'(x'), x' \in E \} \subseteq Q$. 
% Note that for all $v$, $Q_v \cap C = Q_v$ or $Q_v \cap C = \emptyset$. 

Next we prove two claims regarding properties of $Z$.
\begin{claim}\label{clm:4}
    For all $v \in \mathcal{H}_d$ exactly one of the following holds: (i) $Q_v \cap Z = Q_v$ or (ii) $Q_v \setminus Z= Q_v$. 
\end{claim}
\begin{proof}
Suppose $x \in Q_v \cap Z$. Then, for all $x'\in Q_v$, $f_1'(x')=f_1'(x)$ by definition of $Q_v$. Thus $f_1'(x')=f_1'(x) \neq f_1'(x'')$ for any $x''\in E$ since $x\in Z$. This implies $x'\in Z$, therefore $Q_v \cap Z=Q_v$.

Likewise, suppose $x \in Q_v \setminus Z$. Then, for all $x'\in Q_v$, $f_1'(x')=f_1'(x)$ by definition of $Q_v$. Thus $f_1'(x')=f_1'(x) = f_1'(x'')$ for any $x''\in E$ since $x\notin Z$. This implies $x'\notin Z$, therefore $Q_v \setminus Z=Q_v$.
\end{proof}

\begin{claim}\label{clm:5}
    For all $v \in \mathcal{H}_d$, $x\in Q_v \cap Z$, $x^- \in E$, $g'(x)^\top g'(x^-) = g(x)^\top g(x^-) - 2$.
\end{claim}
\begin{proof}
    Suppose $x \in Q_v \in Z$. For any $x^-\in E$, then $f_1(x_i^-) = -f_1'(x_i^-)$ by definition of $E$, $f_1(x) = f_1'(x)$ by definition of $Q_v$ and $f_1'(x) = -f_1'(x_i^-)$ by definition of $Z$. Therefore, $f_1(x) = f_1(x_i^-) $  and $g'(x)^\top g'(x^-) = g(x)^\top g(x^-) - 2$, noting that $g$ and $g'$ agree on all but the first coordinate.
\end{proof}

\begin{claim}\label{clm:6}
    For all $v \in \mathcal{H}_d$, $Q_v \cap Z= \emptyset $ and $Q_v\neq \emptyset \implies E_v = \emptyset$. 
\end{claim}

\begin{proof}
From Claim \ref{clm:4}, $Q_v \cap Z= \emptyset \implies Q_v \setminus Z= Q_v$.
    Suppose $x \in Q_v \setminus Z$ and $x'\in E_v$. Then $f_1(x)=f_1(x')$ by definition of $Q_v$ and $E_v$. Also, $f_1'(x)=f_1'(x')$ since $x \notin Z$ and $x'\in E$, and $f_1(x')\neq f_1'(x')$ by definition of $E$. Therefore $f_1(x)\neq f_1'(x)$, but this contradicts the definition of $Q_v$.
\end{proof}

We use Claim \ref{clm:6} later in the proof. For now we use Claims \ref{clm:4} and \ref{clm:5} to bound $\mathcal{L}^-_{Q_v}(g') - \mathcal{L}^-_{Q_v}(g)$
for all $v$ such that $Q_v \subseteq Z$:
\begin{align}
   & \mathcal{L}^-_{Q_v}(g') - \mathcal{L}^-_{Q_v}(g) \nonumber \\
    &= \mathbb{E}_{x,x^+, \{x^-_i\}_{\ell}} \bigg[ \chi \{x \in Q_v\} \nonumber \\
&\quad \quad \bigg(\log\bigg({ e^{\beta g'(x)^\top g'(x^+)}+ n_{1,v}e^{\beta d}+ \sum_{x_i^{-}\in  E} e^{\beta g'(x)^\top g'(x^-_{i})}  +\sum_{x_i^{-}\not\in {Q}_v\cup E} e^{\beta g(x)^\top g(x^-_{i})}  } \bigg)\nonumber \\
&\quad \quad - \log\bigg({ e^{\beta g(x)^\top g(x^+)}+n_{1,v}e^{\beta d}+ \sum_{x_i^{-} \in E}e^{\beta g(x)^\top g(x^-_{i})} 
 +\sum_{x_i^{-}\not\in {Q}_v\cup E} e^{\beta g(x)^\top g(x^-_{i})}  } \bigg)\bigg)\bigg] \nonumber \\
 &=  \mathbb{E}_{x,x^+, \{x^-_i\}_{\ell}} \bigg[ \chi \{x \in Q_v\} \nonumber \\
&\quad \quad \bigg(\log\bigg({ e^{\beta g(x)^\top g(x^+)}+ n_{1,v}e^{\beta d}+ e^{-2\beta}\sum_{x_i^{-}\in  E} e^{\beta g(x)^\top g(x^-_{i})}  +\sum_{x_i^{-}\not\in {Q}_v\cup E} e^{\beta g(x)^\top g(x^-_{i})}  } \bigg)\nonumber \\
&\quad \quad - \log\bigg({ e^{\beta g(x)^\top g(x^+)}+n_{1,v}e^{\beta d}+ \sum_{x_i^{-} \in E}e^{\beta g(x)^\top g(x^-_{i})} 
 +\sum_{x_i^{-}\not\in {Q}_v\cup E} e^{\beta g(x)^\top g(x^-_{i})}  } \bigg)\bigg)\bigg] \label{lb} \\
  &\leq 0
\end{align}
% where \eqref{lb} follows by definition of $C$. So, we have
Thus, we have
\begin{align}
    \mathcal{L}^-(g') - \mathcal{L}^-(g) &= \sum_{v \in \uH_{d}} \mathcal{L}^-_{Q_v}(g') - \mathcal{L}^-_{Q_v}(g)  +  \mathcal{L}^-_{E_v}(g') -\mathcal{L}^-_{E_v}(g) \nonumber \\
    &\leq \sum_{v\in \mathcal{H}_d: Q_v \setminus Z = Q_v} \mathcal{L}^-_{Q_v}(g') - \mathcal{L}^-_{Q_v}(g)  + \sum_{v \in \uH_{d}}  \mathcal{L}^-_{E_v}(g') -\mathcal{L}^-_{E_v}(g) 
\end{align}

For each $v \in \mathcal{H}_d: Q_v \setminus Z = Q_v$, we consider three cases: (1) $n_2=0$, (2) $n_2>0,n_{1,v=0}$, and (3) $n_2>0,n_{1,v}>0$. In particular we decompose $\mathcal{L}^-_{Q_v}(g') - \mathcal{L}^-_{Q_v}(g)$ as follows:
\begin{align}
    &\mathcal{L}^-_{Q_v}(g') - \mathcal{L}^-_{Q_v}(g) \nonumber \\
    &=  \mathbb{E}_{x,x^+, \{x^-_i\}_{\ell}}\bigg[  \chi \{x \in Q_v\setminus Z\}\chi\{n_2=0\} \nonumber \\
    &\quad \quad \bigg(\log\bigg({ e^{\beta g'(x)^\top g'(x^+)}+ n_{1,v}e^{\beta d}+ \sum_{x_i^{-}\in  E} e^{\beta g'(x)^\top g'(x^-_{i})}  +\sum_{x_i^{-}\not\in {Q}_v\cup E} e^{\beta g(x)^\top g(x^-_{i})}  } \bigg) \nonumber \\
    &\quad \quad  - \log\bigg({ e^{\beta g(x)^\top g(x^+)}+n_{1,v}e^{\beta d}+ \sum_{x_i^{-} \in E}e^{\beta g(x)^\top g(x^-_{i})} 
 +\sum_{x_i^{-}\not\in {Q}_v\cup E} e^{\beta g(x)^\top g(x^-_{i})}  } \bigg) \bigg)\bigg] \nonumber \\
 &\quad +  \mathbb{E}_{x,x^+, \{x^-_i\}_{\ell}} \bigg[ \chi \{x \in Q_v\setminus Z\}\chi\{n_2>0,n_{1,v}=0\} \nonumber \\
 &\quad\quad \bigg(\log\bigg({ e^{\beta g'(x)^\top g'(x^+)}+ n_{1,v}e^{\beta d}+ \sum_{x_i^{-}\in  E} e^{\beta g'(x)^\top g'(x^-_{i})}  +\sum_{x_i^{-}\not\in {Q}_v\cup E} e^{\beta g(x)^\top g(x^-_{i})}  } \bigg) \nonumber \\
    % &\quad \quad  - \mathbb{E}_{x,x^+, \{x^-_i\}_{\ell}}  \chi \{x \in Q_v\} \chi\{n_2>0,n_{1,v}=0\} \nonumber \\
    &\quad \quad - \log\bigg({ e^{\beta g(x)^\top g(x^+)}+n_{1,v}e^{\beta d}+ \sum_{x_i^{-} \in E}e^{\beta g(x)^\top g(x^-_{i})} 
 +\sum_{x_i^{-}\not\in {Q}_v\cup E} e^{\beta g(x)^\top g(x^-_{i})}  } \bigg)\bigg)\bigg] \nonumber \\
 &\quad  +  \mathbb{E}_{x,x^+, \{x^-_i\}_{\ell}}\bigg[  \chi \{x \in Q_v\setminus Z\}\chi\{n_2>0,n_{1,v}>0\} \nonumber \\
 &\quad\quad \quad \bigg( \log\bigg({ e^{\beta g'(x)^\top g'(x^+)}+ n_{1,v}e^{\beta d}+ \sum_{x_i^{-}\in  E} e^{\beta g'(x)^\top g'(x^-_{i})}  +\sum_{x_i^{-}\not\in {Q}_v\cup E} e^{\beta g(x)^\top g(x^-_{i})}  } \bigg) \nonumber \\
    % &\quad \quad - \mathbb{E}_{x,x^+, \{x^-_i\}_{\ell}}  \chi \{x \in Q_v\} \chi\{n_2>0,n_{1,v}>0\} \nonumber \\
    &\quad \quad \quad - \log\bigg({ e^{\beta g(x)^\top g(x^+)}+n_{1,v}e^{\beta d}+ \sum_{x_i^{-} \in E}e^{\beta g(x)^\top g(x^-_{i})} 
 +\sum_{x_i^{-}\not\in {Q}_v\cup E} e^{\beta g(x)^\top g(x^-_{i})}  } \bigg)\bigg)\bigg] \nonumber 
 \end{align}
Likewise, for each $v\in \mathcal{H}_d$, we decompose $\mathcal{L}^-_{E_v}(g') - \mathcal{L}^-_{E_v}(g)$ as:
 \begin{align}
     &\mathcal{L}^-_{E_v}(g') - \mathcal{L}^-_{E_v}(g) \nonumber \\
 &=  \mathbb{E}_{x,x^+, \{x^-_i\}_{\ell}} \bigg[ \chi \{x \in E_v\}\chi\{n_4=0\} \nonumber\\
 &\quad \quad  \bigg(\log\bigg({ e^{\beta g'(x)^\top g'(x^+)}+ n_{3,v}e^{\beta d}+ \sum_{x_i^{-}\in Q} e^{\beta g'(x)^\top g'(x^-_{i})}  +\sum_{x_i^{-}\not\in {E}_v\cup Q} e^{\beta g(x)^\top g(x^-_{i})}  } \bigg) \nonumber \\
    &\quad \quad  -  \log\bigg({ e^{\beta g(x)^\top g(x^+)}+n_{3,v}e^{\beta d}+ \sum_{x_i^{-} \in Q}e^{\beta g(x)^\top g(x^-_{i})} 
 +\sum_{x_i^{-}\not\in {E}_v\cup Q} e^{\beta g(x)^\top g(x^-_{i})}  } \bigg)\bigg)\bigg] \nonumber \\
 &\quad +  \mathbb{E}_{x,x^+, \{x^-_i\}_{\ell}} \bigg[ \chi \{x \in E_v\}\chi\{n_4>0,n_{3,v}=0\} \nonumber \\
 &\quad\quad \bigg(\log\bigg({ e^{\beta g'(x)^\top g'(x^+)}+ n_{3,v}e^{\beta d}+ \sum_{x_i^{-}\in Q} e^{\beta g'(x)^\top g'(x^-_{i})}  +\sum_{x_i^{-}\not\in {E}_v\cup Q} e^{\beta g(x)^\top g(x^-_{i})}  } \bigg) \nonumber \\
    % &\quad \quad  - \mathbb{E}_{x,x^+, \{x^-_i\}_{\ell}}  \chi \{x \in E_v\} \chi\{n_4>0,n_{3,v}=0\} \nonumber \\
    &\quad \quad \quad - \log\bigg({ e^{\beta g(x)^\top g(x^+)}+n_{3,v}e^{\beta d}+ \sum_{x_i^{-} \in Q}e^{\beta g(x)^\top g(x^-_{i})} 
 +\sum_{x_i^{-}\not\in {E}_v\cup Q} e^{\beta g(x)^\top g(x^-_{i})}  } \bigg)\bigg)\bigg] \nonumber \\
 &\quad +  \mathbb{E}_{x,x^+, \{x^-_i\}_{\ell}} \bigg[ \chi \{x \in E_v\}\chi\{n_4>0,n_{3,v}>0\} \nonumber \\
 &\quad\quad \bigg( \log\bigg({ e^{\beta g'(x)^\top g'(x^+)}+ n_{3,v}e^{\beta d}+ \sum_{x_i^{-}\in Q} e^{\beta g'(x)^\top g'(x^-_{i})}  +\sum_{x_i^{-}\not\in {E}_v\cup Q} e^{\beta g(x)^\top g(x^-_{i})}  } \bigg) \nonumber \\
    % &\quad \quad - \mathbb{E}_{x,x^+, \{x^-_i\}_{\ell}}  \chi \{x \in E_v\} \chi\{n_2>0,n_{1,v}>0\} \nonumber \\
    &\quad \quad \quad - \log\bigg({ e^{\beta g(x)^\top g(x^+)}+n_{3,v}e^{\beta d}+ \sum_{x_i^{-} \in Q}e^{\beta g(x)^\top g(x^-_{i})} 
 +\sum_{x_i^{-}\not\in {E}_v\cup Q} e^{\beta g(x)^\top g(x^-_{i})}  } \bigg)\bigg)\bigg] \nonumber
\end{align}
Thus, for each $v$ in \eqref{gn}, we need to upper bound a total of six terms. 
We consider each of these six terms individually, starting with the three terms with  $\chi\{x\in Q_v\setminus Z\}$ factors.

\begin{enumerate}
\item $x\in Q_v\setminus Z, n_2 = 0$. 

% increases by at most $2\beta $ due to positive pair, and that increase happens with total probability $\Delta_f^{new}$ after summing over all $v$. So we need to lower bound the other terms in the log. The bad case is if $n_{1,v}$ is zero, but this happens with small probability. Use:
In this case, we have
\begin{align}
    % \tilde{L}_{\text{neg}}(g') -\tilde{L}_{\text{neg}}(g) 
    ((1))&:=\mathbb{E}_{x,x^+, \{x^-_i\}_{\ell}} \bigg[ \chi \{x \in Q_v\setminus Z\}\chi \{n_2 = 0\} \nonumber \\
    &\quad \bigg(\log\bigg({ e^{\beta g'(x)^\top g'(x^+)}+ n_{1,v}e^{\beta d}  +\sum_{x_i^{-}\not\in {Q}_v\cup E} e^{\beta g(x)^\top g(x^-_{i})}  } \bigg) \nonumber \\
    &\quad -  \log\bigg({ e^{\beta g(x)^\top g(x^+)}+n_{1,v}e^{\beta d}
 +\sum_{x_i^{-}\not\in {Q}_v\cup E} e^{\beta g(x)^\top g(x^-_{i})}  } \bigg)\bigg) \bigg] \label{dif1} \\
 &\leq \mathbb{E}_{x,x^+, \{x^-_i\}_{\ell}} \bigg[ \chi \{x \in Q_v\setminus Z\} \chi \{n_2 = 0\} \nonumber \\
 &\quad \bigg(\log\bigg({ e^{\beta g'(x)^\top g'(x^+)}+ n_{1,v}e^{\beta d}   } \bigg) - \log\bigg({ e^{\beta g(x)^\top g(x^+)}+n_{1,v}e^{\beta d}  } \bigg)\bigg) \bigg] \label{subm} \\
  &= \mathbb{E}_{x,x^+, \{x^-_i\}_{\ell}} \bigg[ \chi \{x \in Q_v\setminus Z,x^+ \in E, n_2 = 0\}  \log\bigg(\frac{ e^{2\beta }e^{\beta g(x)^\top g(x^+)}+ n_{1,v}e^{\beta d}   }{ e^{\beta g(x)^\top g(x^+)}+n_{1,v}e^{\beta d}  } \bigg) \bigg] \label{vv} \\
 &\leq \mathbb{E}_{x,x^+, \{x^-_i\}_{\ell}} \bigg[ \chi \{x \in Q_v\setminus Z,x^+ \in E,n_2 = 0\}\log\bigg(\frac{ e^{\beta d}+ n_{1,v}e^{\beta d}   }{ e^{\beta (d-2)}+n_{1,v}e^{\beta d}  } \bigg) \bigg] \label{hh} \\
 &= \mathbb{E}_{x,x^+, \{x^-_i\}_{\ell}} \bigg[ \chi \{x \in Q_v\setminus Z,x^+ \in E,n_2 = 0\} \log\bigg(\frac{1+ n_{1,v}   }{e^{-2\beta }+n_{1,v}} \bigg) \bigg]\nonumber \\
 &=  \mathbb{E}_{x,x^+, \{x^-_i\}_{\ell}} \bigg[ \chi \{x \in Q_v\setminus Z,x^+ \in E,n_2 = 0, n_{1,v}=0 \} \log\bigg(\frac{1+ n_{1,v}   }{e^{-2\beta }+n_{1,v}} \bigg) \bigg] \nonumber \\
 &\quad + \mathbb{E}_{x,x^+, \{x^-_i\}_{\ell}} \bigg[ \chi \{x \in Q_v\setminus Z,x^+ \in E,n_2 = 0,n_{1,v}>0 \} \log\bigg(\frac{1+ n_{1,v}   }{e^{-2\beta }+n_{1,v}} \bigg) \bigg] \nonumber \\
 &\leq 2 \beta \mathbb{E}_{x,x^+, \{x^-_i\}_{\ell}} \bigg[ \chi \{x \in Q_v\setminus Z,x^+ \in E,n_2 = 0,n_{1,v}=0 \} \bigg]\nonumber \\
 &\quad + \log(2 ) \mathbb{E}_{x,x^+, \{x^-_i\}_{\ell}} \bigg[ \chi \{x \in Q_v\setminus Z,x^+ \in E,n_2 = 0,n_{1,v}>0 \} \bigg] \label{k}
\end{align}
where \eqref{subm} follows by the submodularity of the $\log(\cdot)$ function and the fact that $g'(x)^\top g'(x^+)\geq g(x)^\top g(x^+)$ by construction of $g'$, \eqref{vv} follows by the fact that if $x\in Q_v$, then $g'(x)^\top g'(x^+)= g(x)^\top g(x^+)$ for all $x^+\notin B$,  and \eqref{hh} follows since $h(x):= \frac{ax+ c}{x+c}$ is monotonically increasing for $a> 1$.
Next, by the independence of $x_i^-$ from $x$ and $x^+$, 
\begin{align}
    \mathbb{E}_{x,x^+, \{x^-_i\}_{\ell}}   &\big[\chi \{x \in Q_v\setminus Z,x^+ \in E,n_2 = 0, n_{1,v}=0 \}\big]  \nonumber\\
    &=  \mathbb{P}(x \in Q_v\setminus Z \cap x^+ \in B) \mathbb{P}(n_{1,v}=0,n_2=0) \nonumber \\
    &= \mathbb{P}(x \in Q_v\setminus Z \cap x^+ \in B) (1 - \|Q_v\setminus Z\|_\circ - \|E\|_\circ)^\ell \label{ttwo}
    \end{align}
Similarly, for the second term in \eqref{k}, we have
    \begin{align}
    \mathbb{E}_{x,x^+, \{x^-_i\}_{\ell}} &\big[ \chi \{x \in Q_v\setminus Z,x^+ \in E,n_2 = 0,n_{1,v}>0 \} \big] \nonumber \\
    &= \mathbb{P}(x \in Q_v\setminus Z \cap x^+ \in E) \mathbb{P}(n_{1,v}>0|n_2=0)\mathbb{P}(n_2=0) \\
     &\leq \mathbb{P}(x \in Q_v\setminus Z \cap x^+ \in E) \min\left(1, \frac{\ell \|Q_v\setminus Z\|_\circ}{1-\|E\|_\circ}\right) (1- \|E\|_\circ)^\ell \label{oone}
\end{align}
By combining \eqref{oone}, \eqref{ttwo}, and \eqref{k}, we obtain the following upper bound on \eqref{dif1}:
\begin{align}
     ((1))&\leq   \mathbb{P}(x \in Q_v\setminus Z \cap x^+ \in E) \nonumber \\
     &\quad \quad \quad \quad \left( 2 \beta (1 - \|Q_v\setminus Z\|_\circ - \|E\|_\circ)^\ell + \log(2) \min(1, \frac{\ell \|Q_v\setminus Z\|_\circ}{1-\|E\|_\circ}) (1- \|E\|_\circ)^\ell\right).
\end{align}
% which is fine.

\item $x \in Q_v\setminus Z, n_{1,v} = 0, n_2 > 0$ 

% then $\tilde{L}_{\text{neg}}$ increases by at most $2\beta$ (since each term in the argument of the $\log$ can increase by no more than a multiplicative factor of $2\beta$). This happens with probability $(1-\mathcal{D}_g(v))^l l\Vert E\Vert_\circ$, that is, there is a probability $(1-\mathcal{D}_g(v))^l$ that no negative sample is sampled from the same cluster, and a probability $l\delta$ that $n_2 > 0$.  

In this case we have:
    \begin{align}
    % \tilde{L}_{\text{neg}}(g') -\tilde{L}_{\text{neg}}(g) 
  ((2))  &:= \mathbb{E}_{x,x^+, \{x^-_i\}_{\ell}} \bigg[  \chi \{x \in Q_v\setminus Z,n_{1,v} = 0,n_2 > 0\} \nonumber \\
    &\quad \bigg(\log\bigg({ e^{\beta g'(x)^\top g'(x^+)}+ n_{1,v}e^{\beta d} + \sum_{x_i^{-} \in E}e^{\beta g'(x)^\top g'(x^-_{i})}  +\sum_{x_i^{-}\notin {Q}_v\cup E} e^{\beta g(x)^\top g(x^-_{i})}  } \bigg) \nonumber \\
    % &\quad - \mathbb{E}_{x,x^+, \{x^-_i\}_{\ell}}  \chi \{x \in Q_v,n_{1,v} = 0,n_2 > 0\}   \nonumber \\
    &\quad \quad - \log\bigg({ e^{\beta g(x)^\top g(x^+)}+n_{1,v}e^{\beta d}
+ \sum_{x_i^{-} \in E}e^{\beta g(x)^\top g(x^-_{i})}  +\sum_{x_i^{-}\not\in {Q}_v\cup E} e^{\beta g(x)^\top g(x^-_{i})}  } \bigg) \bigg) \bigg] \label{dif2} \\
 &=  \mathbb{E}_{x,x^+, \{x^-_i\}_{\ell}}\bigg[  \chi \{x \in Q_v\setminus Z,n_{1,v} = 0,n_2 > 0\} \nonumber\\
 &\quad \bigg(\log\bigg({ e^{\beta g'(x)^\top g'(x^+)} + \sum_{x_i^{-} \in E}e^{\beta g'(x)^\top g'(x^-_{i})} +\sum_{x_i^{-}\notin {Q}_v\cup E} e^{\beta g(x)^\top g(x^-_{i})}  } \bigg) \nonumber \\
    &\quad \quad -  \log\bigg({ e^{\beta g(x)^\top g(x^+)}
 + \sum_{x_i^{-} \in E}e^{\beta g(x)^\top g(x^-_{i})} +\sum_{x_i^{-}\notin {Q}_v\cup E} e^{\beta g(x)^\top g(x^-_{i})}  } \bigg)\bigg)\bigg]  \\
  &\leq  \mathbb{E}_{x,x^+, \{x^-_i\}_{\ell}} \bigg[ \chi \{x \in Q_v\setminus Z,n_{1,v} = 0,n_2 > 0\}\nonumber \\
  &\quad \quad \bigg(\log\bigg({ e^{\beta g'(x)^\top g'(x^+)} + \sum_{x_i^{-} \in E}e^{\beta g'(x)^\top g'(x^-_{i})} } \bigg) \nonumber \\
  &\quad \quad \quad \quad \quad -  \log\bigg({ e^{\beta g(x)^\top g(x^+)}
 + \sum_{x_i^{-} \in E}e^{\beta g(x)^\top g(x^-_{i})}  } \bigg)\bigg)\bigg] \label{nonn}  
 \end{align}
 where \eqref{nonn} follows by the submodularity of the $\log()$ function and the facts that $g'(x)^\top g'(x^+)\geq g(x)^\top g(x^+)$ and $g'(x)^\top g'(x^-_i)\geq g(x)^\top g(x^-_i)$ for all $x \in Q_v \setminus Z, x_{i}^-\in {E}$ by definition of $g'$, $C$ and $E$. Next, 
 \begin{align}
((2)) &\leq 2 \beta  \mathbb{P}(x \in Q_v\setminus Z) \mathbb{E}_{ \{x^-_i\}_{\ell}}  [\chi \{n_{1,v} = 0\}\chi \{n_2 > 0\}] \label{non} \\
 &= 2 \beta  \mathbb{P}(x \in Q_v\setminus Z) \mathbb{E}_{ \{x^-_i\}_{\ell}} [ \chi \{\cap_i \{x_{i}^- \notin Q_v\setminus Z\}\}\chi \{\cup_i \{x_{i}^- \in E\}\} ] \nonumber \\
 &=  2 \beta \mathbb{P}(x \in Q_v\setminus Z) \mathbb{P} ( \cup_i \{x_{i}^- \in E\} | \cap_i \{x_{i}^- \notin Q_v\setminus Z\})\mathbb{P}(\cap_i \{x_{i}^- \notin Q_v\setminus Z\}) \nonumber \\
 &\leq 2 \beta  \mathbb{P}(x \in Q_v\setminus Z) \mathbb{P}(\cap_i \{x_{i}^- \notin Q_v\setminus Z\}) \min\left( 1, \sum_{i=1}^\ell \mathbb{P} ( x_{i}^- \in E |  x_{i}^- \notin Q_v\setminus Z) \right) \label{ww} \\
 &= 2 \beta  \|Q_v\setminus Z\|_\circ (1 - \|Q_v\setminus Z\|_\circ)^\ell \min\left(1, \frac{\ell \|E\|_\circ}{1- \|Q_v\setminus Z\|_\circ}\right) \nonumber 
 % &= 2 \beta \ell  \log(\ell+1)  \|E\| \|Q_v\|_\circ (1 - \|Q_v\|)^{\ell-1} .
\end{align}
% where \eqref{}
 where \eqref{non} follows since $g'(x)^\top g'(x^+) - g(x)^\top g(x^+) \leq 2$ and $g'(x)^\top g'(x^-_i) - g(x)^\top g(x^-_i)\leq 2$ for all $x, x_{i}^-$, and \eqref{ww} follows by a union bound.

\item $x\in Q_v\setminus Z, n_1>0, n_2 > 0$.

In this case, we have
\begin{align}
((3))&:= \mathbb{E} \bigg[\chi \{x \in Q_v\setminus Z,n_{1,v} > 0,n_2 > 0\} \nonumber \\
&\quad \bigg(\log\bigg({ e^{\beta g'(x)^\top g'(x^+)}+ n_{1,v}e^{\beta d} + \sum_{x_i^{-} \in E}e^{\beta g'(x)^\top g'(x^-_{i})}  +\sum_{x_i^{-}\not\in {Q}_v\cup E} e^{\beta g(x)^\top g(x^-_{i})}  } \bigg) \nonumber \\
    % &\quad - \mathbb{E} \bigg[\chi \{x \in Q_v,n_{1,v} > 0,n_2 > 0\} \nonumber \\
    &\quad  -\log\bigg({ e^{\beta g(x)^\top g(x^+)}+n_{1,v}e^{\beta d}
+ \sum_{x_i^{-} \in E}e^{\beta g(x)^\top g(x^-_{i})}  +\sum_{x_i^{-}\not\in {Q}_v\cup E} e^{\beta g(x)^\top g(x^-_{i})}  } \bigg) \bigg) \bigg] \label{dif3} \\
&\leq \mathbb{E} \bigg[ \chi \{x \in Q_v\setminus Z,n_{1,v} > 0,n_2 > 0\} \nonumber \\
&\quad \quad \bigg( \log\bigg({ e^{\beta g'(x)^\top g'(x^+)}+ n_{1,v}e^{\beta d} + e^{2\beta} \sum_{x_i^{-} \in E}e^{\beta g(x)^\top g(x^-_{i})}   } \bigg) \nonumber \\
    % &\quad - \mathbb{E} \left[\chi \{x \in Q_v,n_{1,v} > 0,n_2 > 0\}  
    &\quad \quad \quad  - \log\bigg({ e^{\beta g(x)^\top g(x^+)}+n_{1,v}e^{\beta d}
+ \sum_{x_i^{-} \in E}e^{\beta g(x)^\top g(x^-_{i})}    } \bigg) \bigg)\bigg] \label{leb} 
\end{align}
where \eqref{leb} follows by the submodularity of $\log()$ and the facts that $g'(x)^\top g'(x^+)\geq g(x)^\top g(x^+)$ and $g'(x)^\top g'(x^-_i)= g(x)^\top g(x^-_i) + 2$ for all $x \in Q_v \setminus Z, x_{i}^-\in {E}$ by definition of $g'$, $Z$ and $E$.
  Continuing, we obtain
\begin{align}
((3))&\leq \mathbb{E} \bigg[ \chi \{x \in Q_v\setminus Z,n_{1,v} > 0,n_2 > 0\} \nonumber \\
&\quad \quad \bigg(\log\bigg({ e^{\beta g'(x)^\top g'(x^+)}+ (n_{1,v}+n_2)e^{\beta d}  } \bigg)\nonumber \\
&\quad \quad \quad - \log\bigg({ e^{\beta g(x)^\top g(x^+)}+n_{1,v}e^{\beta d} + n_2e^{\beta (d-2)}
  } \bigg)\bigg) \bigg] \label{lebb}  \\
  &= \mathbb{E} \bigg[ \chi \{x \in Q_v\setminus Z,n_{1,v} > 0,n_2 > 0,  x^+ \notin E \} \nonumber \\
  &\quad \quad\quad  \bigg(\log\bigg({ e^{\beta g'(x)^\top g'(x^+)}+ (n_{1,v}+n_2)e^{\beta d}  } \bigg) \nonumber \\
  &\quad \quad \quad - \log\bigg({ e^{\beta g(x)^\top g(x^+)}+n_{1,v}e^{\beta d} + n_2e^{\beta (d-2)}
  } \bigg)\bigg)\bigg] \nonumber \\
  &\quad +  \mathbb{E} \bigg[ \chi \{x \in Q_v\setminus Z,n_{1,v} > 0,n_2 > 0,  x^+ \in E\} \nonumber \\
  &\quad \quad\quad  \bigg(\log\bigg({ e^{\beta g'(x)^\top g'(x^+)}+ (n_{1,v}+n_2)e^{\beta d}  } \bigg) \nonumber \\
   &\quad \quad\quad \quad - \log\bigg({ e^{\beta g(x)^\top g(x^+)}+n_{1,v}e^{\beta d} + n_2e^{\beta (d-2)}
  } \bigg)\bigg)\bigg] 
  \end{align}
  % Denote by $(*)$ the expression in \eqref{dif3}.
  where
  \eqref{lebb} follows since $h(x):= \frac{a + cx}{b+x}$ is an increasing function of $x$ for $x>0, c > \frac{a}{b}$ (here, $\frac{a}{b} \leq \frac{2}{1 + e^{-2\beta}}\leq 2$ and $c = e^{2\beta}> 2$). Note that $x\in Q_v, x^+\in E \implies g'(x)^\top g'(x^+) = g(x)^\top g(x^+) + 2$, and $x\in Q_v, x^+\notin E \implies g'(x)^\top g'(x^+) = g(x)^\top g(x^+) $. Using this we find
  \begin{align}
  ((3)) &\leq \mathbb{E}  \left[\chi \{x \in Q_v\setminus Z,n_{1,v} > 0,n_2 > 0, x^+ \notin E \}  \log\bigg({  \frac{n_{1,v}+n_2}{n_{1,v}} } \bigg) \right]\nonumber \\
    % &\quad - \mathbb{E} \left[\chi \{x \in Q_v,n_{1,v} > 0,n_2 > 0,  g'(x)^\top g'(x^+) =  g(x)^\top g(x^+)\}   \log\bigg({ n_{1,v}e^{\beta d} + n_2e^{\beta (d-2)}
  % } \bigg) \right] \\ 
  &\quad +  \mathbb{E}  \left[\chi \{x \in Q_v\setminus Z,n_{1,v} > 0,n_2 > 0,   x^+ \in B  \}  \log\bigg(\frac{  n_{1,v}+n_2+1  }{n_{1,v} } \bigg) \right]\nonumber \\
  &\leq  \mathbb{E}  \left[\chi \{x \in Q_v\setminus Z,n_{1,v} > 0,n_2 > 0,  x^+ \notin B\}  \frac{2n_2}{n_{1,v}+1} \right]\nonumber \\
    % &\quad - \mathbb{E} \left[\chi \{x \in Q_v,n_{1,v} > 0,n_2 > 0,  g'(x)^\top g'(x^+) =  g(x)^\top g(x^+)\}   \log\bigg({ n_{1,v}e^{\beta d} + n_2e^{\beta (d-2)}
  % } \bigg) \right] \\
  &\quad +  \mathbb{E}  \left[\chi \{x \in Q_v\setminus Z,n_{1,v} > 0,n_2 > 0,   x^+ \in E\}  \frac{  2n_2+2  }{n_{1,v}+1 } \right]\label{vv} \\
  &= \mathbb{E}  \left[\chi \{x \in Q_v\setminus Z,n_{1,v} > 0,n_2 > 0\}  \frac{2n_2}{n_{1,v}+1}  \right]\nonumber \\
    % &\quad - \mathbb{E} \left[\chi \{x \in Q_v,n_{1,v} > 0,n_2 > 0,  g'(x)^\top g'(x^+) =  g(x)^\top g(x^+)\}   \log\bigg({ n_{1,v}e^{\beta d} + n_2e^{\beta (d-2)}
  % } \bigg) \right] \\
  &\quad +  \mathbb{E}  \left[\chi \{x \in Q_v\setminus Z,n_{1,v} > 0,n_2 > 0,   x^+ \in B  \}  \frac{  2  }{n_{1,v}+1 } \right]\nonumber \\
  &= \mathbb{E}  \left[\chi \{x \in Q_v\setminus Z,n_{1,v} > 0,n_2 > 0\}  \frac{2n_2}{n_{1,v}+1}  \right]\nonumber \\
  % &\quad + \mathbb{E}  \left[\chi \{x \in Q_v,n_{1,v} > 0,n_2 > 0, Q_v\cap B\neq \emptyset \}  \frac{2n_2}{n_{1,v}+1}  \right]\nonumber \\
  &\quad +  \mathbb{E}  \left[\chi \{x \in Q_v\setminus Z,n_{1,v} > 0,n_2 > 0, x^+ \in E\}  \frac{  2  }{n_{1,v}+1 } \right]\label{4terms}
  % &\quad +  \mathbb{E}  \left[\chi \{x \in Q_v,n_{1,v} > 0,n_2 > 0,  g'(x)^\top g'(x^+) \neq  g(x)^\top g(x^+), Q_v\cap B\neq \emptyset \}  \frac{  2  }{n_{1,v}+1 } \right]\label{4terms}
\end{align}
where \eqref{vv} follows using the inequality $\log(1+x)\leq x$. 
Thus we are left with two terms in \eqref{4terms}.
For the first term  we have (ignoring notation overload, as after the first line, $n_{1,v}$ and $n_2$ change from random variables to dummy variables):
% For the first term ,
\begin{align}
    &\mathbb{E}  \left[\chi \{x \in Q_v\setminus Z,n_{1,v} > 0,n_2 > 0,Q_v\cap E= \emptyset \}  \frac{2n_2}{n_{1,v}+1} \right] \nonumber \\
    &= \mathbb{P}(x \in Q_v\setminus Z) \sum_{n_{1,v},n_2: n_{1,v}+n_2\leq \ell }  \chi \{n_{1,v} > 0,n_2 > 0\} \binom{\ell}{n_{1,v}~n_2~\ell\!-\!n_{1,v}\!-\!n_2} \nonumber \\
    &\quad \quad \quad \quad  \Vert Q_v\setminus Z\Vert_\circ^{n_{1,v}}\Vert E\Vert_\circ^{n_2}(1-\|Q_v\setminus Z\|_\circ-\|E\Vert_\circ)^{\ell-n_{1,v}-n_2}\frac{2n_2}{n_{1,v}+1} \\
    &= 2\Vert Q_v\setminus Z\Vert_\circ\sum_{n_{1,v},n_2: n_{1,v},n_2>0, n_{1,v}+n_2\leq \ell}  \binom{\ell}{n_{1,v}~n_2~\ell\!-\!n_{1,v}\!-\!n_2} \nonumber \\
    &\quad \quad \quad \quad \Vert Q_v\setminus Z\Vert_\circ^{n_{1,v}}\Vert E\Vert_\circ^{n_2}( 1-\|Q_v \setminus Z\|_\circ-\|E\Vert_\circ)^{\ell-n_{1,v}-n_2}\frac{n_2}{n_{1,v}+1} \\
    &= 2 \Vert Q_v\setminus Z\Vert_\circ \frac{\|E\|_\circ}{\| Q_v\setminus Z\|_\circ} \sum_{n_{1,v},n_2: n_{1,v},n_2>0, n_{1,v}+n_2\leq \ell}    \binom{\ell}{n_{1,v}+1~n_2-1~\ell\!-\!n_{1,v}\!-\!n_2} \nonumber \\
    &\quad \quad \quad \quad \Vert Q_v\setminus Z\Vert_\circ^{n_{1,v}+1}\Vert E\Vert_\circ^{n_2-1}( 1-\|Q_v\setminus Z\|_\circ-\|E\Vert_\circ)^{\ell- n_{1,v}-n_2} \\
    &\leq 2 \Vert Q_v\setminus Z\Vert_\circ \frac{\|E\|_\circ}{\|Q_v \setminus Z\|_\circ} \nonumber \\
    &\quad \sum_{n_{1,v}+1,n_2-1: n_{1,v}+1,n_2-1\geq0, n_{1,v}+1+n_2-1\leq \ell}   \binom{\ell}{n_{1,v}\!+\!1~n_2\!-\!1~\ell\!-(n_{1,v}\!+\!1) \!-\! (n_2\!-\!1)}\nonumber \\
    &\quad \quad \quad \quad \quad \quad   \quad \quad \quad \Vert Q_v\setminus Z\Vert_\circ^{n_{1,v}+1}\Vert E\Vert_\circ^{n_2-1}( 1\!-\!\|Q_v\setminus Z\|_\circ\!-\!\|E\Vert_\circ)^{\ell - (n_{1,v}+1) - (n_2-1)}  \label{sq}\\
    &= 2{\|E\|_\circ} ( \Vert Q_v\setminus Z\Vert_\circ + \Vert E\Vert_\circ + 1-\|Q_v\setminus Z\|_\circ-\|E\Vert_\circ)^\ell  \nonumber \\
    &= {2 {\|E\|_\circ}}   \nonumber 
\end{align}
where in \eqref{sq} we have added terms to the sum to complete the trinomial expansion, and the last equality follows since $Q_v$ and $E$ are disjoint.

Now we need to consider the last term in \eqref{4terms}, which corresponds to the case wherein the positive inner products are not equal for $g$ and $g'$. For this term, we simply have
{\begin{align}
&\mathbb{E}  \left[\chi \{x \in Q_v\setminus Z,n_{1,v} > 0,n_2 > 0,  x^+\in E\}  \frac{  2  }{n_{1,v}+1 } \right]\nonumber \\
  &\leq \mathbb{P}  \left[x \in Q_v\setminus Z, x^+\in E\right]
\end{align}
}

In total, for the case $n_{1,v}>0,n_2>0$ and $x\in Q_v\setminus Z$, we have
\begin{align}
%     &\mathbb{E}  \bigg[\chi \{x \in Q_v\setminus Z,n_{1,v} > 0,n_2 > 0\} \nonumber \\
%     &\quad \quad \bigg( \log\bigg({ e^{\beta g'(x)^\top g'(x^+)}+ n_{1,v}e^{\beta d} + \sum_{x_i^{-} \in E}e^{\beta g'(x)^\top g'(x^-_{i})}  +\sum_{x_i^{-}\notin {Q}_v\cup E} e^{\beta g(x)^\top g(x^-_{i})}  } \bigg) \nonumber \\
%     &\quad \quad - \log\bigg({ e^{\beta g(x)^\top g(x^+)}+n_{1,v}e^{\beta d}
% + \sum_{x_i^{-} \in E}e^{\beta g(x)^\top g(x^-_{i})}  +\sum_{x_i^{-}\notin {Q}_v\cup E} e^{\beta g(x)^\top g(x^-_{i})}  } \bigg)\bigg) \bigg] \nonumber \\
((3))&\leq  2 \|E\| +   \mathbb{P}  \left[x \in Q_v\setminus Z, x^+\in E\right]
% 2 \|E\|_\circ \chi\{ Q_v\cap B = \emptyset \} + 2 \chi\{\|Q_v\!\setminus\! B\|_\circ \geq \|Q_v\|_\circ/2\}({\Vert Q_v\cap E\Vert_\circ+ 2\Vert B\setminus Q_v\Vert_\circ }) \nonumber \\
%     &\quad +  4 \chi\{\|Q_v\!\setminus\! B\|_\circ \geq \|Q_v\|_\circ/2\}\frac{\Vert Q_v\cap E\Vert_\circ^2 + 2 \Vert Q_v\cap E\Vert_\circ\Vert B\setminus Q_v\Vert_\circ}{\Vert Q_v\Vert_\circ} \nonumber \\
%     &\quad + 8 \ell \Vert  Q_v\cap E\Vert_\circ {\Vert B\setminus Q_v\Vert_\circ}  (1 - \Vert  Q_v\Vert_\circ/2)^{\ell} \chi \{ \|Q_v\!\cap\! B\|_\circ \geq \|Q_v\|_\circ/2\} \nonumber \\
%     &\quad +4 \|Q_v\cap B\|_\circ \left(2\frac{\Vert B\setminus Q_v\Vert_\circ}{\Vert Q_v\Vert_\circ}+1\right)\chi \{\|Q_v\!\cap\! B\|_\circ \geq \|Q_v\|_\circ/2\} \nonumber \\
%     &\quad + \chi\{Q_v\cap E= \emptyset \} \mathbb{P}  \left[x \in Q_v, x^+\in E\right] + \chi\{Q_v\cap B\neq \emptyset \} \mathbb{P}  \left[x \in Q_v \cap B, x^+ \notin E \right] 
\end{align}
% All of these terms are $O(\|E\|_\circ)$ and do not depend on $\beta$ so we can make $\beta$ large enough to dominate them.

\item $x \in E_v, n_4 = 0$.

This case is symmetric to  Case 1, so we argue similarly.
\begin{align}
   ((4))&:=   \mathbb{E}_{x,x^+, \{x^-_i\}_{\ell}} \bigg[ \chi \{x \in E_v\}\chi\{n_4=0\} \nonumber \\
   &\quad \bigg(\log\bigg({ e^{\beta g'(x)^\top g'(x^+)}+ n_{3,v}e^{\beta d}+ \sum_{x_i^{-}\in Q} e^{\beta g'(x)^\top g'(x^-_{i})}  +\sum_{x_i^{-}\not\in {E}_v\cup Q} e^{\beta g(x)^\top g(x^-_{i})}  } \bigg) \nonumber \\
    % &\quad  - \mathbb{E}_{x,x^+, \{x^-_i\}_{\ell}}  \chi \{x \in E_v\} \chi\{n_4=0\} \nonumber \\
    &\quad \quad - \log\bigg({ e^{\beta g(x)^\top g(x^+)}+n_{3,v}e^{\beta d}+ \sum_{x_i^{-} \in Q}e^{\beta g(x)^\top g(x^-_{i})} 
 +\sum_{x_i^{-}\not\in {E}_v\cup Q} e^{\beta g(x)^\top g(x^-_{i})}  } \bigg)\bigg) \bigg] \nonumber \\
    &= \mathbb{E}_{x,x^+, \{x^-_i\}_{\ell}} \bigg[ \chi \{x \in E_v\}\chi\{n_4=0\} \nonumber \\
    &\quad\quad \quad\quad \quad\quad \quad \bigg(\log\bigg({ e^{\beta g'(x)^\top g'(x^+)}+ n_{3,v}e^{\beta d} +\sum_{x_i^{-}\not\in {E}_v\cup Q} e^{\beta g(x)^\top g(x^-_{i})}  } \bigg) \nonumber \\
    &\quad \quad\quad \quad \quad \quad\quad \quad  - \log\bigg({ e^{\beta g(x)^\top g(x^+)}+n_{3,v}e^{\beta d}+\sum_{x_i^{-}\not\in {E}_v\cup Q} e^{\beta g(x)^\top g(x^-_{i})}  } \bigg)\bigg)\bigg] \nonumber \\
    &\leq \mathbb{E}_{x,x^+, \{x^-_i\}_{\ell}} \bigg[ \chi \{x \in E_v\}\chi\{n_4=0\} \log\bigg(\frac{ e^{\beta g'(x)^\top g'(x^+)}+ n_{3,v}e^{\beta d}   }{e^{\beta g(x)^\top g(x^+)}+n_{3,v}e^{\beta d}}\bigg) \bigg] \label{bb}
\end{align}
where \eqref{bb} follows since $ {g'(x)^\top g'(x^+)} \geq { g(x)^\top g(x^+)}$ and $\log()$ is submodular. Next we intersect with the events $\{x^+\in Q\}$ and $\{x^+\notin Q\}$, obtaining
\begin{align}
    ((4)) &\leq  \mathbb{E}_{x,x^+, \{x^-_i\}_{\ell}} \bigg[ \chi \{x \in E_v, n_4=0, x^+ \in Q\} \log\bigg(\frac{ e^{\beta g'(x)^\top g'(x^+)}+ n_{3,v}e^{\beta d}   } { e^{\beta g(x)^\top g(x^+)}+n_{3,v}e^{\beta d} }\bigg)\bigg] \nonumber \\
    &\quad +  \mathbb{E}_{x,x^+, \{x^-_i\}_{\ell}} \bigg[ \chi \{x \in E_v, n_4=0, x^+ \notin Q\} \log\bigg(\frac{ e^{\beta g'(x)^\top g'(x^+)}+ n_{3,v}e^{\beta d}   }{e^{\beta g(x)^\top g(x^+)}+n_{3,v}e^{\beta d}} \bigg)\bigg] \nonumber \\
    &= \mathbb{E}_{x,x^+, \{x^-_i\}_{\ell}} \bigg[ \chi \{x \in E_v, n_4=0, x^+ \in Q\} \log\bigg(\frac{ e^{2 \beta} e^{\beta g(x)^\top g(x^+)}+ n_{3,v}e^{\beta d}   }{ e^{\beta g(x)^\top g(x^+)}+n_{3,v}e^{\beta d} } \bigg)\bigg] \nonumber \\
    &\leq \mathbb{E}_{x,x^+, \{x^-_i\}_{\ell}}  \bigg[\chi \{x \in E_v, n_4=0, x^+ \in Q\} \log\bigg(\frac{ e^{\beta d} + n_{3,v}e^{\beta d}   }{e^{\beta (d-2)}+n_{3,v}e^{\beta d}} \bigg)\bigg]  \nonumber \\
    &= \mathbb{E}_{x,x^+, \{x^-_i\}_{\ell}} \bigg[ \chi \{x \in E_v, n_4=0, x^+ \in Q\} \log\bigg(\frac{ 1 + n_{3,v}  }{e^{-2\beta} + n_{3,v}} \bigg)\bigg] \nonumber \\
    &\leq 2\beta \mathbb{E}_{x,x^+, \{x^-_i\}_{\ell}} \big[ \chi \{x \in E_v, n_4=0, x^+ \in Q, n_{3,v}=0\} \big] \nonumber \\
    &+ \log( 2) \mathbb{E}_{x,x^+, \{x^-_i\}_{\ell}} \big[ \chi \{x \in E_v,n_4=0, x^+\in Q,n_{3,v}>0\} \big]
\end{align}
where
\begin{align}
    &\mathbb{E}_{x,x^+, \{x^-_i\}_{\ell}} \big[ \chi \{x \in E_v, n_4=0, x^+ \in Q, n_{3,v}=0\} \big] \nonumber \\
    &=  \mathbb{P}(x \in E_v \cap x^+ \in Q) \mathbb{P}(n_{3,v}=0, n_4=0) \\
    &= \mathbb{P}(x \in E_v \cap x^+ \in Q) (1 - \|E_v\|_\circ - \|Q\|_\circ)^\ell 
    % \\
    % & \mathbb{E}_{x,x^+, \{x^-_i\}_{\ell}}  \chi \{x \in Q_v\}\chi \{x^+ \in E\} \chi \{n_2 = 0\}  \chi\{ n_{1,v}>0 \} 
    % \nonumber \\
    % &= \mathbb{P}(x \in Q_v \cap x^+ \in B) \mathbb{P}(n_{3,v}>0) \\
    %  &\leq \mathbb{P}(x \in Q_v \cap x^+ \in B) \mathcal{D}_g(v)
\end{align}
\begin{align}
    &\mathbb{E}_{x,x^+, \{x^-_i\}_{\ell}} \big[ \chi \{x \in E_v,n_4=0, x^+\in Q,n_{3,v}>0\} \big] \nonumber \\
    &= \mathbb{P}(x \in E_v \cap x^+ \in Q)  \mathbb{P}(n_{3,v} >0, n_4=0)  \nonumber \\
    &\leq \mathbb{P}(x \in E_v \cap x^+ \in Q) \min\left( 1, \frac{\ell \|E_v\|_\circ}{1 - \|Q\|_\circ}\right)  (1 - \|Q\|_\circ)^\ell
\end{align}
so in total for this case we have 
\begin{align}
    ((4)) &\leq \mathbb{P}(x \in E_v \cap x^+ \in Q) \nonumber \\
    &\quad \quad \quad\quad \quad \left(2\beta (1 - \|E_v\|_\circ - \|Q\|_\circ)^\ell  + \log(2)\min( 1, \frac{\ell \|E_v\|_\circ}{1 - \|Q\|_\circ})  (1 - \|Q\|_\circ)^\ell\right) \nonumber
\end{align}

\item $x \in E_v, n_4>0, n_{3,v} = 0$.

Define $n_{5}:= \sum_{i=1}^\ell \chi\{ x_{i}^- \in Q\setminus Z\}$. %and $n_{6}:= \sum_{i=1}^\ell \chi\{ x_{i}^- \in C\}$. 
We have 
\begin{align}
    ((5)) &=  \mathbb{E}_{x,x^+, \{x^-_i\}_{\ell}} \bigg[ \chi \{x \in E_v,n_4>0,n_{3,v}=0\}\nonumber \\
    &\quad \bigg(\log\bigg({ e^{\beta g'(x)^\top g'(x^+)}+ \sum_{x_i^{-}\in Q} e^{\beta g'(x)^\top g'(x^-_{i})}  +\sum_{x_i^{-}\not\in {E}_v\cup Q} e^{\beta g(x)^\top g(x^-_{i})}  } \bigg) \nonumber \\
    % &\quad  - \mathbb{E}_{x,x^+, \{x^-_i\}_{\ell}}  \chi \{x \in E_v,n_4>0,n_{3,v}=0\} \nonumber \\
    &\quad \quad - \log\bigg({ e^{\beta g(x)^\top g(x^+)}+\sum_{x_i^{-} \in Q}e^{\beta g(x)^\top g(x^-_{i})} 
 +\sum_{x_i^{-}\not\in {E}_v\cup Q} e^{\beta g(x)^\top g(x^-_{i})}  } \bigg)\bigg)\bigg] \nonumber
 \end{align}
 since we are intersecting with the event $\{n_{3,v}=0\}$. Next we  split the negative samples in $Q$ into those in $Q\setminus Z$ and those in $Z$, noting that $g'(x)^\top g'(x_i^-) = g(x)^\top g(x_i^-) - 2$ for $x_i^-\in Z$ and $g'(x)^\top g'(x_i^-) = g(x)^\top g(x_i^-) + 2$ for $x_i^-\in Q\setminus Z$. 
 \begin{align}
((5)) &= \mathbb{E}_{x,x^+, \{x^-_i\}_{\ell}} \bigg[ \chi \{x \in E_v,n_4>0,n_{3,v}=0\}\nonumber \\
 & \quad\bigg(\log\bigg( e^{\beta g'(x)^\top g'(x^+)}+  e^{2\beta }\sum_{x_i^{-} \in Q\setminus Z}e^{\beta g(x)^\top g(x^-_{i})} \nonumber \\
 &\quad \quad \quad \quad \quad +e^{-2\beta }\sum_{x_i^{-} \in  C}e^{\beta g(x)^\top g(x^-_{i})} +\sum_{x_i^{-}\not\in {E}_v\cup Q} e^{\beta g(x)^\top g(x^-_{i})}   \bigg) \nonumber \\
    % &\quad \quad  - \mathbb{E}_{x,x^+, \{x^-_i\}_{\ell}}  \chi \{x \in E_v,n_4>0,n_{3,v}=0\} \nonumber \\
    &\quad -\log\bigg( e^{\beta g(x)^\top g(x^+)}+\sum_{x_i^{-} \in Q\setminus Z}e^{\beta g(x)^\top g(x^-_{i})} +\sum_{x_i^{-} \in  C}e^{\beta g(x)^\top g(x^-_{i})} \nonumber \\
 &\quad \quad \quad \quad \quad
 +\sum_{x_i^{-}\not\in {E}_v\cup Q} e^{\beta g(x)^\top g(x^-_{i})}   \bigg)\bigg) \bigg] \nonumber \\
  &\leq \mathbb{E}_{x,x^+, \{x^-_i\}_{\ell}} \bigg[ \chi \{x \in E_v,n_4>0,n_{3,v}=0\}\nonumber \\
 &\quad \bigg(\log\bigg({ e^{\beta g'(x)^\top g'(x^+)}+  e^{2\beta }\sum_{x_i^{-} \in Q\setminus Z}e^{\beta g(x)^\top g(x^-_{i})} +\sum_{x_i^{-}\not\in {E}_v\cup Q} e^{\beta g(x)^\top g(x^-_{i})}  } \bigg) \nonumber \\ %---
    % &\quad \quad  - \mathbb{E}_{x,x^+, \{x^-_i\}_{\ell}}  \chi \{x \in E_v,n_4>0,n_{3,v}=0\} \nonumber \\
    &\quad -\log\bigg({ e^{\beta g(x)^\top g(x^+)}+\sum_{x_i^{-} \in Q\setminus Z}e^{\beta g(x)^\top g(x^-_{i})} 
 +\sum_{x_i^{-}\not\in {E}_v\cup Q} e^{\beta g(x)^\top g(x^-_{i})}  } \bigg)\bigg) \bigg] \nonumber \\
 &\leq \mathbb{E}_{x,x^+, \{x^-_i\}_{\ell}} \bigg[ \chi \{x \in E_v,n_4>0,n_{3,v}=0\}\nonumber \\
 &\quad \bigg(\log\bigg({ e^{\beta g'(x)^\top g'(x^+)}+  n_{5}e^{\beta d } +\sum_{x_i^{-}\not\in {E}_v\cup Q} e^{\beta g(x)^\top g(x^-_{i})}  } \bigg) \nonumber \\
    % &\quad \quad  - \mathbb{E}_{x,x^+, \{x^-_i\}_{\ell}}  \chi \{x \in E_v,n_4>0,n_{3,v}=0\} \nonumber \\
    &\quad -\log\bigg({ e^{\beta g(x)^\top g(x^+)}+n_5 e^{\beta(d-2)} 
 +\sum_{x_i^{-}\not\in {E}_v\cup Q} e^{\beta g(x)^\top g(x^-_{i})}  } \bigg)\bigg)\bigg] \label{lll} 
 \end{align}
 where \eqref{lll} follows since $h(x)\coloneqq \frac{a+e^{2\beta}x}{b+x}$ is an increasing function of $x$ for $a \leq  e^{2 \beta} b $. Next we  intersect with $\{x^+ \in Q\}$ and $\{x^+ \notin Q\}$ to obtain
 \begin{align}
 ((5)) &\leq   \mathbb{E}_{x,x^+, \{x^-_i\}_{\ell}} \bigg[ \chi \{x \in E_v,n_4>0,n_{3,v}=0, x^+ \in Q\}\nonumber \\
 &\quad \quad \bigg(\log\bigg({  (n_{5}+1)e^{\beta d } +\sum_{x_i^{-}\not\in {E}_v\cup Q} e^{\beta g(x)^\top g(x^-_{i})}  } \bigg) \nonumber \\
    % &\quad \quad  - \mathbb{E}_{x,x^+, \{x^-_i\}_{\ell}}  \chi \{x \in E_v,n_4>0,n_{3,v}=0, x^+ \in Q\} \nonumber \\
    &\quad\quad \quad - \log\bigg({  (n_5+1) e^{\beta(d-2)} 
 +\sum_{x_i^{-}\not\in {E}_v\cup Q} e^{\beta g(x)^\top g(x^-_{i})}  } \bigg)\bigg) \bigg] \nonumber \\
 &\quad + \mathbb{E}_{x,x^+, \{x^-_i\}_{\ell}} \bigg[ \chi \{x \in E_v,n_4>0,n_{3,v}=0, x^+ \notin Q\}\nonumber \\
 &\quad \quad \bigg( \log\bigg({  n_{5}e^{\beta d } +\sum_{x_i^{-}\not\in {E}_v\cup Q} e^{\beta g(x)^\top g(x^-_{i})}  } \bigg)  \nonumber \\
 &\quad \quad \quad - \log\bigg({  n_5 e^{\beta(d-2)} 
 +\sum_{x_i^{-}\not\in {E}_v\cup Q} e^{\beta g(x)^\top g(x^-_{i})}  } \bigg)\bigg)\bigg] \label{ff} 
 % \\
 % &\leq
\end{align}
We have two terms above. %For the first term, we have
% Let $\Lambda:= \{(n_{5}+1)e^{\beta d }+ n_{6}e^{-\beta d } \geq  (n_5+1) e^{\beta(d-2)} +n_6 e^{-\beta(d-2)}\}$. Note that $\Lambda$ holds unless $n_6 \gg n_{5,v}$.
% We split the first term in \eqref{ff} on $\Lambda$.
For the first term,
\begin{align}
   &  \mathbb{E}_{x,x^+, \{x^-_i\}_{\ell}} \bigg[ \chi \{x \in E_v,n_4>0,n_{3,v}=0, x^+ \in Q\}\nonumber \\
 &\quad \quad \bigg(\log\bigg({  (n_{5}+1)e^{\beta d } +\sum_{x_i^{-}\not\in {E}_v\cup Q} e^{\beta g(x)^\top g(x^-_{i})}  } \bigg) \nonumber \\
 &\quad \quad \quad - \log\bigg({  (n_5+1) e^{\beta(d-2)} 
 +\sum_{x_i^{-}\not\in {E}_v\cup Q} e^{\beta g(x)^\top g(x^-_{i})}  } \bigg)\bigg)\bigg] \nonumber \\
 &\leq  \mathbb{E}_{x,x^+, \{x^-_i\}_{\ell}} \bigg[ \chi \{x \in E_v,n_4>0,n_{3,v}=0, x^+ \in Q\} \bigg(\log\bigg(\frac{  (n_{5}+1)e^{\beta d }  }{  (n_5+1) e^{\beta(d-2)}  } \bigg)\bigg)\bigg] \label{fg} \\
  &\leq  2\beta  \mathbb{E}_{x,x^+, \{x^-_i\}_{\ell}} \big[ \chi \{x \in E_v,n_4>0,n_{3,v}=0, x^+ \in Q\}\big]  \nonumber 
\end{align}
% where \eqref{ew} holds by definition of $\Lambda$ and \eqref{fg} follows by the submodularity of the $\log$ and the definition of $\Lambda$. 
Similarly, for the second term in \eqref{ff}, 
% first define $\Lambda':= \{  n_{5}e^{\beta d }+ (n_{6}+1)e^{-\beta d } \geq  n_5 e^{\beta(d-2)} +(n_6+1) e^{-\beta(d-2)} \}$. Note that $\Lambda' \implies n_5 > 0$. 
we have
\begin{align}
   & \mathbb{E}_{x,x^+, \{x^-_i\}_{\ell}} \bigg[ \chi \{x \in E_v,n_4>0,n_{3,v}=0, x^+ \notin Q\}\nonumber \\
 &\quad \quad \bigg( \log\bigg({  n_{5}e^{\beta d }+\sum_{x_i^{-}\not\in {E}_v\cup Q} e^{\beta g(x)^\top g(x^-_{i})}  } \bigg) - \log\bigg({  n_5 e^{\beta(d-2)} 
 +\sum_{x_i^{-}\not\in {E}_v\cup Q} e^{\beta g(x)^\top g(x^-_{i})}  } \bigg)\bigg)\bigg] \nonumber \\
 % &\leq \mathbb{E}_{x,x^+, \{x^-_i\}_{\ell}} \bigg[ \chi \{x \in E_v,n_4>0,n_{3,v}=0, x^+ \notin Q,\Lambda'\}\nonumber \\
 % &\quad \quad \bigg( \log\bigg({  n_{5}e^{\beta d }+ (n_{6}+1)e^{-\beta d } +\sum_{x_i^{-}\not\in {E}_v\cup Q} e^{\beta g(x)^\top g(x^-_{i})}  } \bigg) \nonumber \\
 %    % &\quad \quad  - \mathbb{E}_{x,x^+, \{x^-_i\}_{\ell}}  \chi \{x \in E_v,n_4>0,n_{3,v}=0, x^+ \notin Q\} \nonumber \\
 %    &\quad \quad \quad - \log\bigg({  n_5 e^{\beta(d-2)} +(n_6+1) e^{-\beta(d-2)}
 % +\sum_{x_i^{-}\not\in {E}_v\cup Q} e^{\beta g(x)^\top g(x^-_{i})}  } \bigg)\bigg)\bigg] \nonumber \\
  &\leq \mathbb{E}_{x,x^+, \{x^-_i\}_{\ell}} \bigg[ \chi \{x \in E_v,n_5>0,n_{3,v}=0, x^+ \notin Q\}\log\bigg(\frac{n_5e^{\beta d}}{n_5 e^{\beta (d-2)}}\bigg)\bigg]\nonumber \\
 &= 2\beta  \mathbb{E}_{x,x^+, \{x^-_i\}_{\ell}} \big[ \chi \{x \in E_v,n_5>0,n_{3,v}=0, x^+ \notin Q \}\big]\nonumber % \\ %\\
 % &\quad \quad \bigg( \log\bigg({  n_{5}e^{\beta d }} \bigg) \nonumber \\
 %    % &\quad \quad  - \mathbb{E}_{x,x^+, \{x^-_i\}_{\ell}}  \chi \{x \in E_v,n_4>0,n_{3,v}=0, x^+ \notin Q\} \nonumber \\
 %    &\quad \quad \quad - \log\bigg({  n_5 e^{\beta(d-2)}  } \bigg)\bigg) \nonumber \\
  % &\leq 2\beta  \mathbb{E}_{x,x^+, \{x^-_i\}_{\ell}} \big[ \chi \{x \in E_v,n_4>0,n_{3,v}=0, x^+ \notin Q\} \big] \nonumber %\\
\end{align}
% where the last inequality follows since $\mathcal{C'}\implies n_5>0$.  
By summing the upper bounds on the two terms in \eqref{ff}, we obtain
\begin{align}
    ((5)) &\leq  2\beta  \mathbb{E}_{x,x^+, \{x^-_i\}_{\ell}} \big[ \chi \{x \in E_v,n_4>0,n_{3,v}=0, x^+ \in Q\}\big] \nonumber \\
    &\quad + 2\beta  \mathbb{E}_{x,x^+, \{x^-_i\}_{\ell}} \big[ \chi \{x \in E_v,n_5>0,n_{3,v}=0, x^+ \notin Q\} \big] \nonumber \\
    &\leq 2 \beta \mathbb{P}(x \in E_v)\mathbb{P}(n_4>0, n_{3,v}=0) \nonumber \\
    &= 2 \beta \|E_v\|_\circ (1 - \|E_v\|_\circ)^\ell \min\left(1, \frac{\ell \|Q\|_\circ}{1 - \|E_v\|_\circ}\right).
\end{align}

\item $x \in E_v, n_4>0, n_{3,v} > 0$.

In this case, we argue similarly as in Case 3 to obtain
\begin{align}
     ((6))&= \mathbb{E}_{x,x^+, \{x^-_i\}_{\ell}} \bigg[ \chi \{x \in E_v,n_4>0, n_{3,v}>0\} \nonumber \\
     &\quad \bigg( \log\bigg({ e^{\beta g'(x)^\top g'(x^+)}+ n_{3,v}e^{\beta d}+ \sum_{x_i^{-}\in Q} e^{\beta g'(x)^\top g'(x^-_{i})}  +\sum_{x_i^{-}\not\in {E}_v\cup Q} e^{\beta g(x)^\top g(x^-_{i})}  } \bigg) \nonumber \\
    &\quad \quad  - \log\bigg({ e^{\beta g(x)^\top g(x^+)}+n_{3,v}e^{\beta d}+ \sum_{x_i^{-} \in Q}e^{\beta g(x)^\top g(x^-_{i})} 
 +\sum_{x_i^{-}\not\in {E}_v\cup Q} e^{\beta g(x)^\top g(x^-_{i})}  } \bigg)\bigg)\bigg] \nonumber \\
 &\leq \mathbb{E}_{x,x^+, \{x^-_i\}_{\ell}}  \bigg[ \chi \{x \in E_v,n_4>0, n_{3,v}>0\}  \nonumber \\
 &\quad \bigg(\log\bigg({ e^{\beta g'(x)^\top g'(x^+)}+ n_{3,v}e^{\beta d}+ e^{2\beta }\sum_{x_i^{-}\in Q \setminus Z} e^{\beta g(x)^\top g(x^-_{i})}  +\sum_{x_i^{-}\not\in {E}_v\cup Q} e^{\beta g(x)^\top g(x^-_{i})}  } \bigg) \nonumber \\
    &\quad \quad  -  \log\bigg({ e^{\beta g(x)^\top g(x^+)}+n_{3,v}e^{\beta d}+ \sum_{x_i^{-} \in Q\setminus Z}e^{\beta g(x)^\top g(x^-_{i})} 
 +\sum_{x_i^{-}\not\in {E}_v\cup Q} e^{\beta g(x)^\top g(x^-_{i})}  } \bigg) \bigg)\bigg] \nonumber \\
 &\leq  \mathbb{E}_{x,x^+, \{x^-_i\}_{\ell}} \bigg[ \chi \{x \in E_v,n_4>0, n_{3,v}>0\} \nonumber \\
 &\quad \quad  \bigg(\log\bigg( e^{\beta g'(x)^\top g'(x^+)}+ (n_{3,v}+n_5)e^{\beta d}\bigg) \nonumber \\
 &\quad \quad \quad -\log\bigg({ e^{\beta g(x)^\top g(x^+)}+n_{3,v}e^{\beta d} + n_{5}e^{\beta (d-2)} } \bigg)\bigg)\bigg]\label{rr} %\nonumber 
 % \\
    % &\leq 
\end{align}
where \eqref{rr} follows by  the analogous argument as in \eqref{lebb}.
Next we intersect with  $\{x^+\in Q\}$ and $\{x^+\notin Q\}$. We have
\begin{align}
    ((6)) &\leq \mathbb{E}_{x,x^+, \{x^-_i\}_{\ell}} \bigg[ \chi \{x \in E_v,n_4>0, n_{3,v}>0\} \nonumber \\
 &\quad \quad  \bigg(\log\bigg( e^{\beta g'(x)^\top g'(x^+)}+ (n_{3,v}+n_5)e^{\beta d}\bigg) \nonumber \\
 &\quad \quad \quad -\log\bigg({ e^{\beta g(x)^\top g(x^+)}+n_{3,v}e^{\beta d} + n_{5}e^{\beta (d-2)} } \bigg)\bigg) \bigg]\nonumber \\
 &\leq \mathbb{E}_{x,x^+, \{x^-_i\}_{\ell}} \bigg[ \chi \{x \in E_v,n_4>0, n_{3,v}>0,x^+\in Q\}\log\left(\frac{n_{3,v}+n_5 + 1}{n_{3,v}}\right)\bigg] \nonumber \\
 % &\quad \quad  \bigg(\log\bigg({  (n_{3,v}+n_5 + 1)e^{\beta d}\bigg)  -\log\bigg({  (n_{5}+1)e^{\beta (d-2)} } }\bigg)\bigg) \nonumber \\
  &\quad +\mathbb{E}_{x,x^+, \{x^-_i\}_{\ell}} \bigg[ \chi \{x \in E_v,n_4>0, n_{3,v}>0,x^+\notin Q\} \log\left(\frac{n_{3,v}+n_5}{n_{3,v}}\right)\bigg]\label{tm}\\
 % &\quad \quad  \bigg(\log\bigg({  (n_{3,v}+n_5)e^{2\beta }\bigg)  -\log\bigg({  n_{5} } }\bigg)\bigg) \nonumber \\
 % &\leq 2\mathbb{E}_{x,x^+, \{x^-_i\}_{\ell}}  \chi \{x \in E_v,n_4>0, n_{3,v}>0,x^+\in Q\}\frac{n_{5}+ 1}{n_{3,v}+1} \nonumber \\
 %  &\quad +2\mathbb{E}_{x,x^+, \{x^-_i\}_{\ell}}  \chi \{x \in E_v,n_4>0, n_{3,v}>0,x^+\notin Q\} \frac{n_{5}}{n_{3,5}+1}\nonumber \\
 %  &= 2\mathbb{E}_{x,x^+, \{x^-_i\}_{\ell}}  \chi \{x \in E_v,n_4>0, n_{3,v}>0\}\frac{n_5}{n_{3,v}+1} \nonumber \\
 %  &\quad +2\mathbb{E}_{x,x^+, \{x^-_i\}_{\ell}}  \chi \{x \in E_v,n_4>0, n_{3,v}>0,x^+\in Q\} \frac{1}{n_{3,v}+1}
  &\leq \mathbb{E}_{x,x^+, \{x^-_i\}_{\ell}} \big[ \chi \{x \in E_v,n_{3,v}>0,x^+\in Q\}\log(n_5 + 2) \big] \nonumber \\
  &\quad +\mathbb{E}_{x,x^+, \{x^-_i\}_{\ell}} \big[ \chi \{x \in E_v, n_{3,v}>0,x^+\notin Q\} \log(n_5 + 1) \big] \nonumber \\
  &= \mathbb{P}(x \in E_v,x^+\in Q ) \mathbb{E}_{x,x^+, \{x^-_i\}_{\ell}} \big[ \chi \{ n_{3,v}>0\}\log(n_5 + 2) \big] \nonumber \\
  &\quad +\mathbb{P}(x \in E_v,x^+\notin Q )\mathbb{E}_{x,x^+, \{x^-_i\}_{\ell}} \big[ \chi \{ n_{3,v}>0\} \log({n_5} + 1) \big] \nonumber \\
  &\leq \mathbb{P}(x \in E_v,x^+\in Q )  \log(\mathbb{E} [n_5 + 2])  +\mathbb{P}(x \in E_v,x^+\notin Q )\log(\mathbb{E}[{n_5}+1])\label{gg} \\
  &\leq \mathbb{P}(x \in E_v,x^+\in Q )  \log(\ell \|Q\setminus Z\|_\circ + 2) +\|E_v\|_\circ \log(\ell \|Q\setminus Z\|_\circ+1) \nonumber 
  % \\
  % &\leq \mathbb{P}(x \in E_v,x^+\in Q )  \log(\ell \|Q\setminus Z \| + 2) +\|E_v\|\log(\ell \|Q\setminus Z \|+1) \nonumber 
  % \\
  %  &\leq 2\mathbb{E}_{x,x^+, \{x^-_i\}_{\ell}}  \chi \{x \in E_v, n_{3,v}>0\}\frac{n_{3,v}}{n_5+1} \nonumber \\
  % &\quad +2\mathbb{E}_{x,x^+, \{x^-_i\}_{\ell}}  \chi \{x \in E_v,n_{3,v}>0,x^+\in Q\} \frac{1}{n_5+1}\nonumber 
\end{align}
where \eqref{tm} follows since if $x\in E_v$, then $x^+\in Q \iff g'(x)^\top g'(x^+)=  g(x)^\top g(x^+) + 2$ and by the submodularity of $\log()$, and  \eqref{gg} follows by Jensen's Inequality and upper bounding $\chi\{n_{3,v}>0\}\leq 1$.

\end{enumerate}

Now we combine all six cases and sum over $v\in \mathcal{H}_d$.  We obtain
\begin{align}
&\mathcal{L}^-(g') -\mathcal{L}^-(g)\nonumber \\
&\le \sum_{v \in \mathcal{H}_d} \bigg(\mathbb{P}(x \in Q_v\setminus Z \cap x^+ \in E)\nonumber \\
&\quad\quad\quad\quad\quad\quad\left( 2 \beta (1 - \|Q_v\setminus Z\|_\circ - \|E\|_\circ)^\ell + \log(2)  \min\big(1, \tfrac{\ell \|Q_v\setminus Z\|_\circ}{1-\|E\|_\circ} \big) 
(1- \|E\|_\circ)^\ell\right)\nonumber\\
&\quad + 2 \beta   \|Q_v\setminus Z\|_\circ (1 - \|Q_v \setminus Z\|_\circ)^\ell \min\big(1, \tfrac{\ell \|E\|_\circ}{1- \|Q_v\setminus Z\|_\circ}\big) \nonumber \\
&\quad + 2 \|E\|_\circ +   \mathbb{P}  \left[x \in Q_v\setminus Z, x^+\in E\right] \nonumber \\
&\quad + \mathbb{P}(x \in E_v \cap x^+ \in Q)\big(2\beta (1 - \|E_v\|_\circ - \|Q\setminus Z\|_\circ)^\ell  \nonumber \\
&\quad \quad\quad\quad\quad \quad \quad\quad\quad \quad\quad\quad\quad \quad + \log(2)\min\left( 1, \tfrac{\ell \|E_v\|_\circ}{1 - \|Q\|_\circ}\right)  (1 - \|Q\setminus Z\|_\circ)^\ell\big)\nonumber \\
&\quad + 2 \beta \|E_v\|_\circ (1 - \|E_v\|_\circ)^\ell \min \left(1,\tfrac{\ell \|Q\|_\circ}{1 - \|E_v\|_\circ}\right)\nonumber \\
&\quad + \mathbb{P}(x \in E_v,x^+\in Q )  \log(\ell \|Q\setminus Z\|_\circ + 2) +\|E_v\|_\circ\log(\ell \|Q\setminus Z\|_\circ+1) \bigg) \\
&\leq  \|R\|\log(\ell + 2) + \|E\|_\circ (2^{d+1}+ \log(\ell +1 )  ) \nonumber \\
&\quad +  \sum_{v \in \mathcal{H}_d} \bigg(\mathbb{P}(x \in Q_v \cap x^+ \in E)\left( 2 \beta (1 - \|Q_v\|_\circ - \|E\|_\circ)^\ell + \log(2)  (1- \|E\|_\circ)^\ell\right)\nonumber\\
&\quad +  2 \beta \ell \|E\|_\circ \|Q_v\setminus Z\|_\circ (1 - \|Q_v\setminus Z\|_\circ)^{\ell-1}  \nonumber \\
&\quad + \mathbb{P}(x \in E_v \cap x^+ \in Q)\left(2\beta (1 - \|E_v\|_\circ - \|Q\|_\circ)^\ell  + \log(2) (1 - \|Q\|_\circ)^\ell\right)\nonumber \\
&\quad + 2 \beta \|E_v\|_\circ (1 - \|E_v\|_\circ)^\ell \bigg) \label{eee}\\
&\leq  \|R\|(\log(\ell + 2)+ \log(2) ) + \|E\|_\circ (2^{d+1}+ \log(\ell +1 ) ) \nonumber \\
&\quad + 2 \beta \sum_{v \in \mathcal{H}_d} \bigg(\mathbb{P}(x \in Q_v \cap x^+ \in E) (\|Q\|_\circ - \|Q_v\|_\circ)^\ell + \mathbb{P}(x \in E_v \cap x^+ \in Q)(\|E\|_\circ - \|E_v\|_\circ)^\ell \nonumber\\
&\quad +   \ell \|E\|_\circ \|Q_v\setminus Z\|_\circ (1 - \|Q_v\setminus Z\|_\circ)^{\ell-1} +  \|E_v\|_\circ (1 - \|E_v\|_\circ)^\ell \bigg) \nonumber \\
&\leq  \|R\|(\log(\ell + 2)+ \log(2) ) + \|E\|_\circ (2^{d+1}+ \log(\ell +1 ) ) + 2^{-(\ell-1)} \beta \|R\| \nonumber \\
&\quad + 2 \beta \sum_{v \in \mathcal{H}_d} \bigg(\mathbb{P}(x \in Q_v \cap x^+ \in E) (\|Q\|_\circ - \|Q_v\|_\circ)^\ell  \nonumber\\
&\quad +   \ell \|E\|_\circ \|Q_v\setminus Z\|_\circ (1 - \|Q_v\setminus Z\|_\circ)^{\ell-1} +  \|E_v\|_\circ (1 - \|E_v\|_\circ)^\ell \bigg) \label{eeee}
% &\leq  \|R\|(\log(\ell + 2)+ \log(2) ) + \|E\| (2^{d+1}+ \log(\ell +1 ) ) \nonumber \\
% &\quad + 2 \beta \sum_{v \in \mathcal{H}_d} \bigg( \|Q\|^{\ell+1}\frac{\mathbb{P}(x \in Q_v \cap x^+ \in B)}{\|Q\|} \left(1 - \frac{\mathbb{P}(x \in Q_v \cap x^+ \in B)}{\|Q\|}\right)^\ell\nonumber \\
% &\quad \quad + \|E\|^{\ell+1} \frac{\mathbb{P}(x \in E_v \cap x^+ \in Q)}{\|E\|}\left(1 - \frac{\mathbb{P}(x \in E_v \cap x^+ \in Q)}{\|E\|}\right)^\ell \nonumber\\
% &\quad \quad +   \ell \|E\| \|Q_v\|_\circ (1 - \|Q_v\|)^{\ell-1} +  \|E_v\|_\circ (1 - \|E_v\|)^\ell \bigg) \label{eeee}
% &\leq \|R\|(\log(\ell \|Q\setminus Z \| + 1)+1) + \|E\| (2^{d+1}+ \log(\ell \|Q\setminus Z \|)  ) \nonumber \\
% &\quad + 2 \beta \sum_{v \in \mathcal{H}_d} \bigg(\mathbb{P}(x \in Q_v \cap x^+ \in B) (1 - \|Q_v\| - \|E\|)^\ell+ \mathbb{P}(x \in E_v \cap x^+ \in Q) (1 - \|E_v\| - \|Q\|_\circ)^\ell \nonumber\\
% &\quad +  \|Q_v\|_\circ (1 - \|Q_v\|)^\ell \min\left(1, \frac{\ell \|E\|}{1- \|Q_v\|}\right) +  \|E_v\|_\circ (1 - \|E_v\|)^\ell \min(1,\ell \|Q\|/(1 - \|E_v\|))\bigg) \nonumber \\
% &\leq \|R\|(\log(\ell \|Q\setminus Z \| + 1)+1) + \|E\| (2^{d+1}+ \log(\ell \|Q\setminus Z \|)  ) \nonumber \\
% &\quad + 2 \beta \sum_{v \in \mathcal{H}_d} \bigg(\mathbb{P}(x \in Q_v \cap x^+ \in B) (1 - \|Q_v\| - \|E\|)^\ell+ \mathbb{P}(x \in E_v \cap x^+ \in Q) (1 - \|E_v\| - \|Q\|_\circ)^\ell \nonumber\\
% &\quad +  \ell {\|E\|}\|Q_v\|_\circ (1 - \|Q_v\|)^{\ell-1} +  \|E_v\|_\circ (1 - \|E_v\|)^\ell \bigg) \nonumber \\
\end{align}
where \eqref{eee} follows since $\min(x,y)\leq x$, $\sum_{v \in \mathcal{H}_d} \|E_v\|_\circ = \|E\|_\circ$, and  $\|R\| = \sum_{v\in \mathcal{H}_d} \mathbb{P}(x \in Q_v \cap x^+ \in E_v) +  \mathbb{P}(x \in E_v \cap x^+ \in Q) $, and \eqref{eeee} follows since $\|E\|_\circ\leq \frac{1}{2}$ by construction of $g'$ (since for $f_1'=f_1^{(\mathbf{c},\sigma)}$, $\sigma$ is chosen such that the induced $\|E\|_\circ$ cannot be larger than $\tfrac{1}{2}$).  It remains to bound the three terms in the sum in \eqref{eeee}. 

To do so, we first define $\Tilde{E}_v\coloneqq \{ x^+\in B: \tilde{\mathcal{A}}^{-1}(x^+) \in Q_v \}$ as the set of partial augmentation sets that are in $B$, corresponding to sets whose natural image is in $Q_v$ (where $\tilde{\mathcal{A}}^{-1}(x^+)$ is the natural image from which the augmented image $x^+$ was generated, i.e. $\tilde{\mathcal{A}}^{-1}(x^+)=x \iff \mathcal{A}(x)=x^+$ for some $\mathcal{A}\in\Lambda$). We have 
\begin{align}
    \sum_{v \in \mathcal{H}_d} \mathbb{P}(x \in Q_v \cap x^+ \in E) (\|Q\|_\circ - \|Q_v\|_\circ)^\ell &= \sum_{v \in \mathcal{H}_d} \|\Tilde{E}_v\| (\|Q\|_\circ - \|Q_v\|_\circ)^\ell \label{xx} \\
    &\leq  \sum_{v \in \mathcal{H}_d} \|\Tilde{E}_v\| (1 - \|Q_v\|_\circ)^\ell \nonumber \\
    &\leq  \sum_{v \in \mathcal{H}_d} \|\Tilde{E}_v\| \left(1 - {\|\Tilde{E}_v\|}\right)^\ell  \label{ba} %\\
    % &= \gamma  \sum_{v \in \mathcal{H}_d} \frac{\|\Tilde{E}_v\|}{\gamma} \left(1 - \frac{\|\Tilde{E}_v\|}{\gamma}\right)^\ell  %\nonumber \\
    % &= 
\end{align}
where \eqref{xx} and \eqref{ba}  follow since all augmentation sets are equal size.
% , and  follows since $\|\Tilde{E}_v\|\leq \gamma \|Q_v\|$  
% $\|\Tilde{E}_v\|\leq \gamma \|Q_v\|$ by Assumption \ref{assump-a} 
% (i.e. no classifier can separate a natural image from more than $\gamma$\% of its augmentations, where $\gamma<1$.... no, this is regularity condition on the size of the augmentation sets. If R is large but the connecting natural images are rare, this is an issue. just assume the augmentation sets are all equal size.). 

Note $\sum_v \|\Tilde{E}_v\|=\|R\|$, and $h(x_v)\coloneqq x_v(1-x_v)^\ell$ is maximized on $x_v\in[0,1]$ at $x_v=\frac{1}{\ell+1}$. 
% If $\frac{\|R\|}{2^d} \leq \frac{1}{\ell+1}$, then by the concavity of $h(x_v)\coloneqq x_v(1-x_v)^\ell$ in $x_v\in [0,1]$, the sum is maximized by setting  ${\|\Tilde{E}_v\|} = \frac{\|R\|}{2^d} $ for all $v$. 
Thus, $\sum_v \|\Tilde{E}_v\|\left(1 - {\|\Tilde{E}_v\|}\right)^\ell$ is upper bounded by setting ${\|\Tilde{E}_v\|} = \frac{1}{\ell+1} $ for all $v$.
% Recall that $\ell \geq \frac{c c_1 c_2}{\epsilon} d 2^d$ and $\beta \geq \frac{c}{\delta} \log\left(\frac{c_1 c_2}{\epsilon}\right)$.
% Next, we argue that $\|R\|\ll \frac{1}{\ell+1}$.
Thus we obtain
\begin{align}
    \sum_{v \in \mathcal{H}_d} \mathbb{P}(x \in Q_v \cap x^+ \in B) (\|Q\|_\circ - \|Q_v\|_\circ)^\ell 
    &\leq %\chi\{ {\|R\|} \leq \frac{2^d}{\ell+1} \} \sum_{v \in \mathcal{H}_d} \frac{\|R\|}{\  2^d} \left(1 -  \frac{\|R\| }{2^d} \right)^\ell  \nonumber \\
    % &\quad +\chi\{ {\|R\|} > \frac{ 2^d}{\ell+1} \} 
    \sum_{v \in \mathcal{H}_d} \frac{1}{\ell+1} \left(1 -  \frac{1}{\ell+1} \right)^\ell  \nonumber \\
    &= 
    % \chi\{ {\|R\|} \leq \frac{ 2^d}{\ell+1} \} {\|R\| } \left(1 -  \frac{\|R\| }{ 2^d} \right)^\ell  \nonumber \\
    % &\quad +\chi\{ {\|R\|} > \frac{ 2^d}{\ell+1} \}
    \frac{2^d}{\ell+1} \left(1 -  \frac{1}{\ell+1} \right)^\ell  \nonumber \\
    &\leq 
    % \chi\{ {\|R\|} \leq \frac{ 2^d}{\ell+1} \} \|R\| e^{-\frac{\|R\|\ell }{ 2^d}}  \nonumber \\
    % &\quad +
    % \chi\{ {\|R\|} > \frac{2^d}{\ell+1} \}
   \frac{\epsilon}{c  d} e^{-\frac{\ell}{\ell+1}} \nonumber 
\end{align}
where we have used $\ell \geq \frac{c}{\epsilon} d 2^d$ for a constant $c$ in the last line.
Next we consider $\sum_{v \in \mathcal{H}_d}\|E_v\|_\circ (1 - \|E_v\|_\circ)^\ell$, and use a tighter method of bounding this sum than above. Note that $\sum_{v \in \mathcal{H}_d} \|E_v\|_\circ = \|E\|_\circ$. 
If $\frac{\|E\|_\circ}{2^d} \leq \frac{1}{\ell+1}$, then by the concavity of $h(x_v)\coloneqq x_v(1-x_v)^\ell$ on the interval $x_v\in [0,\frac{1}{\ell+1}]$, the sum is maximized by setting
 $\|E_v\|_\circ = \frac{\|E\|_\circ}{2^d}$ for all $v$. Otherwise,  the sum is upper bounded  by setting $\|E_v\|_\circ = \frac{1}{\ell+1}$ for all $v$.  
 % Note that $\sum_{v \in \mathcal{H}_d} \|E_v\|_\circ = \|E\|_\circ$.  
 Thus we have 
\begin{align}
    \sum_{v \in \mathcal{H}_d}\|E_v\|_\circ (1 - \|E_v\|_\circ)^\ell &\leq \chi\left\{ \|E\|_\circ\leq \frac{2^d}{\ell+1}\right\}  {\|E\|_\circ}\left(1 - \frac{\|E\|_\circ}{2^d}\right)^\ell\nonumber \\
    &\quad + \chi\left\{ \|E\|_\circ> \frac{2^d}{\ell+1}\right\} \frac{2^d}{\ell+1} \left(1 - \frac{1}{\ell+1}\right)^\ell\nonumber \\
    &\leq \chi\left\{ \|E\|_\circ\leq \frac{2^d}{\ell+1}\right\}  {\|E\|_\circ}e^{-\frac{\|E\|_\circ\ell}{2^d}}\nonumber \\
    &\quad + \chi\left\{ \|E\|_\circ> \frac{2^d}{\ell+1}\right\} \|E\|_\circ e^{-\frac{\ell}{\ell+1}}\label{eq:bb}
\end{align}
% However, note that $\|E\|_\circ \leq \frac{1}{\delta}\|R\|$, and 
Finally, note that $ \mathcal{D}_g(v):= \mathbb{P}_{x\sim D_\circ}[g(x)=v] = \|Q_v\|_\circ + \|E_v\|_\circ$. We have that for all $v$, $ Q_v \setminus Z \neq \emptyset \implies Q_v \setminus Z = Q_v, E_v=\emptyset$ by Claims \ref{clm:4} and \ref{clm:6}.
Thus for all $v: Q_v \setminus Z \neq \emptyset$, $\mathcal{D}_g(v) = \|Q_v\|_\circ = \|Q_v\setminus Z\|_\circ$. This allows us to use that $g$ is near uniform, i.e. $\mathcal{D}_g(v) > \frac{1}{ c_1 d 2^d}$ or $\mathcal{D}_g(v) \leq \frac{ \epsilon}{ c_2  d 2^{2d}}$ for all $v\in \mathcal{H}_d$ for some constants $c_1,c_2$. We have $\ell \geq \frac{c }{\epsilon} d 2^d$ and choose $c_1<c$, such that $ \frac{1}{ c_1 d 2^d}>\frac{1}{\ell+1}$. Since $h(x_v)\coloneqq x_v(1-x_v)^\ell$ is a decreasing function of $x_v$ for $x_v\geq \frac{1}{\ell+1}$,  we can bound the last sum in \eqref{eeee} as  %$\|Q_v\|$,   
\begin{align}
     \ell \sum_{v \in \mathcal{H}_d}\|Q_v\setminus Z\|_\circ (1 - \|Q_v\setminus Z\|_\circ)^{\ell-1} &\leq \ell 2^d \max\left( \frac{ \epsilon}{ c_2 d 2^{2d}} , \frac{1}{c_1 d 2^d}(1-\frac{1}{c_1 d 2^d})^\ell \right) \nonumber \\
     &\leq \ell \max\left( \frac{\epsilon}{ c_2 d 2^{d}} , \frac{1}{c_1 d} e^{-\frac{\ell }{c_1 d 2^d}}\right) \nonumber \\
     &\leq  \max\left( {\frac{c}{c_2}  }, {\frac{c}{c_1\epsilon}  } e^{-(\frac{c  }{c_1\epsilon}-1)}\right)  \label{bnb}
     % &\leq  \ell \max\left( \frac{c_2}{  d 2^{d}} , {c_1}e^{-C c_1 d/\epsilon} \right) \nonumber \\
     % &\leq  \max\left( C c_2, \frac{C{c_1} d}{\epsilon} e^{-(C c_1-1) d} \right). \label{eq:unif}
\end{align}
Before we combining these bounds with \eqref{eeee}, we first show that  $\|R\|= \Omega( \epsilon)$.
\begin{claim} \label{clm:7}
    Let $\epsilon \leq  \min_{j\in [d]} \min_{\mathbf{c}' \in \Sigma_{f_j}} \mathbb{P}_{x,x'\sim D}[x, x' \in \Gamma_{\mathbf{c}'},f_j(x)\neq f_j(x')]$ as defined in the statement of Theorem \ref{thm:agnostic}. Then $\epsilon \leq \frac{6}{\delta } \|R\|$.
\end{claim}
\begin{proof}
From our choice of $f_1'$, we have
\begin{align}
   \epsilon &\leq  \min_{j\in [d]} \min_{\mathbf{c}' \in \Sigma_{f_j}} \mathbb{P}_{x,x'\sim D}[x, x' \in \Gamma_{\mathbf{c}'},f_j(x)\neq f_j(x')]\nonumber \\
   &\leq 2\mathbb{P}_{x,x'\sim D}[x\in  \Gamma_{\mathbf{c}}\setminus E, x' \in E]\nonumber \\
   &= 2\mathbb{P}_{x\sim D}[x\in  \Gamma_{\mathbf{c}}\setminus E]  \mathbb{P}_{x\sim D}[x \in E]\nonumber \\
   &\leq 2\mathbb{P}_{x\sim D}[x \in E]\nonumber \\
   &\leq 2(\mathbb{P}_{x\sim D_\circ}[x \in E]+\mathbb{P}_{x^+\sim D\setminus D_\circ}[x^+ \in E]) \nonumber \\
   &= 2 (\|E\|_\circ + \|E\|). \label{jgj} % \nonumber \\
   % &\leq \|E\|.
\end{align}
Note that $\|E\|_\circ\leq \frac{1}{\delta} \|R\|$ by Assumption \ref{assump-b}.
Observe that
\begin{align}
    \|E\| &= \mathbb{P}_{x^+\sim D\setminus D_{\circ}}[ x^+ \in E ] \nonumber \\
    &= \mathbb{P}_{x^+\sim D\setminus D_{\circ}}[ x^+ \in E, \mathcal{A}^{-1}(x^+) \in E  ] + \mathbb{P}_{x^+\sim D\setminus D_{\circ}}[ x^+ \in E, \mathcal{A}^{-1}(x^+) \notin E  ]  \nonumber \\
    &\leq \mathbb{P}_{x\sim D_{\circ}, x^+\sim A(x)}[  x \in E, x^+ \in E  ] + \|R\| \nonumber \\
    &\leq \|E\|_\circ + \|R\|\label{dvd}
\end{align}
Note that $\|E\|_\circ\leq \frac{1}{\delta} \|R\|$ by Assumption \ref{assump-b}. Combining this with \eqref{jgj} and \eqref{dvd} yields
\begin{align}
      \epsilon &\leq 2 (2\|E\|_\circ + \|R\|) \leq \frac{6}{\delta } \|R\| \nonumber
\end{align}
\end{proof}

Finally, using Claim \ref{clm:7} with $\beta \geq \frac{c'}{\delta} \log\left(\frac{c}{\epsilon}\right)2^d$,
we obtain from \eqref{eeee} that
\begin{align}
    &\mathcal{L}^-(g') -\mathcal{L}^-(g)\nonumber \\
    &\leq \|R\|(\log(\ell + 2)+ \log(2) ) + \frac{\|R\|}{\delta} (2^{d+1}+ \log(\ell +1 ) ) + 2^{-(\ell-1)} \beta \|R\| \nonumber \\
    &\quad  + \frac{2 \beta \epsilon}{c d} e^{-\frac{\ell}{\ell+1}}  +  2 \beta  \max\left( {\frac{c}{c_2}  }, {\frac{c}{c_1} } e^{-(\frac{c  }{c_1\epsilon}-1)}\right) \nonumber \\
    % \frac{2\beta}{\delta} \|R\|\max\left( C c_2, C{c_1} d e^{-(C c_1-1) d} \right)   \nonumber \\
    % + 2\beta \chi\left\{ {\|R\|} \leq \frac{ 2^d}{\ell+1} \right\} \|R\| e^{-\frac{\|R\|\ell }{2^d}}   +2 \beta \chi\left\{ {\|R\|} > \frac{ 2^d}{\ell+1} \right\}\|R\| e^{-\frac{\ell}{\ell+1}} \nonumber \\
    &\quad  + 2 \beta \chi\left\{ \|E\|_\circ\leq \frac{2^d}{\ell+1}\right\}  {\|E\|_\circ}e^{-\frac{\|E\|_\circ\ell}{2^d}} + 2 \beta \chi\left\{ \|E\|_\circ> \frac{2^d}{\ell+1}\right\} \|E\|_\circ e^{-\frac{\ell}{\ell+1}}\nonumber \\
    &\leq \beta \|R\| \bigg(\frac{\log(2c d2^{d}/\epsilon + 4) + \frac{1}{\delta} \log(c  d2^d/\epsilon + 1)}{c' \log(\frac{c}{\epsilon}) 2^d/\delta} +\frac{2 }{c' \log(\frac{c}{\epsilon}) 2^d} + \frac{2}{2^{c d 2^d/\epsilon}} \nonumber \\
    &\quad \quad \quad \quad + \frac{12}{c d \delta} e^{-\frac{\ell}{\ell+1}} + 2 \max\left( {\frac{c}{c_2}  }, {\frac{c}{c_1} } e^{-(\frac{c  }{c_1\epsilon}-1)}\right) \bigg) \nonumber \\
    &\quad + 2 \beta \|E\|_\circ \left(\chi\left\{ \|E\|_\circ\leq \frac{\epsilon}{cd}\right\}  e^{-\frac{\|E\|_\circ\ell}{2^d}} + \chi\left\{ \|E\|_\circ> \frac{2^d}{\ell+1}\right\}  e^{-\frac{\ell}{\ell+1}} \right)\nonumber \\
    &\leq \beta \|R\| \bigg(\frac{\log(4c /\epsilon + 4) + \frac{1}{\delta} \log(2c  /\epsilon + 1)}{2c' \log(\frac{c}{\epsilon}) /\delta} +\frac{1}{c' \log(\frac{c}{\epsilon}) } + \frac{1}{2^{c/\epsilon}} \nonumber \\
    &\quad \quad \quad \quad + \frac{12}{c d \delta} e^{-\frac{\ell}{\ell+1}} + 2 \max\left( {\frac{c}{c_2}  }, {\frac{c}{c_1}  } e^{-(\frac{c }{c_1\epsilon}-1)}\right) \bigg) \nonumber \\
    &\quad +  \beta \|R\| \left( \frac{12}{cd \delta} \chi\left\{ \|E\|_\circ\leq \frac{6 \|R\|}{cd\delta}\right\}   +  \frac{1}{\delta} \chi\left\{ \|E\|_\circ> \frac{2^d}{\ell+1}\right\}  e^{-\frac{\ell}{\ell+1}} \right)\nonumber \\
    &\leq  2\beta  \|R\|\left(\frac{2c}{c_2}+ \frac{1}{100}\right) + 2\beta \|R\| \frac{1.01}{\delta e} 
    % \max\left(\frac{25}{cd},\frac{1.01}{e}\right)  
    \label{61}  \\
    % &\leq   \beta \|R\| \left(\frac{\log(2Cd2^{d}/\epsilon + 4) + \log(Cd2^d/\epsilon + 1)}{c_3 \delta 2^d} +\frac{2 }{c_3\delta} + \frac{2}{2^{C d 2^d/\epsilon}}  + \frac{2}{\delta} \max\left( {C c_2}/\epsilon, \frac{C{c_1}}{\epsilon} d e^{-(C c_1/\epsilon-1) d} \right)  \right) \nonumber \\
    % &\quad + 2\beta \|R\| e^{-\frac{\ell}{\ell+1}}\nonumber \\
    % &\quad + 2 \beta \frac{\|R\|}{Cd}\chi\left\{ \|E\|_\circ\leq \frac{\epsilon}{Cd+\epsilon/2^d}\right\}  e^{-\frac{\|E\|_\circ\ell}{2^d}} + \frac{2\|R\|}{\delta} \chi\left\{ \|E\|_\circ> \frac{2^d}{\ell+1}\right\}  e^{-\frac{\ell}{\ell+1}} \nonumber \\
    % &\leq  \beta \|R\| \left(\frac{\log(2Cd2^{d}/\epsilon + 4) + \frac{1}{\delta} \log(Cd2^d/\epsilon + 1)}{c_3 2^d} +\frac{2}{c_3\delta} + \frac{2}{2^{C d 2^d/\epsilon}}  + \frac{2}{\delta} \max\left( {C c_2}/\epsilon, \frac{C{c_1}}{\epsilon} d e^{-(C c_1/\epsilon-1) d} \right)  \right) \nonumber \\
    % &\quad + 2\beta \|R\| \frac{1.01}{e}\left(1+\frac{1}{\delta}\right)\nonumber \\
    &< 2 \beta \|R\| = \mathcal{L}^+(g) - \mathcal{L}^+(g')
\end{align}
where \eqref{61} follows for a sufficiently large constants $c'$ and $c>c_1$. Use $\delta\geq 0.4$ and set $c'=c$, $c_1=\frac{c}{10}$, $c_2= 100c$ and make $c$ sufficiently large to obtain
\begin{align}
    &\mathcal{L}^-(g') -\mathcal{L}^-(g)< 2 \beta \|R\| = \mathcal{L}^+(g) - \mathcal{L}^+(g'). \nonumber
\end{align}
\end{proof}

\subsection{Remark: Modified version of Theorem \ref{thm:agnostic}} \label{app:weighted}
{
Theorem \ref{thm:agnostic} shows that if a minimizer of the InfoNCE loss is close to uniform, then it must be clean. Here we show that the InfoNCE loss can be interpreted as the Lagrangian which, under appropriate choice of hyperparameters, we can formally show to be minimized only by clean representations.

\textbf{Weighted InfoNCE loss. } 
% Minimizing the InfoNCE loss in \eqref{eq:InfoNCE} can also be viewed as minimizing the Lagrangian of the following constrained minimization problem, where we try 
Consider the following constrained optimization problem that tries to maximize uniformity while preserving alignment:
\begin{align}\label{eq:constrained}
    \min_{g \in \mathcal{G}} \; &\mathbb{E}_{x,x^+, \{x^-_i\}_{\ell}} \bigg[\log\bigg({ e^{\beta g(x)^\top g(x^+)}\!+\!\sum_{i=1}^\ell e^{\beta g(x)^\top g(x^-_{i})}  } \bigg)\bigg] \nonumber \\
    &\text{s.t.}  \quad \mathbb{E}_{x,x^+}[ g(x)^\top g(x^+)] = d  %\quad \forall x, x^+ 
\end{align}
The unconstrained penalized version of this problem is 
\begin{align}%_{\text{SimCLR}}
    &    \min_{g \in \mathcal{G}} -\lambda \underset{x,x^+}{\mathbb{E}} \left[g(x)^\top g(x^+) \right] \;  + \; \underset{x,x^+, \{x^-_i\}_{\ell}}{\mathbb{E}} \left[\log\bigg({ e^{\beta g(x)^\top g(x^+)}\!+\!\sum_{i=1}^\ell e^{\beta g(x)^\top g(x^-_{i})}  } \bigg)\right]
\end{align}
Note that the above objective with penalty coefficient $\lambda=\beta$ is equal to the InfoNCE objective. This formulation motivates alternate choices of $\lambda$ depending on how strictly we would like to enforce alignment. We refer to the loss 
\begin{align}\label{eq:weightedInfoNCE} %_{\text{SimCLR}}
    &\uL_\rho({g}) = -\rho\beta \underset{x,x^+}{\mathbb{E}} \left[g(x)^\top g(x^+) \right] \;  + \; \underset{x,x^+, \{x^-_i\}_{\ell}}{\mathbb{E}} \left[\log\bigg({ e^{\beta g(x)^\top g(x^+)}\!+\!\sum_{i=1}^\ell e^{\beta g(x)^\top g(x^-_{i})}  } \bigg)\right]
\end{align}
as the Weighted InfoNCE loss which is equivalent to the  penalized version of \eqref{eq:constrained} with $\lambda = \rho\beta$. We note that this loss is similar to the generalized InfoNCE loss proposed by \cite{chen2021intriguing}.
% who further observed empirically that up-weighting the positive loss (i.e. setting $\rho > 1$) can improve downstream performance.
}

\begin{corollary} Consider the same setting as Theorem \ref{thm:agnostic} but with any $\delta >0$ and with the Weighted InfoNCE loss with $\rho \geq  {2^{d+1}} + \frac{1}{\delta}$. Then for sufficiently large $\ell$ and $\beta$, all solutions of the Weighted InfoNCE objective are cluster-preserving.
\end{corollary}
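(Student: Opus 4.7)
The plan is to mimic the proof of Theorem \ref{thm:agnostic}, taking advantage of the extra factor of $\rho$ that now multiplies the alignment-side improvement. Given a non-cluster-preserving $g = (f_1, \dots, f_d)$, I would construct the same perturbation $g' = (f_1^{(\mathbf{c},\sigma)}, f_2, \dots, f_d)$ used there, together with the partition $\{Q_v, E_v\}_{v\in\mathcal{H}_d}$ of $D$ and the subset $Z$ of $Q$. The positive-side calculation is unchanged except for the rescaling: $\mathcal{L}_\rho^+(g) - \mathcal{L}_\rho^+(g') = 2\rho\beta\|R\|$, providing an extra $\rho$-factor of budget to offset the negative-side increase.

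The essential difference on the negative side is that we can no longer invoke the lower bound $\mathcal{D}_g(v) \geq \tilde{\Omega}(1/(d 2^d))$ on active-vertex masses. The bounds from Cases 1, 3, 4, and 6 in the proof of Theorem \ref{thm:agnostic} do not use near-uniformity and carry over verbatim. The main obstacle is re-bounding Cases 2 and 5, in particular the sum $\ell\|E\|_\circ \sum_v \|Q_v \setminus Z\|_\circ (1 - \|Q_v \setminus Z\|_\circ)^{\ell-1}$ that appears in Case 2. For this I would use the worst-case inequality $\ell\, x(1-x)^{\ell-1} \le 1$ together with the fact that at most $2^d$ vertices can contribute, yielding a bound of $2^d \|E\|_\circ$, which by Assumption \ref{assump-b} is at most $(2^d/\delta)\|R\|$; the Case 5 sum is handled by an identical argument.

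Aggregating the contributions over all six cases and over $v\in\mathcal{H}_d$ yields a bound of the schematic form
\begin{align*}
\mathcal{L}^-(g') - \mathcal{L}^-(g) \;\leq\; \frac{2^{d+1}}{\delta}\|R\| \;+\; \frac{2\cdot 2^d\,\beta}{\delta}\|R\| \;+\; \frac{c\,\beta}{\delta}\|R\| \;+\; o(\beta\|R\|),
\end{align*}
where the $2^{d+1}/\delta$ term is the $\beta$-free Case 3 contribution $\|E\|_\circ\cdot 2^{d+1}$, the $2^{d+1}\beta/\delta$ term comes from the reworked Cases 2 and 5, the $c\beta/\delta$ term gathers Cases 4 and 6 (with $c$ an absolute constant), and $o(\beta\|R\|)$ collects $\log\ell$, $e^{-\ell/(\ell+1)}$, and $2^{-\ell}$ remainders that vanish as $\ell,\beta\to\infty$. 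Demanding $2\rho\beta\|R\|$ strictly dominate this right-hand side -- with $\beta$ taken large enough to make the $\beta$-free piece lower-order -- reduces to $2\rho > 2\cdot 2^d/\delta + c/\delta + o(1)$, which the hypothesis $\rho \geq 2^{d+1} + 1/\delta$ comfortably satisfies (the $2^{d+1}$ summand absorbing the worst-case Cases 2/5 coefficient and the $1/\delta$ summand the Cases 4/6 contributions). Hence $\mathcal{L}_\rho(g') < \mathcal{L}_\rho(g)$ whenever $g$ fails to be cluster-preserving, so every minimizer of $\mathcal{L}_\rho$ must be cluster-preserving.
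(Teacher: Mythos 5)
Your overall route is the paper's: you keep the construction $g'$, the partition $\{Q_v,E_v\}_{v\in\mathcal{H}_d}$ and the set $Z$, rescale the alignment gain to $2\rho\beta\|R\|$, and observe that near-uniformity enters the proof of Theorem \ref{thm:agnostic} only through the bound \eqref{bnb} on $\ell\sum_{v}\|Q_v\setminus Z\|_\circ(1-\|Q_v\setminus Z\|_\circ)^{\ell-1}$, which you replace by a worst-case bound of order $2^d$ (the paper evaluates $x(1-x)^\ell$ at its maximizer $x=\tfrac{1}{\ell+1}$ to get $2^d/e$; your inequality $\ell x(1-x)^{\ell-1}\le 1$ loses only the factor $e$). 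Up to that point the proposal coincides with the paper's argument.

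The gap is in your final accounting. After multiplying the $2^d$ bound by $\|E\|_\circ$ and invoking Assumption \ref{assump-b} to write $\|E\|_\circ\le\|R\|/\delta$, your dominant new term is $2^{d+1}\beta\|R\|/\delta$, so the condition you actually derive is $2\rho > 2\cdot 2^d/\delta + c/\delta + o(1)$, i.e.\ $\rho\gtrsim 2^d/\delta$. The hypothesis $\rho\ge 2^{d+1}+1/\delta$ does not imply this: for any $\delta<1$ one has $2^{d+1}/\delta>2^{d+1}$, and the shortfall grows without bound as $\delta\to 0$ --- which is exactly the regime the corollary is meant to cover, since its whole point is to drop the $\delta\ge 0.4$ requirement and allow any $\delta>0$. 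So, as written, your proof only establishes the conclusion under the stronger condition $\rho=\Omega(2^d/\delta)$, not at the stated threshold. The paper's concluding display instead keeps the $2^d$ coefficient against $\|R\|$ (not $\|R\|/\delta$), confining the $1/\delta$ to the separate $\sum_v\|E_v\|_\circ(1-\|E_v\|_\circ)^\ell$ contribution of order $\|R\|/(\delta e)$, and that is what makes $\rho\ge 2^{d+1}+1/\delta$ suffice there; whether that $\delta$-free bookkeeping of the Case-2/5 term is fully spelled out is a fair question to raise, but to close your own argument you must either justify bounding that contribution by $O(2^d\beta\|R\|)$ rather than $O(2^d\beta\|R\|/\delta)$, or correspondingly strengthen the assumed lower bound on $\rho$.
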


\begin{proof}
    The result follows from the analysis in the proof of Theorem \ref{thm:agnostic}. The analysis for the difference in positive terms is identical except that they are scaled by $\rho$, so we have
    \begin{align}
        \mathcal{L}^+(g) - \mathcal{L}^+(g') = 2 \rho \beta \|R\|
    \end{align}
    For the difference in negative terms, the analysis is again identical except that we can no longer use that $g$ is close to uniform. The only place we have used this is to bound $\ell \sum_{v\in \mathcal{H}_d} \|Q_v \setminus Z \|_\circ (1 - \|Q_v \setminus Z \|_\circ)^\ell$  in \eqref{bnb}. Here, we bound this term using the fact that $h(x)\coloneqq x(1-x)^\ell$ is maximized on the interval $x\in [0,1]$ at $x = \frac{1}{\ell+1}$.
    \begin{align}
        \ell \sum_{v\in \mathcal{H}_d} \|Q_v \setminus Z \|_\circ (1 - \|Q_v \setminus Z \|_\circ)^\ell &\leq \ell \sum_{v\in \mathcal{H}_d} \frac{1}{\ell+1} \left(1 - \frac{1}{\ell+1}\right)^\ell \nonumber \\
        &=  2^d \left(\frac{\ell}{\ell+1}\right)^{\ell+1} \nonumber \\
        &\leq \frac{2^d}{e}
    \end{align}
    Replacing this bound and executing the same analysis as in \eqref{61} yields
    \begin{align}
    &\mathcal{L}^-(g') - \mathcal{L}^-(g)\nonumber \\
    &\leq 2\beta  \|R\|\left({2^{d}}+ \frac{1}{100}\right) + 2\beta \|R\| \frac{1.01}{\delta e} \nonumber \\
    &< 2 \rho\beta \|R\|
    \end{align}
    completing the proof.
\end{proof}

% \section{Proof of Theorem \ref{thm:clean}}

\section{Proofs of Downstream Guarantees}

% \section{Proof of Theorem \ref{thm:negative}}

% \subsection{Proof of Theorem \ref{thm:downstream}}
% \begin{proof}
% % Let $x\in S$. 
% By Theorems \ref{thm:clean} and \ref{thm:uniform}, $g$ is a bijection from clusters to vertices of $\mathcal{H}_d$.
% Pick any $x \in S_1$ and denote $v_{g,S_1} \coloneqq g(x)$.  

% \end{proof}

\subsection{Proof of Theorem \ref{thm:downstream1}}
\begin{proof}
    From Theorem \ref{thm:uniformandclean}, $g^*$ maps each cluster to a unique vertex on the $d$-dimensional hypercube. 
    % Note that ${U}g(x)$ has value $d$ in exactly one element, corresponding to the index of the cluster to which $x$ belongs to, and all other elements are integers strictly less than $d$. Thus $\tilde{g}(x)$ is a one-hot encoding of the cluster containing $x$. 
    % If $N \leq m/2$, set $\Tilde{N}=N$. 
    Let  $\psi_f \coloneqq \{\Gamma_{\mathbf{c}}: f(x)=1\; \forall x \in \Gamma_{\mathbf{c}}\}$ be the set of  clusters which $f$ labels 1, and let $N = |\psi_f|$. Similarly let $\psi_f^c \coloneqq \{\Gamma_{\mathbf{c}}: f(x)=-1\; \forall x \in \Gamma_{\mathbf{c}}\}$. 
    For all $j\in [N]$, let the $j$-th row of $\mathbf{W} \in \mathbb{R}^{m \times d}$ equal the vertex corresponding to the mapping of the $j$-th cluster in $\psi_f$ by $g^*$. For all $j \in [m-N]$, let the $j+N$-th row of $\mathbf{W}$ equal the vertex corresponding to the mapping of the $j$-th cluster in $\psi_f^c$ by $g^*$
    Then for any $x$ such that $f(x)=1$, $\mathbf{W} g^*(x)$ has exactly one element with value $d$ among the first $N$ elements, and all other elements are at most $d-2$. On the other hand, for any $x$ such that $f(x)=-1$, $\mathbf{W} g^*(x)$ has exactly one element with value $d$ among the last $m-N$ elements, and all other elements are at most $d-2$. Set $b = (d-2)\times 1_{m}$, that is, $d-2$ times the $\Tilde{N}$-dimensional vector of ones, and $a= [1_{{N}}^\top, -1_{m-{N}}^\top ]^\top$, that is, the $m$-dimensional  vector whose first $m$ elements are 1  and whose last $m-N$ elements are $-1$. Then $a^\top \text{ReLU}(\mathbf{W} g(x) - b) = f(x)$ for all $x$.
\end{proof}

 % Otherwise (if $N > m/2$) set $\Tilde{N}=m-N$, and let $\psi_f^c \coloneqq \{\Gamma_{\mathbf{c}}: f(x)=-1\; \forall x \in \Gamma_{\mathbf{c}}\}$. Analogously, let the $j$-th row of $\mathbb{W}$ equal the vertex corresponding the mapping of the $j$-th cluster in $\psi_f^c$ by $g^*$. Then for any $x$ such that $f(x)=-1$, $\mathbf{W} g^*(x)$ has exactly one element with value $d$, and all other elements are at most $d-2$. For any $x$ such that $f(x)=1$, all elements of  $\mathbf{W} g^*(x)$ are at most $d-2$. Set $b_1 = (d-2)\times 1_{\Tilde{N}}$ and $a= -1_{\Tilde{N}}$ and $b_2 = -1$. Then $a^\top \text{ReLU}(\mathbf{W} g(x) - b_1) - b_2 = f(x)$ for all $x$.
    % For any downstream task $f \in \mathcal{T}_N$, let $\mathcal{S}_c$ denote the set of clusters that belong to class  $c$. For all $c\in [r]$, construct the $c$-th row of $W$ as
    % \begin{align}
    %     W_{c,i} = \begin{cases} 1 & \text{if $i\in \mathcal{S}_c$ } \\
    %     0 & \text{otherwise} \\
    %     \end{cases}
    % \end{align}
    % where $W_{c,i}$ is the $i$-th element of the $c$-th row of $W_{c,i}$. Since $\tilde{g}(x)$ one-hot encodes the cluster containing $x$, $(W\tilde{g}(x))_c=1$ iff $x$ is in class $c$, otherwise $(W\tilde{g}(x))_c=0$. 

\subsection{Proof of Theorem \ref{thm:negative}}
\begin{proof}
Since we are in the realizable setting, ${D}$ has $m:=2^d$,  equal-size clusters, where $d>3$. 
% Partition these clusters into two sets $\Phi_1$ and $\Phi_2$, each consisting of 8 clusters. 
Moreover, since $\mathcal{F}_\star$ is arbitrarily powerful and the augmentation sets are disjoint, 
% the ``pseudo-clusters'' it induces are equivalent to augmentation sets, 
for every pair of augmentation sets $(A(x),A(\bar{x}))$, there exists an $f\in \mathcal{F}_\star$ such that $f(x^+)\neq f(\bar{x}^+) $ for all $x^+\in A(x)$ and $\bar{x}\in A(\bar{x})$, and $f$ does not intersect any other augmentation set. Further, $\mathcal{G}_\star$ can map augmentation sets to arbitrarily different vertices, even if these sets lie in the same cluster. In other words, there are clean representations in $\mathcal{G}_\star$ (meaning they are faithful to all augmentation sets) that split clusters by augmentation sets. 

Suppose the number of augmentation sets in each cluster is  $k \times m\times 2^d$ for some $k\in \mathbb{N}^+$, and all augmentation sets are of equal size $M$. Then there
exists a clean and uniform representation $g \in \mathcal{G}_\star$ such that for each vertex $v \in \mathcal{H}_d$, $kM$ of the images in the set $\{ x \in {D}_{\circ}: g(x) = v \}$ are in each cluster. In other words, for all $x  \in {D}_\circ$ and $x^+\sim A(x)$, $g(x) = g(x^+)$. Thus, 
we can apply  Theorem \ref{thm:uniformandclean} to obtain $g \in \arg\min_{g'\in\mathcal{G}_\star}\mathcal{L}(g')$ (note that in Theorem \ref{thm:uniformandclean}, cluster-preserving is equivalent to clean since we are optimizing over the restricted class $\mathcal{G}$, and the same proof can be applied exactly as is, with the word ``cluster-preserving'' replaced by ``clean'', to show that $g\in \argmin_{g'\in\mathcal{G}_\star}\mathcal{L}(g')$ if and only if $g$ is clean and uniform).

Since any head $\omega\in \mathcal{J}$ composed with $g$ must yield the same prediction for all images mapped to the same vertex on $\mathcal{H}_d$, and all vertices have the same number of images from each cluster mapped to them, the number of images with predicted label $1$ must be the same for all clusters,
 and likewise for $-1$. Thus, for any downstream binary classification task $h$ that  satisfies $h(x)=h(x')$ for all $x,x'\in \Gamma_{\mathbf{c}}$ for all $\mathbf{c}\in C$ and $\mathbb{P}_{x \sim D_\circ}[h(x)=1]=0.5$, any $\omega$ must have $\mathcal{L}_f(\omega \circ g)\geq 0.5$.
\end{proof}

\end{document}